\def\eqref#1{equation~\ref{#1}}
\def\1{\bm{1}}
\def\vc{{\bm{c}}}
\def\ve{{\bm{e}}}
\def\vf{{\bm{f}}}
\def\vh{{\bm{h}}}
\def\vp{{\bm{p}}}
\def\vs{{\bm{s}}}
\def\vv{{\bm{v}}}
\def\vx{{\bm{x}}}
\def\vy{{\bm{y}}}
\def\mD{{\bm{D}}}
\def\mI{{\bm{I}}}
\def\mM{{\bm{M}}}
\def\mW{{\bm{W}}}
\DeclareMathAlphabet{\mathsfit}{\encodingdefault}{\sfdefault}{m}{sl}
\SetMathAlphabet{\mathsfit}{bold}{\encodingdefault}{\sfdefault}{bx}{n}
\def\sR{{\mathbb{R}}}
\setlist[itemize]{align=parleft,left=0pt..1em}
\newtheorem{proposition}{Proposition}[section]
\theoremstyle{remark}
\newtheorem*{remark}{Remark}
\theoremstyle{definition}
\newtheorem{theorem}{Theorem}[section]
\newtheorem{lemma}[theorem]{Lemma}
\newcommand{\squishlist}{
	\begin{list}{$\bullet$}
		{ \setlength{\itemsep}{0pt}
			\setlength{\parsep}{3pt}
			\setlength{\topsep}{3pt}
			\setlength{\partopsep}{0pt}
			\setlength{\leftmargin}{1.5em}
			\setlength{\labelwidth}{1em}
			\setlength{\labelsep}{0.5em} } }
\newcounter{Lcount}
\newcommand{\squishlisttwo}{
\begin{list}{\arabic{Lcount}. }
	{ \usecounter{Lcount}
		\setlength{\itemsep}{0pt}
		\setlength{\parsep}{0pt}
		\setlength{\topsep}{0pt}
		\setlength{\partopsep}{0pt}
		\setlength{\leftmargin}{2em}
		\setlength{\labelwidth}{1.5em}
		\setlength{\labelsep}{0.5em} } }
\newcommand{\squishend}{\end{list} }
\title{Unraveling Feature Extraction Mechanisms in Neural Networks}
\author{Xiaobing Sun , Jiaxi Li , Wei Lu\\
  StatNLP Research Group\\
  Singapore University of Technology and Design \\
  \texttt{\{xiaobing\_sun, jiaxi\_li\}@mymail.sutd.edu.sg, wei\_lu@sutd.edu.sg} \\}
\begin{document}
\maketitle
\begin{abstract}
The underlying mechanism of neural networks in capturing precise knowledge has been the subject of consistent research efforts.
In this work, we propose a theoretical approach based on Neural Tangent Kernels (NTKs) to investigate such mechanisms.
Specifically, considering the infinite network width, we hypothesize the learning dynamics of target models may intuitively unravel the features they acquire from training data, deepening our insights into their internal mechanisms. 
We apply our approach to several fundamental models and reveal how these models leverage statistical features during gradient descent and how they are integrated into final decisions. 
We also discovered that the choice of activation function can affect feature extraction. For instance, the use of the \textit{ReLU} activation function could potentially introduce a bias in features, providing a plausible explanation for its replacement with alternative functions in recent pre-trained language models. Additionally, we find that while self-attention and CNN models may exhibit limitations in learning n-grams, multiplication-based models seem to excel in this area.
We verify these theoretical findings through experiments and find that they can be applied to analyze language modeling tasks, which can be regarded as a special variant of classification. 
Our contributions offer insights into the roles and capacities of fundamental components within large language models, thereby aiding the broader understanding of these complex systems.
\end{abstract}

\section{Introduction}
Neural networks have become indispensable across a variety of natural language processing (NLP) tasks. 
There has been growing interest in understanding their successes and interpreting their characteristics.
One line of works attempts to identify possible features captured by them for NLP tasks \cite{li-etal-2016-visualizing,linzen-etal-2016-assessing,jacovi-etal-2018-understanding,hewitt-manning-2019-structural,vulic-etal-2020-probing}. They mainly develop empirical methods to verify hypotheses regarding the semantic and syntactic features encoded in the output. Such works may result in interesting findings, but those models still remain \textit{black-boxes} to us.
Another line seeks to reveal internal mechanisms of neural models using mathematical tools \citep{NIPS2014_embedding_matrix_factor, saxe2013exact, pmlr-v80-arora18, pmlr-v119-bhojanapalli20a, merrill-etal-2020-formal2,  pmlr-v139-dong21a,tian2023scan}, which can be more straightforward and insightful. However, \textcolor{black}{few of them have specifically focused on the feature extraction of neural NLP models.}

When applying neural models to downstream NLP tasks in practice, we often notice some modules perform better than others on specific tasks, while some exhibit similar behaviors. We may wonder what mechanisms are behind such differences and similarities between those modules. By acquiring deeper insights into the roles of those modules in a complex model with respect to feature extraction, we will be able to select or even design more suitable models for downstream tasks.

\begin{figure}
    \centering
    \vspace{-3mm}
    \includegraphics[scale=0.35]{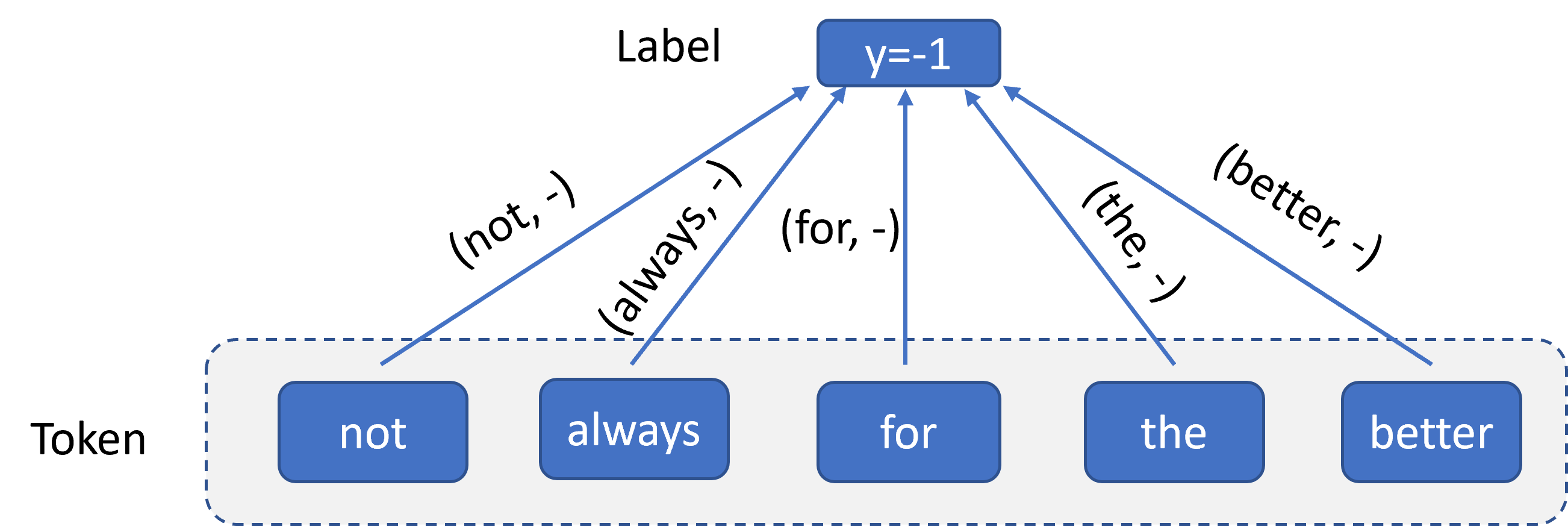}
    \vspace{-3mm}
    \caption{Example of co-occurrence features between tokens and labels (self-attention model). }
    \label{fig:co_occurrence_knowledge}
    \vspace{-5mm}
\end{figure}

In this work, we propose a novel theoretical approach to understanding the mechanisms, through which fundamental models (often used as modules in complex models) acquire features during gradient descent in text classification tasks. 
The evolution of model output can be described as learning dynamics involving NTKs \cite{NEURIPS2018_ntk, NEURIPS2019_exact_computation_ntk}, which are typically used to study various properties of neural networks, including convergence and generalization. 
While these representations can be complex in practice, when the width of the network approaches infinity, they tend to converge to less complex representations and remain asymptotically constant \citep{NEURIPS2018_ntk}, allowing us to intuitively interpret the learning dynamics and identify the relevant features captured by the model.

We applied our approach to several fundamental models, including a 
multi-layer perceptron (MLP), a convolutional neural network (CNN), a linear Recurrent Neural Network (L-RNN), a self-attention (SA) model \citep{NIPS2017_vaswani}, and a matrix-vector (MV) model \citep{mitchell-lapata-2009-language} and exhibit the MLP, CNN, and SA models may behave similarly in capturing token-label features, while the MV and L-RNN extract different types of features.
Our contributions include:
\squishlist
    \item We propose an approach to theoretically investigate feature extraction mechanisms for fundamental neural models.
    \item We identify significant factors such as the choice of activation and unveil the limitations of these models, e.g., both the CNN and SA models may not effectively capture meaningful n-gram information beyond individual tokens.
    \item Our experiments validate the theoretical findings and reveal their relevance to advanced architectures such as Transformers \citep{NIPS2017_vaswani}. 
\squishend

 
Our intention through this work is to provide new insights into the core components of complex models. By doing so, we aim to contribute to the understanding of the behaviors exhibited by state-of-the-art large language models and facilitate the development of enhanced model designs\footnote{Our code is available at {\url{https://github.com/richardsun-voyager/ufemnn}}.}.

\section{Related Work}
\paragraph{Probing features for NLP models}
Probing linguistic features is an important topic for verifying the interpretability of neural NLP models.
\citet{li-etal-2016-visualizing} employed a visualization approach to detect linguistic features such as negation captured by the hidden states of LSTMs. \citet{linzen-etal-2016-assessing} examined the ability of LSTMs to capture syntactic knowledge using number agreement in English subject-verb dependencies. \citet{jacovi-etal-2018-understanding} studied whether the CNN models could capture n-gram features. \citet{vulic-etal-2020-probing} presented a systematic analysis to probe possible knowledge that the pre-trained language models could implicitly capture. \citet{chen-etal-2020-generating} proposed an algorithm to detect hierarchical feature interaction for text classifiers. 
Empirically, such work reveals that neural NLP models can capture useful and interpretable features for downstream tasks.
Our work seeks to explain how neural NLP models capture useful features during training from a theoretical perspective.

\paragraph{Infinite-width Neural Networks}
Researchers found that there could be interesting patterns when the neural network's width approaches infinity.
\citet{lee2018deep} linked infinitely wide deep networks to Gaussian Processes.
A recent line of work \cite{NEURIPS2018_ntk, NEURIPS2019_ntk_bias, nguyen2021dataset, loo2022efficient} proposed that as the network width approaches infinity, the dynamics can be characterized by the NTK, which converges to a kernel determined at initialization and remains constant. This conclusion holds for fully-connected neural networks, CNNs \cite{NEURIPS2019_exact_computation_ntk} and RNNs \cite{pmlr-v139-emami21b, alemohammad2021_rnn_ntk}. 
Later, \citet{pmlr-v139-yang21f} showed that such properties of NTKs can be applied to a randomly initialized neural network of any architecture.
\textcolor{black}{Very limited studies have delved into the analysis of feature extraction in neural NLP models.}
We will investigate the internal mechanisms of neural NLP models under extreme conditions.

\section{Analysis}
We use learning dynamics to describe the updates of neural models during training with the aim of identifying potentially useful properties. For the ease of presentation and discussion, we focus on binary text classification\footnote{Analysis for multi-class classification can be found in Appendix \ref{section:lemma_theorem_proof}.}.

\paragraph{Model Description}
Assume we have a training dataset denoted by $\mathcal{D}$, consisting of $m$ labeled instances. Let $\mathcal{X}$ and $\mathcal{Y}$ represent all the sentences and labels in the training dataset, respectively.
$x \in \mathcal{X}$ is an instance consisting of a sequence of tokens, and  $y \in \mathcal{Y}$ is the corresponding label. The vocabulary size is $|V|$. 
Consider a binary text classification model, where $y \in \{-1,+1\}$. The model output, denoted as $s(t) \in \sR$ at time $t$ is
\vspace{-2mm}
\begin{equation}
    \begin{aligned}
      s(t) = \vf_t(x; \boldsymbol{\theta}_t),
    \end{aligned}
    \vspace{-2mm}
\end{equation}
where $\boldsymbol{\theta}_t$ (a vector) is the concatenation of all the parameters, which are functions of time $t$.
We refer to the model output $s(t)$ as the \textit{label score} at time $t$. This score is used for classification decisions, \textit{positive} if $s(t)>0$ and \textit{negative} otherwise.

\paragraph{Learning Dynamics}
The evolution of a label score can be described by learning dynamics, which may indicate interesting properties.
Let $\vf_t(\mathcal{X}) \in \sR^m$ represent the concatenation of all the outputs of training instances at time $t$, and $y \in\mathcal{Y}$ is the desired label. 
Given a test input $x'$, the corresponding label score $s'(t)$  follows the dynamics
\begin{equation}
    \begin{aligned}
      \dot{s}'(t) &=    \nabla_{\theta} f^\top_t(x') \nabla_{\theta_t} \vf_t(\mathcal{X}) \nabla_{\vf_t(\mathcal{X})} \mathcal{L}\\
      &= \Theta_t(x', \mathcal{X}) \nabla_{\vf_t(\mathcal{X})} \mathcal{L},
    \end{aligned}
    \label{eq:ntk_test_representation}
\end{equation}
where $\Theta_t(x', \mathcal{X})$ is the NTK at time $t$ and $\mathcal{L}$ is the empirical loss defined as 
\begin{equation}
    \begin{aligned}
      \mathcal{L}=-\frac{1}{m} \sum_{(x, y) \in \mathcal{D}} \log g(ys).
    \end{aligned}
    \label{eq:empirical_loss}
\end{equation}
where $g$ is the \textit{sigmoid} function.
For simplicity, we will omit the time stamp $t$ in our subsequent notations.
The dynamics $\dot{s}'$ will obey
\begin{equation}
    \begin{aligned}
      \dot{s}' & = \frac{1}{m}\!\!\sum_{( x, y) \in \mathcal{D}}\!\!\!\! g(-ys^{(x)})   y\Theta(x', x),
    \end{aligned}
    \label{eq:polarity_score_der}
\end{equation}
where $s^{(x)}$ is the label score for the training instance $x$.
Obtaining closed-form solutions for the differential equation in Equation \ref{eq:polarity_score_der} is a challenge. We thereby consider an extreme scenario with the infinite network width, suggested by \citet{lee2018deep}.

\paragraph{Infinite-Width}
When the network width approaches infinity, the NTK will converge and stay constant during training \citep{NEURIPS2018_ntk, NEURIPS2019_exact_computation_ntk, pmlr-v139-yang21f}. Therefore, the learning dynamics can be written as follows,
\begin{equation}
    \begin{aligned}
      \dot{s}' & = \frac{1}{m}\!\!\sum_{( x, y) \in \mathcal{D}}\!\!\!\! g(-ys^{(x)})   y\Theta_{\infty}(x', x),
    \end{aligned}
    \label{eq:polarity_score_der2}
\end{equation}
where $\Theta_{\infty}(x', x)$ refers to the converged NTK determined at initialization. 
This convergence may allow us to simplify the representations of the learning dynamics and
offer more intuitive insights to analyze its evolution over time.

There can be certain interesting properties (regarding the trend of the label scores) harnessed by the interaction $y\Theta_{\infty}(x', x)$, where $y$ controls the direction and $\Theta_{\infty}(x', x)$ may indicate the relationship between $x'$ and $x$. Certain hypotheses can be drawn from these properties.
First, the converged NTK $\Theta_{\infty}(x', x)$ may intuitively represent the interaction between the test input $x'$ and the training instance $x$. This could extend to the interaction between the basic units (tokens or n-grams) from $x'$ and $x$, as the semantic meaning of an instance can be deconstructed into the combination of the meanings of its basic units \citep{mitchell-lapata-2008-vector, socher-etal-2012-semantic}.
Second, if $\Theta_{\infty}(x', x)$ depends on the similarity between $x'$ and $x$, a more deterministic trend can be predicted for a test input $x'$ that closely resembles the training instances of a specific type.
For example, suppose $\Theta_{\infty}(x', x)$ exhibits a significantly large gain when $x'$ is similar to $x$ at a particular $y$, and the dynamics will likely receive significant gains in a desired direction during training, thus enabling us to predict the trend of the label score. 

We thereby propose the following approach to investigate a target model and verify our aforementioned hypotheses: 1) redefining the target model following the settings proposed by \citet{NEURIPS2018_ntk, pmlr-v139-yang21f}, which guarantees the convergence of NTKs; 2) obtaining the converged NTK $\Theta_{\infty}(x', x)$ and the learning dynamics under the infinite-width condition; 3) \textcolor{black}{performing analysis on the learning dynamics of basic units and revealing possible features.}


\section{Interpreting Fundamental Models}
We investigate an MLP model, a CNN model, an SA model, an MV model, and an L-RNN model, respectively. Details and proofs for the lemmas and theorems can be found in Appendix \ref{section:lemma_theorem_proof}.
\paragraph{Notation}
\label{subsect:denotation}
Let $ \ve \in \sR^{|V|} $ be the \text{one-hot} vector for token $e$, $l^{(x)}$ be the instance length, $\mW^{e} \in \mathbb{R}^{{d_{in}} \times |V|}$ be the weight of the embedding layer, and $\vv \in \sR^{d_{out}}$ be the final layer weight. 
$\mW \in \sR^{d_{out} \times d_{in}}$ is the weight of the hidden layer in the MLP model.  $\mW^c_k \in \sR^{d_{out} \times  d_{in}}$ is the kernel weight corresponding to the $k$-th token in the sliding window in the CNN model.
For simplicity, we let $d_{out}=d_{in}=d$.
Assume all the parameters are initialized with Gaussian distributions in our subsequent analysis, i.e., $\mW_{ij} \sim \mathcal{N}(0, \sigma^2_w)$,  $\mW^e_{ij} \sim \mathcal{N}(0, \sigma^2_e)$, and  $\vv_{j} \sim \mathcal{N}(0, \sigma^2_v)$, and $\mW^c_{ij} \sim \mathcal{N}(0, \sigma^2_w)$, for the sake of NTK convergence.

\subsection{MLP}
Following \citet{wiegreffe-pinter-2019-attention}, given instance $x$, the output of MLP is defined as
\begin{equation}
\vspace{-3mm}
    \begin{aligned}
        s  =  \frac{\vv^\top}{\sqrt{d}}\sum_{j=1}^{l^{(x)}}  \boldsymbol{\phi}( \mW\frac{1}{\sqrt{d}} \mW^{e} \ve_j).
    \end{aligned}
    \label{eq:affine_polarity_score}
\end{equation}

The label score $s$  will be used for making classification decisions. 
$\boldsymbol{\phi}$ is the element-wise \textit{ReLU} function. $\ve_j$ is the \textit{one-hot} vector for token $e_j$. 
\textcolor{black}{It is not straightforward to analyze $s$ directly, which can be viewed as the sum of token-level label scores.
Instead, as basic units are tokens in this model, we focus on the label score of every single token and understand how they contribute to the instance-level label score.}
When the test input $x'$ is simply a token $e$, we can get the corresponding NTK with the infinite network width.
\begin{lemma}
\label{lemma:mlp}
When $d \to \infty$, the NTK between the token $e$ and instance $x$ in the MLP model converges to
\vspace{-3mm}
\begin{equation}
\vspace{-3mm}
    \begin{aligned}
      \Theta_{\infty}(e, x)  &= \rho \sum_{j=1}^{l^{(x)}} \ve^\top \ve_j +\sum_{j=1}^{l^{(x)}}\mu,
    \end{aligned}
    \label{eq:mlp_infinity_ntk}
\end{equation}
where $\rho = \frac{(\pi-1)\sigma^2_e\sigma^2_w}{2\pi}+ \frac{\sigma^2_e\sigma^2_v+\sigma^2_w\sigma^2_v}{2}$ and $\mu=\frac{\sigma^2_e\sigma^2_w}{2\pi}$. 
\end{lemma}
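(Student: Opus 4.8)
The plan is to expand the kernel $\Theta_{\infty}(e,x)=\langle\nabla_{\theta}f(e),\nabla_{\theta}f(x)\rangle$ as a sum of three inner products, one per parameter block $\vv$, $\mW$, $\mW^{e}$, and to evaluate each in the $d\to\infty$ limit by combining a conditional-Gaussian description of the pre-activations with a law-of-large-numbers average over the $d$ hidden units. Write $\vz_j:=\mW^{e}\ve_j$ for the (column) embedding of the $j$-th token of $x$, $\va_j:=\tfrac{1}{\sqrt d}\mW\vz_j$ for the pre-activation, $\vh_j:=\boldsymbol{\phi}(\va_j)$, and analogously $\vz^{e},\va^{e},\vh^{e}$ for the single-token test input $e$. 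The first reduction is that when $e=e_j$ the test token and the $j$-th training token pick out the \emph{same} embedding column, so $\vz^{e}=\vz_j$, $\va^{e}=\va_j$, $\vh^{e}=\vh_j$; when $e\neq e_j$ the two columns are independent Gaussian vectors. Hence each per-block inner product splits as a sum over $j$ of a ``matching'' term (feeding the $\ve^\top\ve_j$ part) and a ``non-matching'' term (feeding the constant $\mu$), and it suffices to compute the two scalar limits in each of the six cases.

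For the building blocks I will use that, conditioned on $\vz^{e}$, each coordinate $(\va^{e})_i$ is $\mathcal N\!\big(0,\tfrac{\sigma_w^2}{d}\|\vz^{e}\|^2\big)$ while $\tfrac1d\|\vz^{e}\|^2\to\sigma_e^2$ a.s., so the pre-activations behave like i.i.d.\ $\mathcal N(0,\tau^2)$ with $\tau^2=\sigma_e^2\sigma_w^2$; together with the ReLU integrals $\E[\phi(Z)]=\tau/\sqrt{2\pi}$, $\E[\phi(Z)^2]=\tau^2/2$, $\E[\phi'(Z)]=\tfrac12$, $\phi'^2=\phi'$, and the independence of $\vv$ from $\mW,\mW^{e}$. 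The $\vv$-block gives $\tfrac1d\sum_j\langle\vh^{e},\vh_j\rangle$, whose matching term $\to\E[\phi(Z)^2]=\tfrac{\sigma_e^2\sigma_w^2}{2}$ and whose non-matching term $\to\E[\phi(Z)]^2=\tfrac{\sigma_e^2\sigma_w^2}{2\pi}=\mu$ (here $(\va^{e})_i$ and $(\va_j)_i$ are asymptotically independent). The $\mW$-block gives $\tfrac{1}{d^2}\sum_j\big(\sum_i\vv_i^2\,\phi'((\va^{e})_i)\phi'((\va_j)_i)\big)\,\vz^{e\top}\vz_j$: the matching term $\to(\sigma_v^2\cdot\tfrac12)\cdot\sigma_e^2=\tfrac{\sigma_e^2\sigma_v^2}{2}$, and the non-matching term vanishes since $\vz^{e\top}\vz_j=O(\sqrt d)$ is killed by the $1/d^2$ prefactor. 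The $\mW^{e}$-block carries an explicit factor $\ve^\top\ve_j$, so only the matching term survives; isolating the $i=i'$ diagonal (the $i\neq i'$ part has mean zero and variance $O(1/d)$ because the $\vv_i$ are mean-zero and independent) and using $\tfrac1d\sum_k\mW_{ik}^2\to\sigma_w^2$ gives $\sigma_w^2\cdot(\sigma_v^2\cdot\tfrac12)=\tfrac{\sigma_w^2\sigma_v^2}{2}$. Summing, the matching coefficient is $\tfrac{\sigma_e^2\sigma_w^2}{2}+\tfrac{\sigma_e^2\sigma_v^2}{2}+\tfrac{\sigma_w^2\sigma_v^2}{2}$, which equals $\rho+\mu$ once one notes $\tfrac{(\pi-1)\sigma_e^2\sigma_w^2}{2\pi}+\tfrac{\sigma_e^2\sigma_w^2}{2\pi}=\tfrac{\sigma_e^2\sigma_w^2}{2}$, while the non-matching coefficient is exactly $\mu$; collecting over $j$ yields $\Theta_{\infty}(e,x)=\rho\sum_{j}\ve^\top\ve_j+\sum_j\mu$.

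The delicate point — the step I expect to require the most care — is not any single Gaussian integral but the uniform control of the dependence structure when passing to the limit: the matrix $\mW$ appears simultaneously inside the pre-activations $\va^{e},\va_j$ and, in the $\mW^{e}$-block, as explicit factors $\mW_{ik}\mW_{i'k}$, so one must justify substituting $\tfrac1d\sum_k\mW_{ik}^2\to\sigma_w^2$ despite its correlation with $\phi'((\va^{e})_i)$, and show the $i\neq i'$ cross-terms genuinely vanish rather than merely being heuristically small. I would handle this with a second-moment estimate for the off-diagonal sums and the observation that, conditionally on $\vz^{e}$, $(\va^{e})_i$ depends on row $i$ of $\mW$ only through one normalized linear functional contributing $O(1/d)$ to $\sum_k\mW_{ik}^2$; alternatively, one can bypass these estimates by invoking the general NTK-convergence theorems for arbitrary architectures \citep{pmlr-v139-yang21f} and then only evaluating the resulting deterministic Gaussian-process integrals, which reproduce the same three constants. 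The fully rigorous concentration bounds and the multi-class analogue are deferred to Appendix \ref{section:lemma_theorem_proof}.
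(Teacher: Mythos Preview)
Your proposal is correct and follows essentially the same three-block decomposition as the paper. The only cosmetic differences are that the paper first states and proves the two-instance version (its Lemma~A.1) and then specializes to a single test token, and that for the $\vv$-block it quotes the Cho--Saul arc-cosine kernel $\tfrac{\sigma_e^2\sigma_w^2}{2\pi}[\sin\alpha+(\pi-\alpha)\cos\alpha]$ rather than computing $\E[\phi(Z)^2]$ and $\E[\phi(Z)]^2$ directly as you do; evaluating that formula at $\alpha\in\{0,\pi/2\}$ reproduces exactly your $\tfrac{\sigma_e^2\sigma_w^2}{2}$ and $\tfrac{\sigma_e^2\sigma_w^2}{2\pi}$.
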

Note that, for two tokens $e_j$ and  $e_k$, their one-hot vectors satisfy $\ve^\top_j \ve_k =0$ if $e_j \neq e_k$; $\ve^\top_j \ve_k =1$ if $e_j = e_k$. 
The dot-product  $\sum_{j=1}^{l^{(x)}} \ve^\top\ve_j$ can be interpreted as the frequency of 
$e$ appearing in instance $x$.

\begin{theorem}
\label{theorem:mlp_ld_equation}
The learning dynamics of token $e$'s label score obey
 \begin{equation}
 \vspace{-2mm}
    \begin{aligned}
         \dot{s}^e  &=  \frac{\rho }{m}\sum_{(x, y) \in \mathcal{D}} g(- ys^{(x)}) y \omega(e, x)  \\
         &+  \frac{\mu}{m}\sum_{(x, y) \in \mathcal{D}} g(- ys^{(x)}) y l^{(x)} ,
    \end{aligned}
    \label{eq:token_polarity_score_dynamics1}
    \vspace{-2mm}
\end{equation}
where $\omega(e, x)=\sum_{j=1}^{l^{(x)}} \ve^\top\ve_j$, which depends on the training data and will not change over time.   
\end{theorem}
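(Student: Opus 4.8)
The plan is to start from the general infinite-width learning dynamics already established in Equation~\ref{eq:polarity_score_der2}, specialized to the case where the ``test input'' is a single token $e$. That equation reads $\dot{s}' = \frac{1}{m}\sum_{(x,y)\in\mathcal{D}} g(-ys^{(x)})\, y\,\Theta_{\infty}(x', x)$, and it is valid for any input $x'$ because the NTK has converged and stays constant. So the first step is simply to substitute $x' = e$, writing $\dot{s}^e$ for the resulting label-score dynamics of the token.

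The second and essentially only substantive step is to plug in the closed form of $\Theta_{\infty}(e, x)$ supplied by Lemma~\ref{lemma:mlp}, namely $\Theta_{\infty}(e, x) = \rho \sum_{j=1}^{l^{(x)}} \ve^\top\ve_j + \sum_{j=1}^{l^{(x)}}\mu = \rho\,\omega(e,x) + \mu\, l^{(x)}$, where I use $\omega(e,x) := \sum_{j=1}^{l^{(x)}}\ve^\top\ve_j$ (the count of occurrences of $e$ in $x$) and the fact that $\sum_{j=1}^{l^{(x)}}\mu = \mu\, l^{(x)}$. Substituting and splitting the sum over $\mathcal{D}$ into the $\rho$-term and the $\mu$-term, then pulling the constants $\rho/m$ and $\mu/m$ out front, yields exactly Equation~\ref{eq:token_polarity_score_dynamics1}.

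The last point to note — rather than prove — is that $\omega(e,x)$ is a deterministic function of the training corpus: the one-hot vectors $\ve$, $\ve_j$ are fixed, so $\omega(e,x)$ does not depend on $t$ or on $\boldsymbol{\theta}_t$. This is the feature-extraction observation the theorem is really flagging, and it follows immediately from the definition together with the identity $\ve_j^\top\ve_k\in\{0,1\}$ noted after Lemma~\ref{lemma:mlp}.

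There is essentially no obstacle here: all the analytic work (the NTK limit, the gradient computation, the Gaussian-expectation integrals that produce $\rho$ and $\mu$) has already been absorbed into Lemma~\ref{lemma:mlp} and into the derivation of Equation~\ref{eq:polarity_score_der2}. If anything needed care it would be a bookkeeping check that the $\mu$ term is correctly read as $\mu\, l^{(x)}$ rather than being tangled with $\omega$, and that the direction factor $y$ and the weight $g(-ys^{(x)})$ are carried through unchanged from Equation~\ref{eq:polarity_score_der2}; but these are routine. The proof in the appendix is therefore expected to be a two-line substitution plus the remark on time-independence of $\omega$.
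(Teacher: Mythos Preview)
Your proposal is correct and matches the paper's own argument essentially line for line: the appendix explicitly states ``With Lemma~\ref{lemma:mlp} and Equation~\ref{eq:polarity_score_der2}, we can get Theorem~\ref{theorem:mlp_ld_equation},'' and then just writes out the substituted expression. Your anticipation that this is a two-line substitution plus a remark on the time-independence of $\omega(e,x)$ is exactly right.
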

The \textit{non-linearity} of the sigmoid function $g(- ys)$ makes it a challenge to obtain a \textit{closed-form} solution for the dynamics.

However, we can predict trends for the label scores in special cases.
Note that the polarity of the first term in Equation \ref{eq:token_polarity_score_dynamics1} will depend on $y\omega(e, x)$ in each training instance. 
For instance, consider a token that only appears in positive instances, i.e., $\omega(e, x)>0$ when $y=+1$; $\omega(e, x)=0$ when $y=-1$. In this case, the first term remains positive and incrementally contributes to the label score $s^e$ throughout the training process. The opposite trend occurs for tokens solely appearing in negative instances. If the impact of the second term is minimal, the label scores of these two types of tokens will be significantly positive or negative after sufficient updates. 
The final classification decisions are made based on the linear combination of the label scores for the constituent tokens. 
The second term in Equation \ref{eq:token_polarity_score_dynamics1} is unaffected by $\omega(e, x)$ and is shared by all the tokens $e$ at each update.
It can be interpreted as an induced feature bias. Particularly, when this term is sufficiently large, it may cause an imbalance between the tokens co-occurring with the positive label and those co-occurring with the negative label, rendering one type of tokens more influential than the other for classification.

\textcolor{black}{Theorem \ref{theorem:mlp_ld_equation} may explain how the MLP model leverages the statistical co-occurrence features between $e$ and $y$ as shown in Figure \ref{fig:co_occurrence_knowledge}, and integrate them in final classification decisions, i.e., tokens solely appearing in positive/negative instances will likely contribute in the direction of predicting a positive/negative label.}



\subsection{CNN}
We consider the 1-dimensional CNN, with kernel size, stride size, and padding size set to $K$, 1, and $K-1$ respectively.
For each sliding window $c_j$ comprising $K$ consecutive tokens, the corresponding feature $\vc_j \in \sR^{d}$ can be represented as 
\vspace{-2mm}
\begin{equation}
    \begin{aligned}
      \vc_j  = \sum_{k=1}^{K} \mW^c_k \frac{1}{\sqrt{d}} \mW^e \ve_{j+k-1},
    \end{aligned}
    \vspace{-2mm}
\end{equation}
where $\mW^c_k$ is the kernel weight corresponding to the $k$-th token in the sliding window.

The label score of an instance is computed as
\vspace{-2mm}
\begin{equation}
    \begin{aligned}
      s  = \frac{\vv^\top}{\sqrt{d}}\sum_{j=-(K-1)}^{l^{(x)}} \boldsymbol{\phi} (\vc_j),
    \end{aligned}
    \label{eq:cnn_hidden_state}
    \vspace{-2mm}
\end{equation}
where $-(K-1)$ means the position for the left-most padding token.
The first and last $K-1$ padding tokens in an instance are represented by zero vectors. $\boldsymbol{\phi} $ is the element-wise \textit{ReLU} function.
For brevity, we will denote $\sum_{j=-(K-1)}^{l^{(x)}}$ by $\sum_{j}$.

Let us focus on a single sliding window and study the learning dynamics of its label score. 
\begin{lemma}
\label{lemma:cnn}
Consider a sliding window $c$ consisting of tokens $e_1, e_{2}, \dots, e_{K}$, when $d \to \infty$ the NTK between $c$ and instance $x$ converges to
\vspace{-2mm}
    \begin{align}
        &\Theta_{\infty}(c, x) = \sum_{j} F[\omega_c(c, c_j)] + \nonumber\\
        & \rho \sum_{k=1}^K \!\sum_{j} \!H[\omega_c(c, c_j)]  \ve^\top_{k} \ve_{j+k-1}, 
    \end{align}
    \label{eq:cnn_ntk_ngram}
    \vspace{-2mm}
where 
\vspace{-2mm}
\begin{equation*}
    \begin{aligned}
       &\omega_c(c, c_j) \!=\!\! \sum_{k'=1}^K \!\ve^\top_{k'} \sum_{k=1}^K\ve_{j+k-1},\!
       & \rho \!=\! \sigma^2_v (\sigma^2_e\!+\!\sigma^2_w).
    \end{aligned}
    \vspace{-2mm}
\end{equation*} 
$\omega_c$ means the number of shared tokens between $c$ and $c_j$ regardless of positions.
$F$ and $H$\footnote{Their definitions can be found in Appendix \ref{subsection:cnn_learning_dynamics}.} are \textit{monotonically-increasing} and \textit{non-negative} functions depending on $\sigma^2_e\sigma^2_w$.   
\end{lemma}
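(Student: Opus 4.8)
The plan is to compute the kernel $\Theta(c,x)=\langle\nabla_{\theta}f(c),\nabla_{\theta}f(x)\rangle$ by splitting it over the three parameter blocks $\vv$, $\{\mW^c_k\}_{k=1}^{K}$ and $\mW^e$, and then sending $d\to\infty$, invoking the convergence of the NTK to its deterministic infinite-width limit at initialization \citep{NEURIPS2018_ntk,NEURIPS2019_exact_computation_ntk,pmlr-v139-yang21f}. Here $f(c)=\tfrac{\vv^{\top}}{\sqrt d}\boldsymbol{\phi}(\vc)$ is the scalar output on a single window $c=(e_1,\dots,e_K)$ with $\vc=\sum_{k}\mW^c_k\,\vz_{e_k}$ and $\vz_e=\tfrac1{\sqrt d}\mW^e\ve$, while $f(x)=\tfrac{\vv^{\top}}{\sqrt d}\sum_j\boldsymbol{\phi}(\vc_j)$. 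Writing $\vg=\vv\odot\boldsymbol{\phi}'(\vc)$ and $\vg_j=\vv\odot\boldsymbol{\phi}'(\vc_j)$ for the backpropagated signals, the relevant gradients are $\nabla_{\vv}f(c)=\tfrac1{\sqrt d}\boldsymbol{\phi}(\vc)$, $\nabla_{\mW^c_k}f(c)=\tfrac1{\sqrt d}\,\vg\,\vz_{e_k}^{\top}$, and $\nabla_{\mW^e}f(c)=\tfrac1d\sum_{k}\bigl((\mW^c_k)^{\top}\vg\bigr)\ve_k^{\top}$, with the corresponding sums over the windows $c_j$ in the expressions for $f(x)$.

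The first step is to pin down the limiting joint law of a coordinate pair $(\vc_a,(\vc_j)_a)$. Since the columns of $\mW^e$ and the entries of each $\mW^c_k$ are i.i.d.\ Gaussian and distinct kernels $\mW^c_k$ are mutually independent, these pairs become, as $d\to\infty$, i.i.d.\ centred bivariate Gaussians $(X^{(j)},Y^{(j)})$: the diagonal variances equal $\sigma_e^2\sigma_w^2$ times the number of non-padding slots (using $\langle\vz_{e_k},\vz_{e_{k'}}\rangle\to\sigma_e^2\,\ve_k^{\top}\ve_{k'}$ and the independence of distinct $\mW^c_k$), and the off-diagonal entry equals $\sigma_e^2\sigma_w^2$ times the number of slots in which $c$ and $c_j$ carry a common token, i.e.\ $\omega_c(c,c_j)$. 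Hence the correlation of $(X^{(j)},Y^{(j)})$ --- and thus every per-coordinate ReLU statistic below --- is a function of $\omega_c(c,c_j)$ alone, with an overall scale governed by $\sigma_e^2\sigma_w^2$.

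The second step takes the $d\to\infty$ limit of each block by a law-of-large-numbers average over the $d$ coordinates. The $\vv$-block gives $\tfrac1d\sum_j\boldsymbol{\phi}(\vc)^{\top}\boldsymbol{\phi}(\vc_j)\to\sum_j\E\!\left[\phi(X^{(j)})\phi(Y^{(j)})\right]=:\sum_j F[\omega_c(c,c_j)]$, with $F$ the ReLU arc-cosine kernel --- non-negative, monotonically increasing in the overlap, and scaling with $\sigma_e^2\sigma_w^2$. The $\mW^c_k$-block gives $\tfrac1d\sum_j\langle\vg,\vg_j\rangle\,\langle\vz_{e_k},\vz_{e_{j+k-1}}\rangle\to\sigma_v^2\sigma_e^2\sum_j\E\!\left[\phi'(X^{(j)})\phi'(Y^{(j)})\right]\ve_k^{\top}\ve_{j+k-1}$, where $\E[\phi'(X^{(j)})\phi'(Y^{(j)})]=\Pr(X^{(j)}>0,\,Y^{(j)}>0)=:H[\omega_c(c,c_j)]$ is, by the arcsine formula for the positive-orthant probability, non-negative and increasing in the overlap. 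Finally, in the $\mW^e$-block $\tfrac1{d^{2}}\sum_{k,k'}\sum_j\vg^{\top}\mW^c_k(\mW^c_{k'})^{\top}\vg_j\;\ve_k^{\top}\ve_{j+k'-1}$ the cross terms $k\neq k'$ vanish (the matrices $\mW^c_k$ and $\mW^c_{k'}$ being independent and centred), while for $k=k'$ the forward/backward (``gradient'') independence of the infinite-width limit gives $\tfrac1d\mW^c_k(\mW^c_k)^{\top}\to\sigma_w^2\mI$, so this block converges to $\sigma_v^2\sigma_w^2\sum_{k}\sum_j H[\omega_c(c,c_j)]\,\ve_k^{\top}\ve_{j+k-1}$. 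Adding the $\mW^c$- and $\mW^e$-contributions produces the coefficient $\sigma_v^2(\sigma_e^2+\sigma_w^2)=\rho$, and combining with the $\vv$-block yields exactly the asserted identity; the monotonicity, non-negativity and $\sigma_e^2\sigma_w^2$-dependence of $F$ and $H$ are then read off from their explicit Gaussian-integral forms.

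The principal obstacle is making rigorous the forward/backward independence used in the $\mW^e$-block --- treating $\vg$ as independent of the very kernels $\mW^c_k$ on which it depends through $\vc$ --- which is the delicate ingredient behind all NTK limits and is supplied by the Tensor Programs calculus \citep{pmlr-v139-yang21f}; a secondary nuisance is the bookkeeping of the boundary (padding) windows, whose reduced variances perturb the correlations and hence the precise domains of $F$ and $H$, together with the exact combinatorial convention for $\omega_c$ (whether shared tokens are tallied slot-by-slot or also across slots), which affects only the definitions of $F$ and $H$ rather than the structure of the result.
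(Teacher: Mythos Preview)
Your proposal is correct and follows essentially the same route as the paper: split the NTK into the $\vv$-, $\{\mW^c_k\}$-, and $\mW^e$-blocks, pass to the $d\to\infty$ limit coordinate-wise so that each block becomes a Gaussian expectation (yielding $F$ from the $\vv$-block and $\sigma_v^2\sigma_e^2 H$, $\sigma_v^2\sigma_w^2 H$ from the other two), and combine. The only noteworthy deviation is the monotonicity argument --- the paper proves $F(n{+}1)\geq F(n)$ (and likewise for $H$) by an inductive Jensen-type inequality, rewriting the $n$-shared-variable expectation as the square of a conditional expectation over the $(n{+}1)$-st shared variable and bounding it by the expectation of the square, whereas you invoke the closed-form arc-cosine kernel and positive-orthant probability directly; both routes are valid, and your flagging of the $\omega_c$ bookkeeping ambiguity is apt.
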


The first term in $\Theta_{\infty}(c,x)$ captures the token similarity between sliding windows $c$ and $c_j$ regardless of token positions. In the second term, $ \sum_{j} H[\omega_c(c, c_j)]\ve^\top_{k} \ve_{j+k-1}$ can be viewed as the weighted frequency of token $e_k$ in instance $x$, and when $\sigma_v$ is sufficiently large, the converged NTK is majorly influenced by the sum of the weighted frequencies of the tokens in $c$ appearing in $x$. 

\begin{theorem}
\label{theorem:cnn}
The dynamics of the label score of the test sliding window $c$ obey
\vspace{-2mm}
\begin{equation}
    \begin{aligned}
         \dot{s}^c  &= \frac{ \rho }{m} \sum_{k=1}^K \! \sum_{(x, y) \in \mathcal{D}} \!\! g(- ys^{(x)})    y \omega(e_k, x)\\
         &+\frac{1}{m}\sum_{(x, y) \in \mathcal{D}} g(- ys^{(x)}) y \sum_{j} F[\omega_c(c, c_j)],
    \end{aligned}
    \label{eq:dynamics_cnn_sliding_window}
    \vspace{-1mm}
\end{equation}
where $\omega(e_k, x)=\sum_{j}  \!H[\omega_c(c, c_j)]  \ve^\top_{k} \ve_{j+k-1}$.    
\end{theorem}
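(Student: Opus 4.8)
\textbf{Proof proposal for Theorem \ref{theorem:cnn}.}

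The plan is to substitute the converged NTK from Lemma \ref{lemma:cnn} directly into the general infinite-width learning-dynamics formula \eqref{eq:polarity_score_der2}, specializing the test input $x'$ to the single sliding window $c$. Concretely, I would start from
\[
\dot{s}^c = \frac{1}{m}\sum_{(x,y)\in\mathcal{D}} g(-ys^{(x)})\, y\, \Theta_{\infty}(c,x),
\]
and plug in the two-term expression $\Theta_{\infty}(c,x) = \sum_j F[\omega_c(c,c_j)] + \rho\sum_{k=1}^K\sum_j H[\omega_c(c,c_j)]\,\ve_k^\top\ve_{j+k-1}$. Linearity of the sum over $(x,y)$ immediately splits $\dot{s}^c$ into two pieces, matching the two terms claimed in \eqref{eq:dynamics_cnn_sliding_window}.

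The second step is purely bookkeeping: in the $\rho$-term I would pull the sum over $k$ (the kernel position) outside the sum over training instances, which is legitimate since $K$ is fixed, and then recognize the inner quantity $\sum_j H[\omega_c(c,c_j)]\,\ve_k^\top\ve_{j+k-1}$ as precisely the definition of $\omega(e_k,x)$ given in the theorem statement. This yields the first line of \eqref{eq:dynamics_cnn_sliding_window}. The $F$-term requires no manipulation at all: $\sum_j F[\omega_c(c,c_j)]$ depends only on $x$ (through its windows $c_j$) and on the fixed test window $c$, so it passes through unchanged and gives the second line.

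A small amount of care is needed to justify that the dynamics formula \eqref{eq:polarity_score_der2} genuinely applies with $x'$ replaced by a length-$K$ pseudo-instance $c$: one should note that $c$ is being treated exactly as the CNN treats any $K$-token window (with the same kernel weights $\mW^c_k$ and embedding $\mW^e$), so $\nabla_\theta f(c)$ is well-defined and Lemma \ref{lemma:cnn} already computed the resulting converged kernel. I would also remark that $\omega(e_k,x)$ and $\omega_c(c,c_j)$ are functions of the (fixed) training data and the test window only, hence constant in $t$, which is what makes the decomposition meaningful for tracking trends over training — mirroring the observation made after Theorem \ref{theorem:mlp_ld_equation}.

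The main obstacle is not in this theorem but upstream: essentially all the work lives in Lemma \ref{lemma:cnn}, i.e., establishing the closed form of $\Theta_{\infty}(c,x)$ and the monotonicity/non-negativity of $F$ and $H$ under the Gaussian initialization and infinite-width limit. Given that lemma, Theorem \ref{theorem:cnn} is a one-line substitution plus reindexing; the only thing to watch is keeping the padding convention (windows indexed from $-(K-1)$) consistent between the kernel expression and the dynamics, and ensuring the $\ve_k^\top\ve_{j+k-1}$ dot-products are interpreted with the same position alignment in both $\Theta_{\infty}$ and $\omega(e_k,x)$.
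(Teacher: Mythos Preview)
Your proposal is correct and matches the paper's own argument essentially verbatim: the paper states that Theorem \ref{theorem:cnn} follows from Lemma \ref{lemma:cnn} by direct substitution into the infinite-width dynamics \eqref{eq:polarity_score_der2}, which is exactly the plan you outline. Your additional remarks about the padding convention and the time-independence of $\omega(e_k,x)$ are accurate refinements but go slightly beyond what the paper bothers to spell out.
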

Theorem \ref{theorem:cnn} indicates that with a sufficiently large $\sigma_v$, the learning dynamics for window $c$ may mainly depend on the linear combination of the weighted learning dynamics of its constituent tokens. Similar analysis can be performed on the label score of the sliding window. 
This may not exactly encode n-grams, which are inherently sensitive to order and can extend beyond their constituent elements. Instead, for each window, it is more akin to the composition model based on vector addition as described in the work of \citet{mitchell-lapata-2009-language}.
The second term in Equation \ref{eq:dynamics_cnn_sliding_window} may not be zero even if $c$ shares no tokens with $x$, suggesting there can be an induced feature bias similar to the one in the MLP model.

When $c$ only shares tokens with either positive or negative instances, regardless of position, the corresponding label score will receive relatively large gains in one direction during updates. This means the CNN model also captures co-occurrence features between tokens and labels. 
Importantly, a single token can also be viewed as a sliding window, padded with additional tokens, thereby leading to conclusions about the trend of label scores that mirror those drawn from the MLP model.

\subsection{SA}
We employ a fundamental self-attention module, analogous to the component found in Transformers. 
The representation of the $i$-th output in the instance will be computed as a weighted sum of token representations as follows,
\vspace{-2mm}
\begin{equation}
    \begin{aligned}
     \vh_i =\sum_{j=1}^{l^{(x)}}  \frac{\alpha_{ij}}{\sqrt{d}}  \mW^e \ve_j,
    \end{aligned}
\vspace{-2mm}
\end{equation}
where $\alpha_{ij}$ is the weight produced by a \textit{softmax} function as follows,
\vspace{-2mm}
\begin{equation}
    \begin{aligned}
      \alpha_{ij} = \frac{\exp(a_{ij})}{\sum_{j'=1}^{l^{(x)}} \exp(a_{ij'})}.
    \end{aligned}
\vspace{-2mm}
\end{equation}

We define the attention score $a_{ij}$ from position $i$ to $j$ as
\vspace{-2mm}
\begin{equation}
    \begin{aligned}
      a_{ij} = \frac{(\mW^e \ve_i+P_i)^\top  (\mW^e \ve_j+P_j)}{d},
    \end{aligned}
\vspace{-2mm}
\end{equation}
where $P_i$ ($P_j$) is the positional embedding at position $i$ ($j$) and will be fixed during training.
The instance label score will be computed as
\vspace{-2mm}
\begin{equation}
    \begin{aligned}
     s = \vv^\top \sum_{i=1}^{l^{(x)}}  \vh_i=\sum_{i=1}^{l^{(x)}} \sum_{j=1}^{l^{(x)}}  \frac{\alpha_{ij}}{\sqrt{d}}  \vv^\top \mW^e \ve_j,
    \end{aligned}
\vspace{-2mm}
\end{equation}
which can be viewed as the weighted sum of token-level label scores if we define such a score for each token $e$ as $s_e=\frac{1}{\sqrt{d}}  \vv^\top \mW^e \ve$.
We consider the case where the test input is also simply a token $e$.
\begin{lemma}
\label{lemma:sa}
When $d \to \infty$, the NTK between the token $e$ and the instance $x$  will converge to $\Theta_{\infty}(e, x)$, which obeys
\vspace{-2mm}
\begin{equation}
    \begin{aligned}
     \Theta_{\infty}(e, x) &\! \approx \! (\sigma^2_{e} \!+\! \sigma^2_{v}) \sum_{i=1}^{l^{(x)}} \! \sum_{j=1}^{l^{(x)}} \!\mathbb{E} (\alpha_{ij})  \ve^\top \ve_j,
    \end{aligned}
\end{equation}
where $\mathbb{E} (\alpha_{ij})$ is the expectation of $\alpha_{ij}$.     
\end{lemma}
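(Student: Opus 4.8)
The plan is to compute the NTK $\Theta_{\infty}(e,x) = \nabla_{\theta} s_e^\top \nabla_{\theta} s$ directly from the definition, summing the contributions of each parameter group ($\vv$, $\mW^e$, and the query/key weights hidden inside $a_{ij}$), and then take $d\to\infty$ so that each quantity concentrates on its expectation at initialization. Recall the token-level score is $s_e = \frac{1}{\sqrt{d}}\vv^\top \mW^e \ve$ and the instance score is $s = \sum_i \sum_j \frac{\alpha_{ij}}{\sqrt{d}} \vv^\top \mW^e \ve_j$. First I would differentiate $s_e$: the gradient with respect to $\vv$ is $\frac{1}{\sqrt d}\mW^e\ve$ and with respect to $\mW^e$ is $\frac{1}{\sqrt d}\vv\ve^\top$; since $s_e$ is a plain bilinear form with no softmax, these are the only nonzero pieces. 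For $s$, differentiating also hits the attention weights $\alpha_{ij}$, producing an extra term involving $\partial \alpha_{ij}/\partial (\text{query/key params})$; the key observation is that the positional-embedding-driven query/key weights enter $s$ only through $\alpha_{ij}$, which lies in $(0,1)$ and whose gradient carries factors of $1/d$ from the definition $a_{ij} = \frac{(\mW^e\ve_i+P_i)^\top(\mW^e\ve_j+P_j)}{d}$, so that contribution is lower-order and vanishes in the limit (this is the source of the ``$\approx$'').

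Next I would assemble the surviving inner products. The $\vv$-block gives $\sum_i\sum_j \frac{\alpha_{ij}}{d}(\mW^e\ve)^\top(\mW^e\ve_j)$, and the $\mW^e$-block gives $\sum_i\sum_j \frac{\alpha_{ij}}{d}(\vv^\top\vv)(\ve^\top\ve_j)$ (using $\langle \vv\ve^\top, \vv\ve_j^\top\rangle = \|\vv\|^2 \ve^\top\ve_j$). As $d\to\infty$, by the law of large numbers $\frac{1}{d}\vv^\top\vv \to \sigma_v^2$, and $\frac{1}{d}(\mW^e\ve)^\top(\mW^e\ve_j) \to \sigma_e^2\, \ve^\top\ve_j$ since the columns of $\mW^e$ are independent with variance $\sigma_e^2$ (the cross term when $e\neq e_j$ concentrates at $0$, and when $e=e_j$ it concentrates at $\sigma_e^2$). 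The attention weights $\alpha_{ij}$ likewise concentrate — in the infinite-width limit $a_{ij}\to$ its expectation and hence $\alpha_{ij}\to\mathbb{E}(\alpha_{ij})$ — but more carefully one keeps $\mathbb{E}(\alpha_{ij})$ as the stated coefficient. Combining, $\Theta_\infty(e,x) \approx (\sigma_e^2+\sigma_v^2)\sum_{i=1}^{l^{(x)}}\sum_{j=1}^{l^{(x)}}\mathbb{E}(\alpha_{ij})\,\ve^\top\ve_j$, as claimed.

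The main obstacle I anticipate is rigorously controlling the softmax: $\alpha_{ij}$ depends on $\mW^e$ (through $a_{ij}$), so it is not independent of the $\ve^\top\ve_j$ factors it multiplies, and one must argue that the joint concentration still factorizes in the limit — i.e., that $\mathbb{E}[\alpha_{ij}(\mW^e\ve)^\top(\mW^e\ve_j)/d] \to \mathbb{E}(\alpha_{ij})\cdot \sigma_e^2\ve^\top\ve_j$. The saving grace is the $1/d$ normalization inside $a_{ij}$: it forces $a_{ij}$ to be an average of $d$ weakly-dependent terms, so $\alpha_{ij}$ becomes asymptotically deterministic and its correlation with any single coordinate-level fluctuation is negligible. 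I would also need to confirm that the gradients flowing through the softmax into the query/key parameters genuinely contribute at order $o(1)$ rather than $\Theta(1)$ — this is where the ``$\approx$'' in the statement is doing real work, and a fully rigorous treatment would bound that term explicitly (deferred, presumably, to Appendix \ref{section:lemma_theorem_proof}).
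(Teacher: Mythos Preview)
Your proposal is correct and follows essentially the same approach as the paper's appendix proof: compute the gradients of $s_e$ and $s$, take inner products, note that the softmax-derived piece is lower order, and let the surviving $\vv$- and $\mW^e$-blocks concentrate to $\sigma_e^2\ve^\top\ve_j$ and $\sigma_v^2\ve^\top\ve_j$ respectively. Two minor clarifications worth aligning with the paper: (i) in this simplified SA model there are no separate query/key weight matrices---$a_{ij}$ is built directly from $\mW^e$ and fixed positional embeddings---so the softmax gradient flows only back into $\mW^e$; (ii) the paper dispatches the dependence issue you anticipate by \emph{assuming} the attention weights are independent of $\mW^e$ and $\vv$, and then drops the softmax-through-$\mW^e$ term because its prefactor $\frac{\vv^\top\mW^e\ve_j}{d}\to 0$ as $d\to\infty$, rather than by a decorrelation argument.
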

\begin{theorem}
\label{theorem:sa}
The learning dynamics of the label score of a token $e$ obey
\vspace{-2mm}
\begin{equation}
    \begin{aligned}
         \dot{s}^e  &= \frac{ \rho }{m} \sum_{(x, y) \in \mathcal{D}} \!\! g(- ys^{(x)})    y \omega(e, x) ,
    \end{aligned}
    \label{eq:dynamics_sa}
    \vspace{-1mm}
\end{equation}
where $\omega(e, x)=\sum_{i=1}^{l^{(x)}} \! \sum_{j=1}^{l^{(x)}} \!\mathbb{E} (\alpha_{ij})  \ve^\top \ve_j$ and $\rho=\sigma^2_{e} \!+\! \sigma^2_{v}$.    
\end{theorem}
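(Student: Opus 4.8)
The plan is to obtain Theorem~\ref{theorem:sa} as a direct specialization of the infinite-width learning dynamics in Equation~\ref{eq:polarity_score_der2} to the case where the test input is a single token, combined with the converged NTK computed in Lemma~\ref{lemma:sa}. Concretely, I would first recall that under the infinite-width regularization the NTK is frozen at its initialization value, so for any test input $x'$ the label score obeys $\dot{s}' = \frac{1}{m}\sum_{(x,y)\in\mathcal{D}} g(-ys^{(x)})\, y\, \Theta_{\infty}(x',x)$. Then I would take $x'=e$: feeding a single token through the self-attention block forces $\alpha_{11}=1$, so $s^e = \frac{1}{\sqrt d}\vv^\top\mW^e\ve$, which matches the per-token label score defined above the lemma. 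Hence $\dot s^e$ is exactly $\dot s'$ evaluated at $x'=e$, and there is no mismatch between ``token label score'' and ``model output on a one-token input''.

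The second step is the substitution itself. Plugging the expression from Lemma~\ref{lemma:sa},
\[
\Theta_{\infty}(e,x) \approx (\sigma_e^2+\sigma_v^2)\sum_{i=1}^{l^{(x)}}\sum_{j=1}^{l^{(x)}} \mathbb{E}(\alpha_{ij})\,\ve^\top\ve_j,
\]
into the frozen-NTK dynamics and pulling the constant $\rho = \sigma_e^2+\sigma_v^2$ out of the sum over $\mathcal{D}$ yields precisely Equation~\ref{eq:dynamics_sa} with $\omega(e,x)=\sum_{i}\sum_{j}\mathbb{E}(\alpha_{ij})\,\ve^\top\ve_j$. I would then justify that $\omega(e,x)$ is time-independent: $\ve^\top\ve_j$ is a fixed data quantity, and $\mathbb{E}(\alpha_{ij})$ is the expectation of the attention weight under the \emph{initialization} distribution of the parameters --- it appears only because the NTK is evaluated at $t=0$ and stays constant thereafter --- so it carries no $t$-dependence. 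As in the MLP and CNN cases, I would flag that this is a characterization of the right-hand side only: $s^{(x)}$ inside $g(-ys^{(x)})$ still evolves, so the ODE is not integrated in closed form, but the sign/trend analysis of $y\,\omega(e,x)$ goes through verbatim.

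Essentially all of the technical difficulty has been pushed into Lemma~\ref{lemma:sa}, so the theorem is a corollary and the ``hard part'' lives upstream. For completeness I would indicate what that upstream argument requires: (i) decomposing $\nabla_{\theta} s$ into the contributions of $\mW^e$, $\vv$, and the attention-logit parameters, and showing the NTK splits accordingly; (ii) using the Gaussian initialization and $d\to\infty$ to replace $\frac{1}{d}(\mW^e\ve_i+P_i)^\top(\mW^e\ve_j+P_j)$ by its deterministic limit --- the positional cross-terms being $O(1/\sqrt d)$ and vanishing --- so that each $\alpha_{ij}$, and hence $\mathbb{E}(\alpha_{ij})$, becomes a well-defined constant fixed at initialization; and (iii) arguing that the gradient path through the softmax logits contributes only a lower-order term, which is exactly what the ``$\approx$'' in Lemma~\ref{lemma:sa} absorbs, leaving the value/embedding paths that produce the factor $\sigma_e^2+\sigma_v^2$. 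The main obstacle is really item (iii) --- controlling the attention-logit gradient path uniformly in $d$ --- but for Theorem~\ref{theorem:sa} proper the work is pure bookkeeping on top of Lemma~\ref{lemma:sa}.
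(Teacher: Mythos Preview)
Your proposal is correct and follows essentially the same route as the paper: Theorem~\ref{theorem:sa} is obtained by specializing the general infinite-width dynamics (Equation~\ref{eq:polarity_score_der2}) to $x'=e$ and substituting the converged NTK from Lemma~\ref{lemma:sa}, with the technical content residing upstream in the lemma. Your sketch of what Lemma~\ref{lemma:sa} requires --- in particular that the attention-logit gradient path is suppressed by the vanishing factor $\frac{1}{d}\vv^\top\mW^e\ve_j\to 0$ --- matches the paper's appendix argument as well.
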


Theorem \ref{theorem:sa} shows the learning dynamics of token e's label score also depends on the weighted sum of the frequencies of $e$ appearing in $x$. The learning dynamics of a single token's label score will likely resemble that in the MLP model, capturing the co-occurrence features between tokens and labels despite the weights.  
This model may not experience an induced bias, compared to the MLP model as discussed in Theorem \ref{theorem:mlp_ld_equation}. This will be further explored in our experiments.
\begin{table*}[]
\centering
\scalebox{0.62}{
\begin{tabular}{llllll}
\toprule
\textbf{Model}                & \textbf{Feature}                        & \textbf{Bias}  & \textbf{Activation}    & \textbf{Definition}    & \textbf{NTK}    \\
\midrule
{MLP} 
&{token-label }    & Yes   & ReLU          & {$s  =  \frac{\vv^\top}{\sqrt{d}}\sum_{j=1}^{l^{(x)}}  \boldsymbol{\phi}( \mW\frac{1}{\sqrt{d}} \mW^{e} \ve_j)$}     & $\Theta_{\infty}(e, x) = \rho \sum_{j=1}^{l^{(x)}} \ve^\top \ve_j +\sum_{j=1}^{l^{(x)}}\mu$     \\
                     \midrule
{CNN} & token-label            & Yes   & ReLU          &   {$s  = \frac{\vv^\top}{\sqrt{d}}\sum_{j=-(K-1)}^{l^{(x)}} \boldsymbol{\phi} (\vc_j)$}     & $\Theta_{\infty}(c, x) = \sum_{j} F[\omega_c(c, c_j)]+\rho \sum_{k=1}^K \!\sum_{j} \!H[\omega_c(c, c_j)]  \ve^\top_{k} \ve_{j+k-1}$     \\
                     \midrule
SA                   & token-label                      & No    & -             &  $s_i =\frac{\vv^\top}{\sqrt{d}} \sum_{j=1}^{l^{(x)}} \alpha_{ij}  \mW^e \ve_j$     & $\Theta_{\infty}(e, x) \! \approx \! \rho \sum_{i=1}^{l^{(x)}} \! \sum_{j=1}^{l^{(x)}} \!\mathbb{E} (\alpha_{ij})  \ve^\top \ve_j$     \\
\midrule
MV            & bigram-label                            & No    & -             & $s = \vv^\top \sum_j \frac{1}{d\sqrt{d}}\mM(\ve_j) \mW^e \ve_{j+1}$ & $\Theta_{\infty}(e_a e_b, x)\!=\!\rho \sum_j  \ve_j^\top   \ve_a  \ve_{j+1}^\top \ve_b$     \\
\midrule
L-RNN           & token-label-position                  & -     & -             & $s=\frac{1}{d}\sum_{j=1}^T \vv^\top(\frac{\mW^h}{\sqrt{d}})^{T-j} \mW \mW^e \ve_j$                  & $\Theta_{\infty}(e, k, x) \! = \!  \rho(k) \ve^\top \ve_{l^{(x)}-k}$    \\
\bottomrule
\end{tabular}
}
\vspace{-2mm}
\caption{Co-occurrence features captured by target models. ``$k$'' in L-RNN refers to the distance from the last tokens. $\rho$ ($\mu$) refers to the non-negative coefficient determined at initialization for each model. $e$ and $x$ refer to a token and an instance, respectively. ``-'' means \textit{not applicable}.}
\label{tab:knowledge_cons_model}
\vspace{-5mm}
\end{table*}

\subsection{MV}
We consider the matrix-vector representation as applied in adjective-noun composition \cite{baroni-zamparelli-2010-nouns} and recursive neural networks \cite{socher-etal-2012-semantic}. It models each word pair through matrix-vector multiplication. The label score of an instance is defined as
\vspace{-2mm}
\begin{equation}
    \begin{aligned}
      s = \vv^\top \sum_j \frac{1}{d\sqrt{d}}\mM(\ve_j) \mW^e \ve_{j+1},
    \end{aligned}
    \vspace{-2mm}
\end{equation}
where $\mM(\ve_j)=\text{diag}(\mW \mW^e\ve_j)$ ($\text{diag}$ converts a vector into a diagonal matrix.) and $j=1, 2, \dots, l^{(x)}-1$.

\begin{lemma}
\label{lemma:mv}
 Given a bigram consisting of two tokens $e_a e_b$, with the infinite network width the NTK will converge to 
 \vspace{-2mm}
\begin{equation}
    \begin{aligned}
    \Theta_{\infty}(e_a e_b, x)\!=\!(\sigma^2_e  \!+\! 3 \sigma^2_v  )\sigma^2_e \sigma^2_w \sum_j  \ve_j^\top   \ve_a  \ve_{j+1}^\top \ve_b.
    \end{aligned}
    \vspace{-1mm}
\end{equation}   
\end{lemma}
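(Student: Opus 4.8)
The plan is to compute the NTK $\Theta_{\infty}(e_a e_b, x) = \nabla_\theta f(e_a e_b)^\top \nabla_\theta f(x)$ in the infinite-width limit by summing the contributions of the gradients with respect to each parameter group: the final-layer vector $\vv$, the hidden weight $\mW$ (appearing through $\mM(\ve_j) = \text{diag}(\mW\mW^e \ve_j)$), and the embedding weight $\mW^e$ (appearing both in $\mM(\ve_j)$ and in the factor $\mW^e \ve_{j+1}$). The test "input" here is a single bigram $e_a e_b$, so $f(e_a e_b) = \vv^\top \frac{1}{d\sqrt d}\,\text{diag}(\mW\mW^e \ve_a)\,\mW^e \ve_b = \frac{1}{d\sqrt d}\sum_{p} \vv_p (\mW\mW^e\ve_a)_p (\mW^e\ve_b)_p$, a triple product that is symmetric enough to make the gradient bookkeeping tractable.

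First I would write $f$ explicitly in index notation for both the bigram and the instance $x$ (where $x$ contributes a sum over $j$ of such triple products with $\ve_a,\ve_b$ replaced by $\ve_j,\ve_{j+1}$), then differentiate term-by-term. For $\partial/\partial \vv_p$ the contribution to the kernel is $\frac{1}{d^3}\sum_j \sum_p (\mW\mW^e\ve_a)_p(\mW^e\ve_b)_p (\mW\mW^e\ve_j)_p(\mW^e\ve_{j+1})_p$; for $\partial/\partial \mW^e_{qr}$ and $\partial/\partial \mW_{qr}$ one gets analogous quartic-in-Gaussian sums. Then I would take expectations over the Gaussian initialization using Wick/Isserlis: each surviving term pairs up the random matrix entries, and the index structure forces Kronecker deltas that collapse to $\ve_a^\top \ve_j$ and $\ve_b^\top \ve_{j+1}$ (one-hot dot products), since cross terms with mismatched token indices vanish in expectation. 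Careful tracking of the $1/d$, $1/\sqrt d$ normalizations and the number of free summation indices is what yields a finite limit: terms scaling like $d^{-1}$ or faster die, and exactly the $O(1)$ terms remain, producing the coefficient $(\sigma_e^2 + 3\sigma_v^2)\sigma_e^2\sigma_w^2$ — the "$3$" coming from the three distinct parameter groups ($\vv$ plus the two occurrences of the embedding/hidden weights) each contributing a $\sigma_v^2$-weighted piece while the $\mW$ and $\mW^e$ gradients add the $\sigma_e^2$ piece, all multiplied by the common $\sigma_e^2\sigma_w^2$ variance factor from the remaining fixed matrices.

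The main obstacle I expect is the combinatorics of the Gaussian moment computation: since $\mM(\ve_j)$ nests $\mW$ inside $\mW^e$, the gradient with respect to $\mW^e$ has two additive pieces (one from the diagonal matrix, one from the trailing $\mW^e\ve_{j+1}$), and the product of two such gradients (bigram side $\times$ instance side) expands into several quartic monomials in independent Gaussians; one must identify which pairings survive, confirm that "diagonal $\times$ trailing" cross pairings contribute correctly, and check that subleading-in-$d$ contractions truly vanish rather than conspiring to survive. A secondary subtlety is that $\mW\mW^e\ve_a$ is itself a sum of $d$ products, so there is an extra internal contraction whose scaling with $d$ must be handled before taking expectations. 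I would organize the proof by first establishing the scaling lemma (which index patterns give $O(1)$), then evaluating each of the three parameter-group contributions separately, and finally summing them; the one-hot identity $\ve_j^\top \ve_a \in \{0,1\}$ makes the final collapse clean once the Gaussian expectations are in hand. This mirrors the structure used for Lemmas~\ref{lemma:mlp}–\ref{lemma:sa}, so I would reuse that machinery and only spell out the new nested-diagonal contractions in detail.
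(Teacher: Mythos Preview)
Your approach is essentially the same as the paper's: compute the gradients with respect to each parameter group ($\vv$, $\mW$, $\mW^e$), form the NTK as the sum of Frobenius inner products, and take the infinite-width limit term by term using Gaussian-moment identities. The paper writes these out in matrix notation rather than index notation, and uses the law of large numbers rather than invoking Wick explicitly, but the content is identical.

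One point to fix: your attribution of the coefficient $(\sigma_e^2 + 3\sigma_v^2)\sigma_e^2\sigma_w^2$ is inverted. Differentiating with respect to $\vv$ removes $\vv$ from the expression, so the $\partial_\vv$-contribution carries no $\sigma_v^2$ factor; it is precisely this term that produces the $\sigma_e^2\cdot\sigma_e^2\sigma_w^2 = \sigma_e^4\sigma_w^2$ piece. The factor $3\sigma_v^2$ comes from the $\partial_\mW$ block (one copy) together with the two \emph{diagonal} self-pairings of the $\partial_{\mW^e}$ block (the ``$\mM$-part $\times$ $\mM$-part'' and the ``trailing-part $\times$ trailing-part'' contractions). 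The two ``$\mM$-part $\times$ trailing-part'' cross pairings in the $\mW^e$ gradient are exactly the terms that vanish in the limit, which is worth stating explicitly since you flagged them as the main obstacle. With that bookkeeping corrected, your plan goes through as written.
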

It is worth highlighting that the interaction $\ve_j^\top   \ve_a  \ve_{j+1}^\top \ve_b$ is different from the interaction resulting from the aforementioned models.
When $e_a \equiv e_j$ and $e_b \equiv e_{j+1}$ (i.e., $e_ae_b \equiv e_je_{j+1}$), the NTK will gain a relatively large value, implying the ability to capture co-occurrence knowledge between bigrams and labels. 
\begin{theorem}
\label{theorem:mv}
The dynamics of the label score of the test bigram $e_a e_b$ obey
\vspace{-2mm}
\begin{equation}
    \begin{aligned}
         \dot{s}^{ab}  &= \frac{ \rho }{m} \sum_{(x, y) \in \mathcal{D}} \!\! g(- ys^{(x)})    y \omega(e_ae_b, x),
    \end{aligned}
    \label{eq:dynamics_mv}
    \vspace{-1mm}
\end{equation}
where $\rho =(\sigma^2_e  \!+\! 3 \sigma^2_v  )\sigma^2_e \sigma^2_w $ and $\omega(e_ae_b, x)=\sum_j  \ve_j^\top   \ve_a  \ve_{j+1}^\top \ve_b$.    
\end{theorem}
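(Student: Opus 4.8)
The plan is to derive Theorem~\ref{theorem:mv} from Lemma~\ref{lemma:mv} in exactly the same way that Theorems~\ref{theorem:mlp_ld_equation}, \ref{theorem:cnn}, and \ref{theorem:sa} follow from their respective lemmas: by substituting the converged NTK into the general infinite-width learning-dynamics equation~\eqref{eq:polarity_score_der2}. Concretely, I would take the test input $x'$ to be the bigram $e_a e_b$, write $s^{ab}$ for its label score, and plug $\Theta_{\infty}(e_a e_b, x)$ from Lemma~\ref{lemma:mv} into $\dot{s}^{ab} = \frac{1}{m}\sum_{(x,y)\in\mathcal{D}} g(-ys^{(x)})\, y\, \Theta_{\infty}(e_a e_b, x)$.

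The key step is then to pull the initialization-determined constant $(\sigma^2_e + 3\sigma^2_v)\sigma^2_e\sigma^2_w$ out of the sum over the dataset, since by Lemma~\ref{lemma:mv} this factor is independent of $x$; defining $\rho = (\sigma^2_e + 3\sigma^2_v)\sigma^2_e\sigma^2_w$ and $\omega(e_a e_b, x) = \sum_j \ve_j^\top \ve_a\, \ve_{j+1}^\top \ve_b$ yields precisely Equation~\eqref{eq:dynamics_mv}. I would also note, as in the discussion following Theorem~\ref{theorem:sa}, that $\omega(e_a e_b, x)$ depends only on the training data (it counts occurrences of the ordered bigram $e_a e_b$ inside $x$) and is therefore constant over the course of training, which is what licenses treating the only time-dependence in the right-hand side as residing in the $g(-ys^{(x)})$ factors.

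The one substantive point that needs care — and the part I would treat as the main obstacle — is justifying that the infinite-width dynamics~\eqref{eq:polarity_score_der2} actually applies to the MV model, i.e., that the NTK converges and stays asymptotically constant for this multiplication-based architecture. Unlike the MLP, CNN, and SA cases, the MV model involves a product of two width-dependent factors ($\mM(\ve_j)$ is itself built from $\mW\mW^e\ve_j$), and the $\frac{1}{d\sqrt{d}}$ normalization is chosen precisely so that the relevant quantities have finite nonzero limits; I would appeal to the general result of \citet{pmlr-v139-yang21f} that NTK convergence holds for randomly initialized networks of essentially any architecture under the stated Gaussian initialization, which is exactly the justification invoked earlier in the paper for the other models. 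Given that and Lemma~\ref{lemma:mv}, the rest of the theorem is an immediate substitution, so the proof is short; the only other thing worth spelling out is that, as with the sigmoid nonlinearity in Theorem~\ref{theorem:mlp_ld_equation}, no closed-form solution is expected, but the sign of the contribution of each training instance is governed by $y\,\omega(e_a e_b, x)$, so a bigram co-occurring only with one label receives monotone gains in the corresponding direction.
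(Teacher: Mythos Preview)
Your proposal is correct and matches the paper's approach exactly: the paper obtains Theorem~\ref{theorem:mv} by substituting the converged NTK from Lemma~\ref{lemma:mv} into the general infinite-width dynamics in Equation~\eqref{eq:polarity_score_der2}, just as it does for the MLP, CNN, and SA cases (cf.\ the appendix remark ``With Lemma~\ref{lemma:mlp} and Equation~\ref{eq:polarity_score_der2}, we can get Theorem~\ref{theorem:mlp_ld_equation}''). Your additional care about justifying NTK convergence for the multiplicative architecture via \citet{pmlr-v139-yang21f} is appropriate and consistent with the paper's stated framework.
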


Here, $\omega(e_ae_b, x)$ can be viewed as the frequency of bigram $e_a e_b$ seen in instance $x$.
Specifically, when a bigram co-occurs with a positive (negative) label, it will receive a positive (negative) gain during gradient descent.

We provide the analysis of the L-RNN model in Appendix \ref{section:lemma_theorem_proof}. 
The features captured by different architectures are listed in Table \ref{tab:knowledge_cons_model}.

\section{Experiments}
We conduct experiments to verify our aforementioned analysis in the following aspects: a) verify the features acquired by our models; b) explore factors that may affect feature extraction; c) examine the limitations of those models.

\paragraph{Datasets}
We consider the following datasets: \textbf{SST}, instances with ``positive'' and ``negative'' labels are extracted from the original Stanford Sentiment Treebank dataset \citep{socher2013recursive}. We also extract instances with sub-phrases (along with labels) under the name ``SSTwsub''. \textbf{Agnews}, the AG-news dataset, which consists of titles and description fields of news articles from 4 classes, ``World'', ``Sports'', ``Business'' and ``Sci/Tech''. \textbf{IMDB}, the binary IMDB dataset \citep{maas-EtAl:2011:ACL-HLT2011} which consists of movie reviews with relatively longer texts. The statistics are listed in Table \ref{tab:ntk_data_statistics}.
\begin{table}[t!]
\centering
\scalebox{0.85}{
\begin{tabular}{lrrrrr}
\toprule
 \textbf{Data} &  \textbf{Train}    &  \textbf{Valid} &  \textbf{Test} &  \textbf{|V|\ \ \ } &  \textbf{Len} \\
\midrule
 SST           &  6,920       &  872      &  1,821       &  16,174          & 18               \\
 
  SSTwsub           &  98,794 &872 &1,821 &17,404 &8             \\
 
 IMDB      &  \textcolor{black}{39,877}       &  \textcolor{black}{5,016}      &  \textcolor{black}{5,107}       &  \textcolor{black}{146,582}                 &    \textcolor{black}{270}
               \\
 Agnews      & \textcolor{black}{110,000}  & \textcolor{black}{10,000} &  \textcolor{black}{7,600}
 & \textcolor{black}{85,568} &  \textcolor{black}{36}
\\
 \bottomrule
\end{tabular}
}
\vspace{-2mm}
\caption{Dataset statistics. ``Train'', ``Valid'', and ``Test'' refer to the training, validation, and test sets, respectively. ``|V|'' refers to the vocabulary size and ``Len'' refers to the average \textcolor{black}{training} instance length. }
\label{tab:ntk_data_statistics}
\vspace{-3mm}
\end{table}

In addition, Penn Tree Bank (PTB) \citep{marcus-etal-1993-building}, WIKITEXT2 (Wiki2), and a Shakespeare dataset are considered for language modeling, a special classification variant\footnote{More details are listed in Appendix \ref{section:more_experimental_results}.}.  

\paragraph{Setup}
\textcolor{black}{We randomly initialize all the parameters with Gaussian distributions. Unless specified otherwise, the variances of the parameters are set as 0.01. While our analysis is based on vanilla gradient descent, training models using SGD optimizers with a small learning rate can be challenging. Therefore, Adagrad \citep{duchi2011adaptive} optimizers\footnote{The Adam \citep{Adam} optimizer is also considered in Appendix \ref{section:more_experimental_results}.} are used in practice. The network width $d$ is set as 64.}
To verify the features learned by the models, we extract corresponding co-occurrence pairs for each model from training data. Specifically, for the MLP, CNN, and SA models, we calculate the {\em token-label} frequencies from the training data. For example, if a token $e$ co-occurs three times more frequently with the positive label (+) than with the negative label (-), i.e., $\frac{freq(e, +)}{freq(e, -)} \geq 3$, we will extract this {\em (e, +)} pair\footnote{The conditions in the theoretical analysis are relaxed in experiments.}.
For the MV and L-RNN models, we calculate {\em bigram-label} and {\em token-label-position} frequencies respectively, and extract co-occurrence pair in a similar way.
For simplicity, tokens (bigrams) co-occurring with the positive/negative label will be referred to as {\em positive/negative tokens (bigrams)}.

\begin{figure*}[t]
\centering
     \begin{subfigure}{0.3\textwidth}
        \centering
        \captionsetup{width=.8\linewidth}
        \includegraphics[scale=0.35]{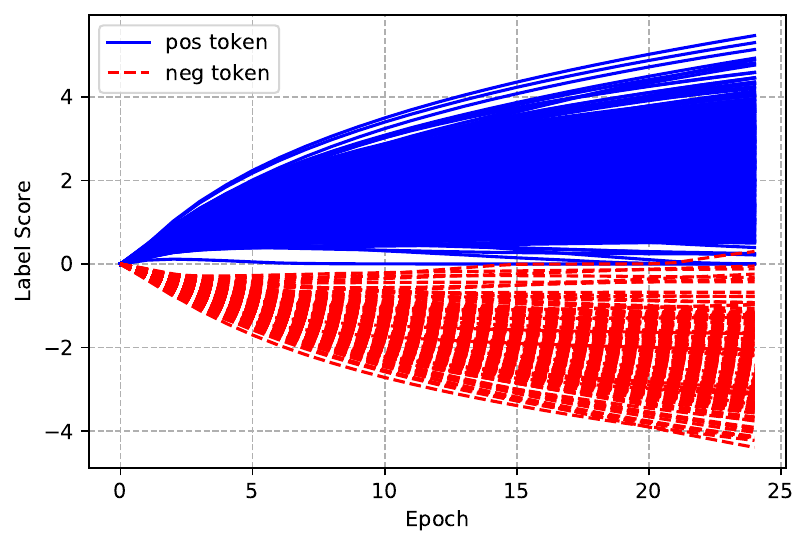}
        \vspace{-7mm}
        \caption{MLP, $\sigma_v=0.1$}
        \label{fig:mlp_pos_neg_polarity_score1}
     \end{subfigure}
    \begin{subfigure}{0.3\textwidth}
        \centering
        \captionsetup{width=.8\linewidth}
        \includegraphics[scale=0.35]{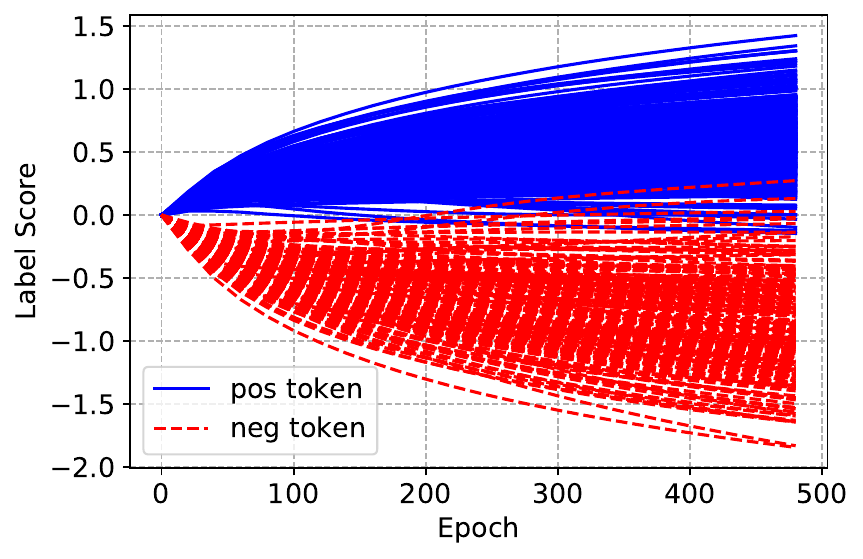}
        \vspace{-7mm}
        \caption{CNN, $\sigma_v=0.1$}
        \label{fig:cnn_pos_neg_polarity_score1}
     \end{subfigure}
    \begin{subfigure}{0.3\textwidth}
        \centering
        \captionsetup{width=.8\linewidth}
        \includegraphics[scale=0.35]{Figures/attn_sst/sst_sa_normed_label_score_d64.pdf}
        \vspace{-7mm}
        \caption{SA, $\sigma_v=0.1$}
        \label{fig:sa_pos_neg_polarity_score1}
     \end{subfigure}
    \begin{subfigure}{0.3\textwidth}
        \centering
        \captionsetup{width=.8\linewidth}
        \includegraphics[scale=0.33]{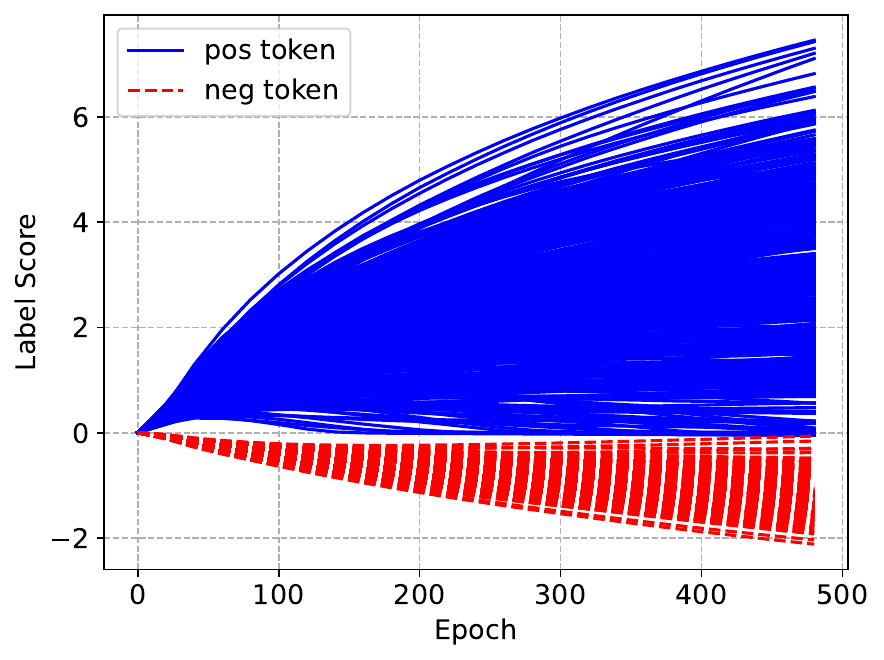}
        \vspace{-7mm}
        \caption{MLP, $\sigma_v=0.001$}
        \label{fig:mlp_small_variance}
     \end{subfigure}
    \begin{subfigure}{0.3\textwidth}
        \centering
        \captionsetup{width=.8\linewidth}
        \includegraphics[scale=0.33]{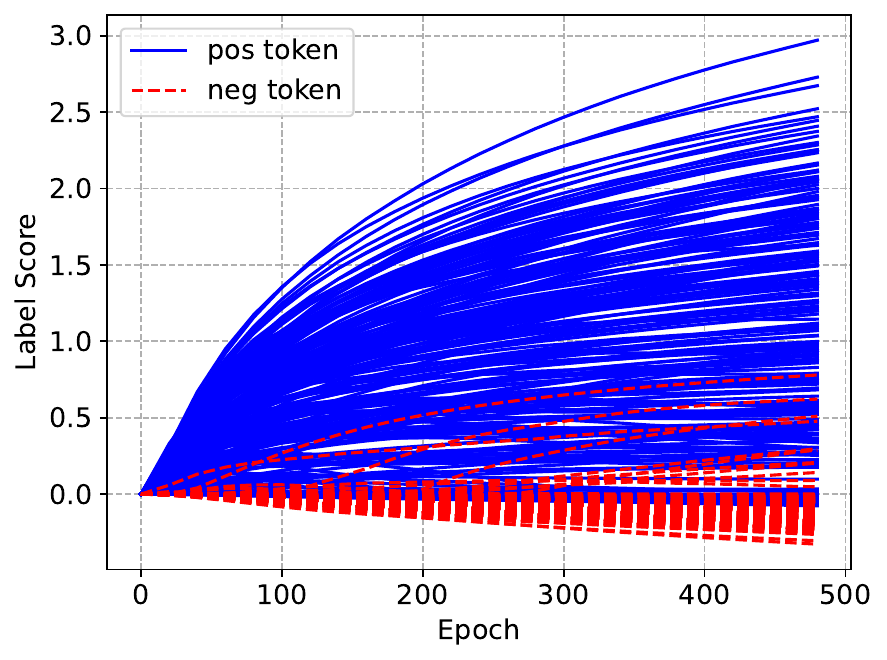}
        \vspace{-7mm}
        \caption{CNN, $\sigma_v=0.001$}
        \label{fig:cnn_small_variance}
     \end{subfigure}
     \begin{subfigure}{0.3\textwidth}
        \centering
        \captionsetup{width=.8\linewidth}
        \includegraphics[scale=0.33]{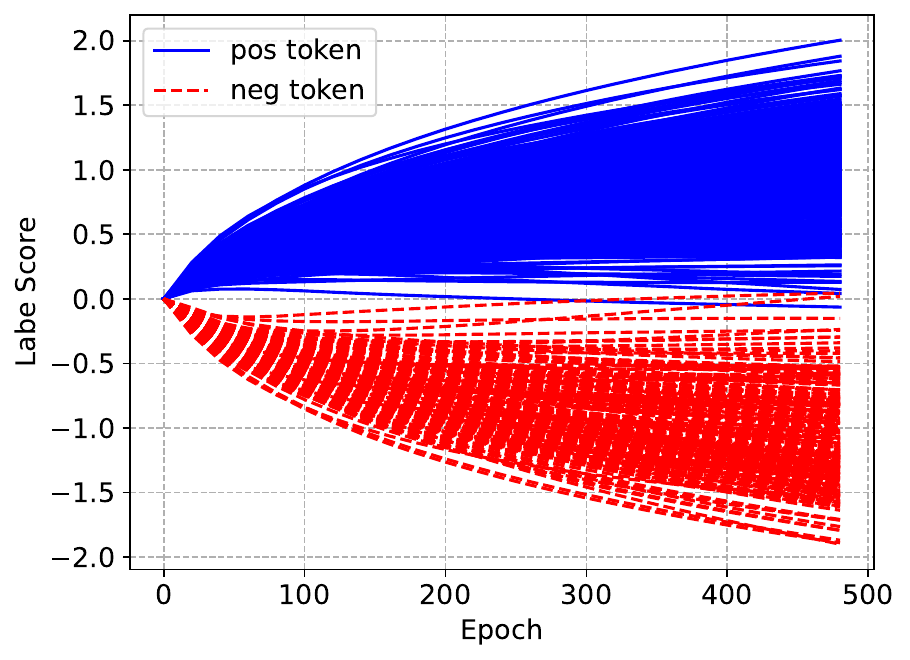}
        \vspace{-7mm}
        \caption{SA, $\sigma_v=0.001$}
        \label{fig:sa_small_variance}
     \end{subfigure}
    \begin{subfigure}{0.3\textwidth}
        \centering
        \captionsetup{width=.8\linewidth}
        \includegraphics[scale=0.33]{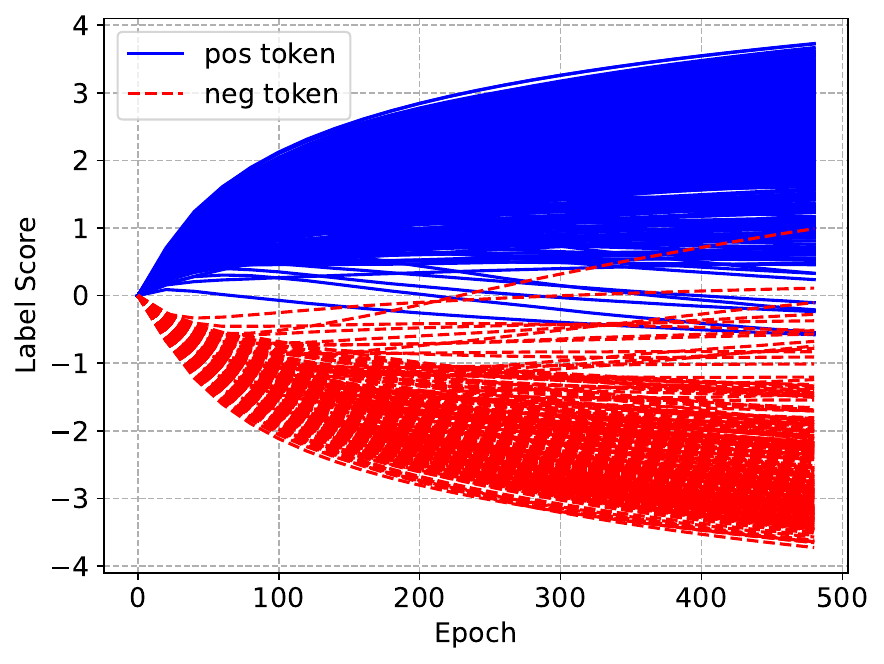}
        \vspace{-7mm}
        \caption{MLP ($\tanh$), $\sigma_v=0.001$}
        \label{fig:mlp_small_variance_tanh}
     \end{subfigure}
    \begin{subfigure}{0.3\textwidth}
        \centering
        \captionsetup{width=.8\linewidth}
        \includegraphics[scale=0.33]{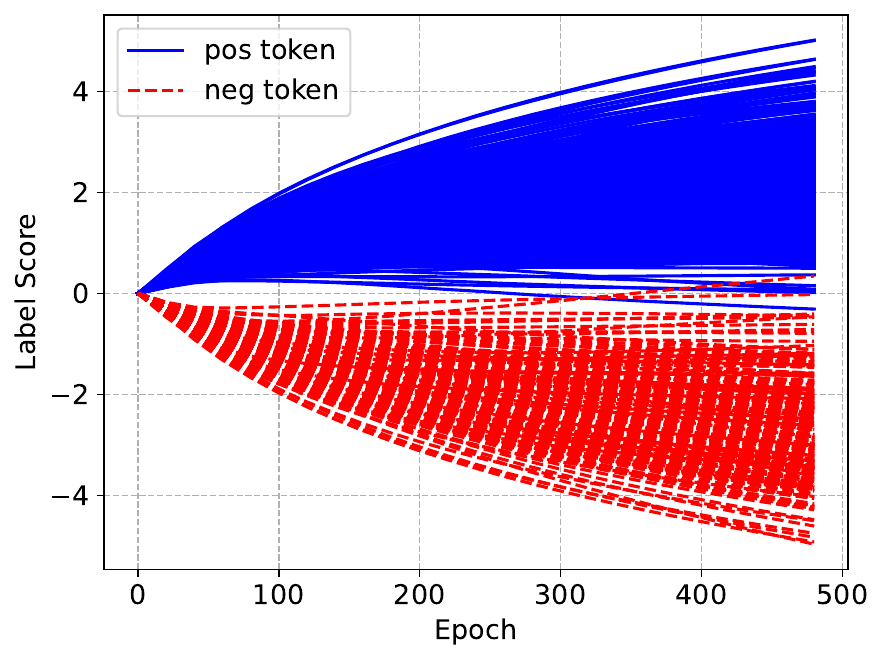}
        \vspace{-7mm}
        \caption{MLP (GeLU), $\sigma_v=0.001$}
        \label{fig:mlp_small_variance_gelu}
     \end{subfigure}
     \begin{subfigure}{0.3\textwidth}
        \centering
        \captionsetup{width=.8\linewidth}
        \includegraphics[scale=0.33]{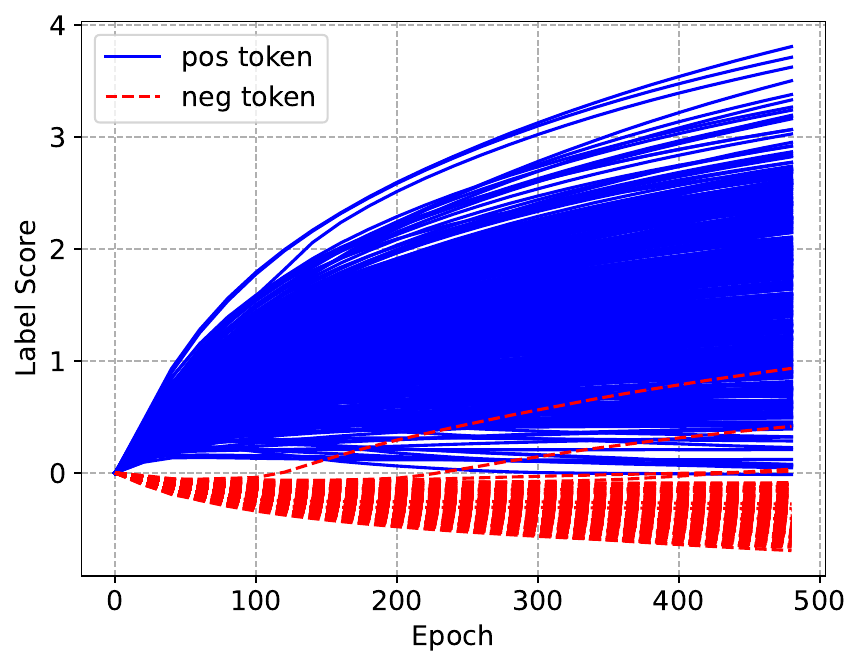}
        \vspace{-7mm}
        \caption{SA+MLP, $\sigma_v=0.001$}
        \label{fig:sa_small_variance_relu}
     \end{subfigure}

     \vspace{-2mm}
    \caption{Evolution of the label scores for the extracted tokens from SST over epochs. ``pos token '' and ``neg token '' refer to ``positive tokens'' and ``negative tokens'' respectively.}
    \vspace{-2mm}
\end{figure*}

 \subsection{Feature Extraction}
 We illustrate the label scores for the extracted co-occurrence pairs to examine the features predicted by our approach.
It can be seen from Figures \ref{fig:mlp_pos_neg_polarity_score1}, \ref{fig:cnn_pos_neg_polarity_score1}, and \ref{fig:sa_pos_neg_polarity_score1} that the label scores for tokens in the extracted co-occurring pairs evolve as expected over epochs for the MLP, CNN, and SA models. The label scores of the tokens co-occurring majorly with the positive label consistently receive positive gains during training, whereas those of the tokens co-occurring majorly with the negative label experience negative gains, thus playing opposite roles in final classification decisions. Similar patterns can be observed on IMDB in Appendix \ref{section:more_experimental_results}. 
We also extract bigrams co-occurring majorly with either the positive or negative label from SSTwsub and calculate their label scores using a trained MV model, which exhibits the capability of capturing the co-occurrence between bigrams and labels as shown in Figure \ref{fig:MV_MMV_label_score}.

\begin{figure}[t]
\centering
     \begin{subfigure}{0.43\textwidth}
        \centering
        \captionsetup{width=.8\linewidth}
        \includegraphics[scale=0.38]{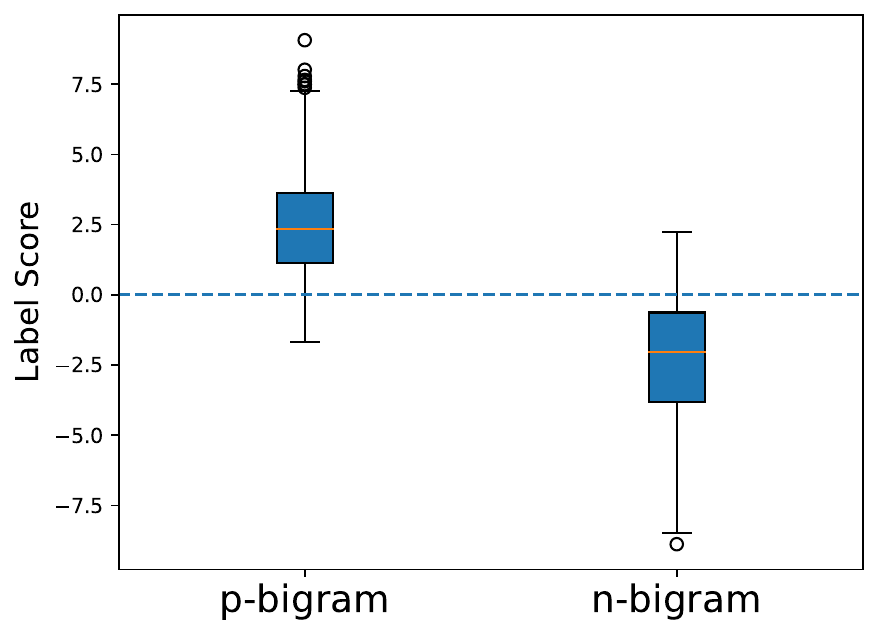}
        \label{fig:mv_pos_neg+label_score}
     \end{subfigure}

     \vspace{-3mm}
     \caption{Distribution of the label scores for extracted bigrams from SSTwsub. ``p'' refers to \textit{positive} and ``n'' refers to \textit{negative}.}
     \label{fig:MV_MMV_label_score}
     \vspace{-3mm}
\end{figure}

%

Our analysis on the binary classification tasks can be extended to the multi-class classification scenario on Agnews of four-class. The label scores for the tokens associated with a specific class would be assigned relatively large scores in the dimension corresponding to the class as shown in Figures \ref{fig:sa_pos_neg_polarity_score_ag1}, \ref{fig:sa_pos_neg_polarity_score_ag2}, \ref{fig:sa_pos_neg_polarity_score_ag3}, and \ref{fig:sa_pos_neg_polarity_score_ag4}.
These observations support our analysis of the feature extraction mechanisms within our target models. 

\begin{figure*}[t]
\centering
     \begin{subfigure}{0.23\textwidth}
        \centering
        \captionsetup{width=.8\linewidth}
        \includegraphics[scale=0.27]{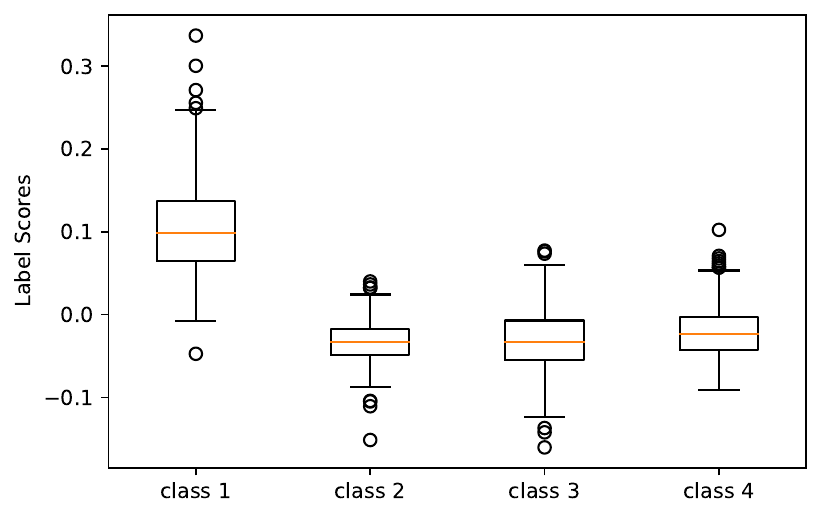}
        \vspace{-3mm}
        \caption{class 1, Agnews}
        \label{fig:sa_pos_neg_polarity_score_ag1}
     \end{subfigure}
     \begin{subfigure}{0.23\textwidth}
        \centering
        \captionsetup{width=.8\linewidth}
        \includegraphics[scale=0.27]{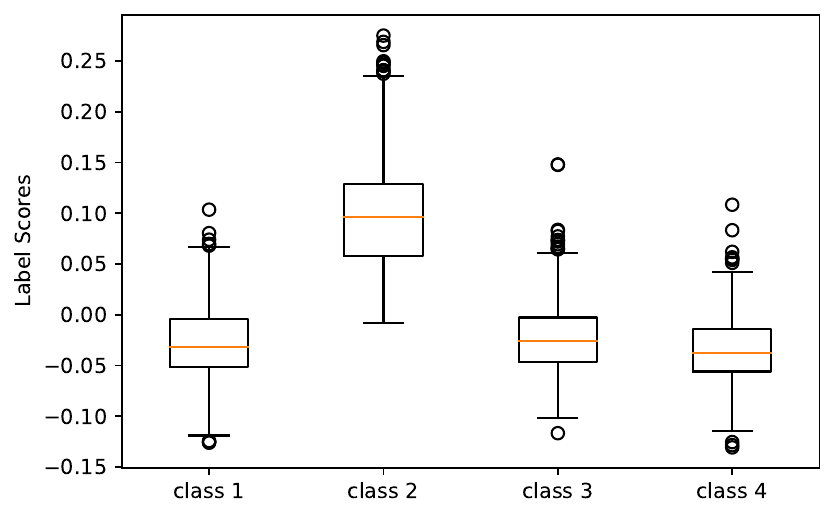}
        \vspace{-3mm}
        \caption{class 2, Agnews}
        \label{fig:sa_pos_neg_polarity_score_ag2}
     \end{subfigure}
    \begin{subfigure}{0.23\textwidth}
        \centering
        \captionsetup{width=.8\linewidth}
        \includegraphics[scale=0.27]{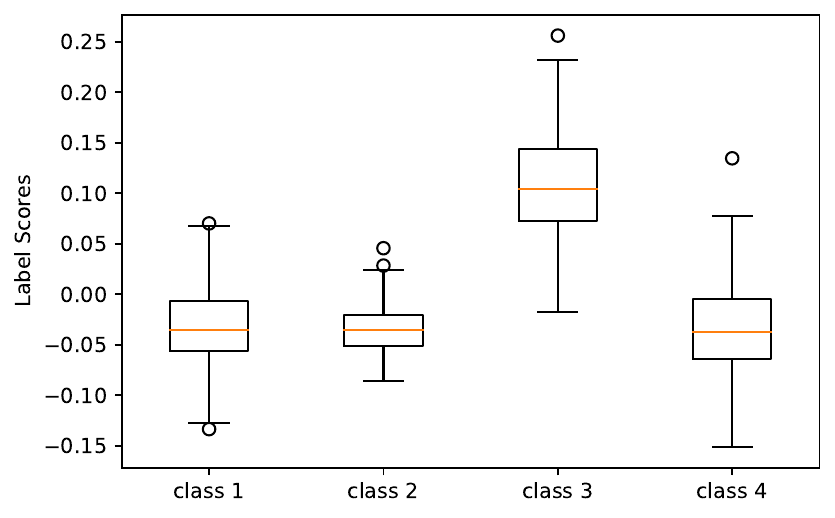}
        \vspace{-3mm}
        \caption{class 3, Agnews}
        \label{fig:sa_pos_neg_polarity_score_ag3}
     \end{subfigure}
     \begin{subfigure}{0.23\textwidth}
        \centering
        \captionsetup{width=.8\linewidth}
        \includegraphics[scale=0.27]{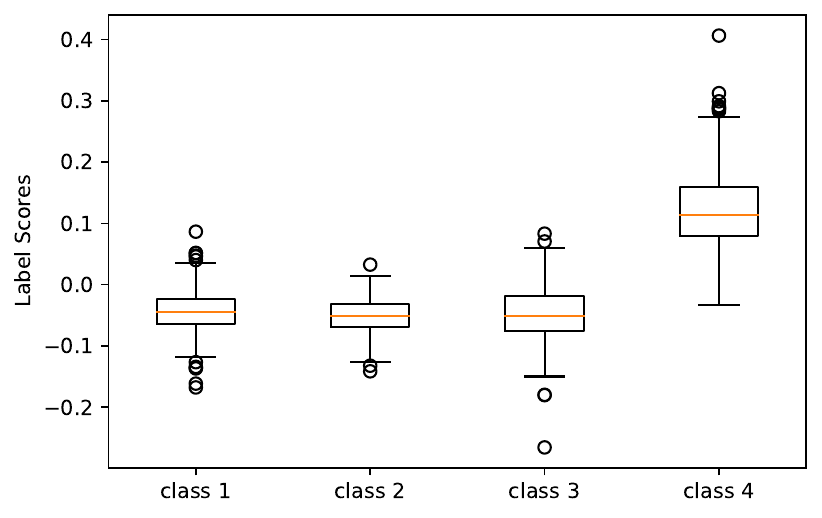}
        \vspace{-3mm}
        \caption{class 4, Agnews}
        \label{fig:sa_pos_neg_polarity_score_ag4}
     \end{subfigure}
     \vspace{-3mm}
     \caption{Label scores for extracted tokens from Agnews, a dataset with four classes. SA model. $d=64$.}
     \label{fig:ag_pos_neg_polarity_score}
\end{figure*}

In addition, we extend our experiments to language modeling tasks, which can be viewed as a variant of multi-class classification tasks, with the label space equivalent to the vocabulary size. 
Interestingly, we observe similar token-label patterns on Transformer-based language models incorporating self-attention modules despite their complexity, in both word and character levels. Particularly, we find that nanoGPT, a light-weight implementation of GPT, can capture the co-occurrence features between context characters and target characters on the character-level Shakespeare dataset, and reflect them in the label scores as shown in Figure \ref{fig:nanoGPT_shakespeare+label_score}. Given a context character, the model's output is more likely to assign higher scores to target characters that predominantly co-occur with this context character in the training data, thereby making those target characters more likely to be predicted. This implies the significance of a large dataset may be (partially) ascribed to rich co-occurrence information between tokens. Further details can be found in Appendix \ref{section:more_experimental_results}.
\begin{figure}[h]
\vspace{-2mm}
    \centering
    \includegraphics[scale=0.49]{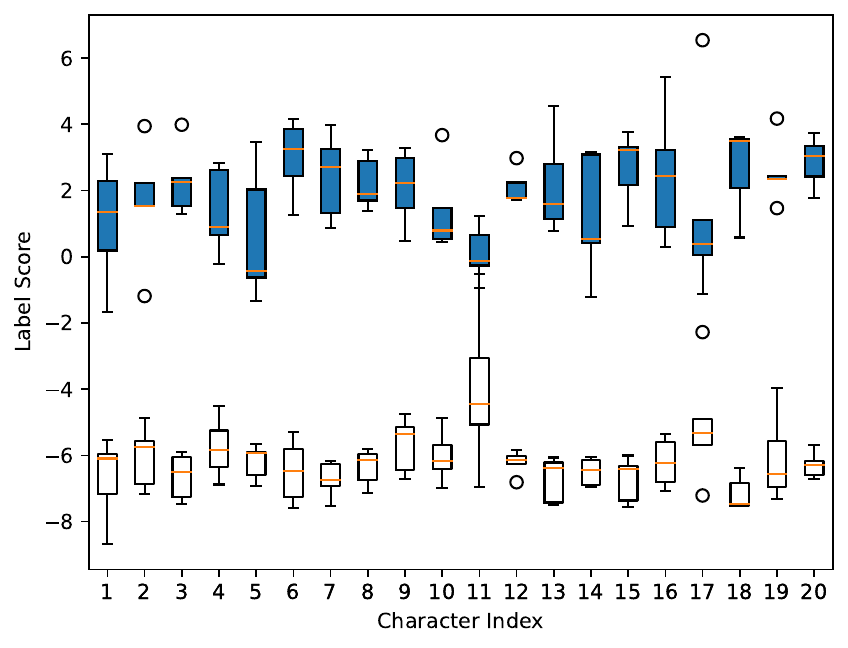}
    \vspace{-3mm}
    \caption{Distribution of the label scores for target characters majorly (blue) and rarely co-occurring with each extracted context character. nanoGPT. Shakespeare Dataset. }
    \label{fig:nanoGPT_shakespeare+label_score}
    \vspace{-5mm}
\end{figure}

 \paragraph{Induced Bias}
Our approach also indicates that factors such as activation and initial weight variances could affect feature extraction. 
 We downscale the variances for the final layer weight vectors at initialization and compare the learning curves of the extracted tokens' label scores from models with different activations. As can be seen from Figures \ref{fig:mlp_small_variance} and \ref{fig:cnn_small_variance}, a smaller initialization of the final layer weight variance can lead to a large feature bias, rendering negative tokens less significant than positive ones in the MLP and CNN models. This may not be a desirable situation, as Table \ref{tab:performance_variation_activation} suggests a performance decline for the MLP model with \textit{ReLU}.
 Furthermore, we compare other activation functions such as $\tanh$, \textit{GeLU}, and \textit{SiLU}, which are alternatives to \textit{ReLU}\footnote{The discussion of $\tanh$ and visualization of \textit{SiLU} can be found in Appendix \ref{section:analysis_activation} and Appendix \ref{section:more_experimental_results}, respectively.}. Figures \ref{fig:mlp_small_variance_tanh} and \ref{fig:mlp_small_variance_gelu} show that these alternatives are more robust than \textit{ReLU} in the MLP model. 
 This also suggests that while non-linear activations may not significantly alter the nature of learned features during training, they can affect the balance of the extracted features.
Figure \ref{fig:sa_small_variance} shows the SA model is also robust to the change in initialization. However, incorporating an MLP with \textit{ReLU} activation after the SA model reintroduces bias, as can be observed in Figure \ref{fig:sa_small_variance_relu}, suggesting a possible reason why \textit{ReLU} was replaced in models such as BERT \citep{devlin-etal-2019-bert}, GPT3 \citep{DBLP:journals/corr/abs-2005-gpt3}, and LLaMA \citep{touvron2023llama}, despite its presence in the original Transformer architecture \citep{NIPS2017_vaswani}.

\begin{table}[]
\centering
\scalebox{0.65}{
\begin{tabular}{cccccccccc}
\toprule
\multicolumn{2}{c}{\multirow{2}{*}{\textbf{Dataset}}} & \multicolumn{2}{c}{\textbf{ReLU}} & \multicolumn{2}{c}{\textbf{tanh}} & \multicolumn{2}{c}{\textbf{GeLU}} & \multicolumn{2}{c}{\textbf{SiLU}} \\
\multicolumn{2}{c}{}                                  & I          & II          & I          & II           &I          & II          & I         & II           \\
\midrule
\multirow{2}{*}{SST}               & valid            & 78.4           & 68.0               & 77.2           & 77.3             & 78.0             & 78.3             & 77.8           & 77.8             \\
                                   & test             & 80.0             & 67.3             & 78.9           & 78.9             & 79.9           & 79.8             & 79.7           & 79.9             \\
\multirow{2}{*}{Agnews}            & valid            & 91.1           & 90.4             & 91.1           & 90.7             & 90.8           & 90.0               & 90.7           & 90.1             \\
                                   & test             & 91.0             & 90.2             & 90.6           & 90.1             & 90.6           & 89.6             & 90.6           & 89.6             \\
\multirow{2}{*}{IMDB}              & valid            & 89.5           & 89.6             & 89.8           & 89.7             & 91.5           & 91.7             & 91.5           & 91.8             \\
                                   & test             & 89.9           & 89.5             & 89.9           & 90.2             & 91.2           & 91.4             & 91.2           & 91.4   
                                   \\
                                   \bottomrule
\end{tabular}
}
\vspace{-3mm}
\caption{Average accuracy (\%) on SST with scaled variances (I: $\sigma_v=0.1$ and II: $\sigma_v=0.001$) and different activation functions. 3 trials for each run. MLP model.}
\label{tab:performance_variation_activation}
\vspace{-5mm}
\end{table}

 \begin{figure*}[t]
\centering

    \begin{subfigure}{0.35\textwidth}
        \centering
        \captionsetup{width=.8\linewidth}
        \includegraphics[scale=0.32]{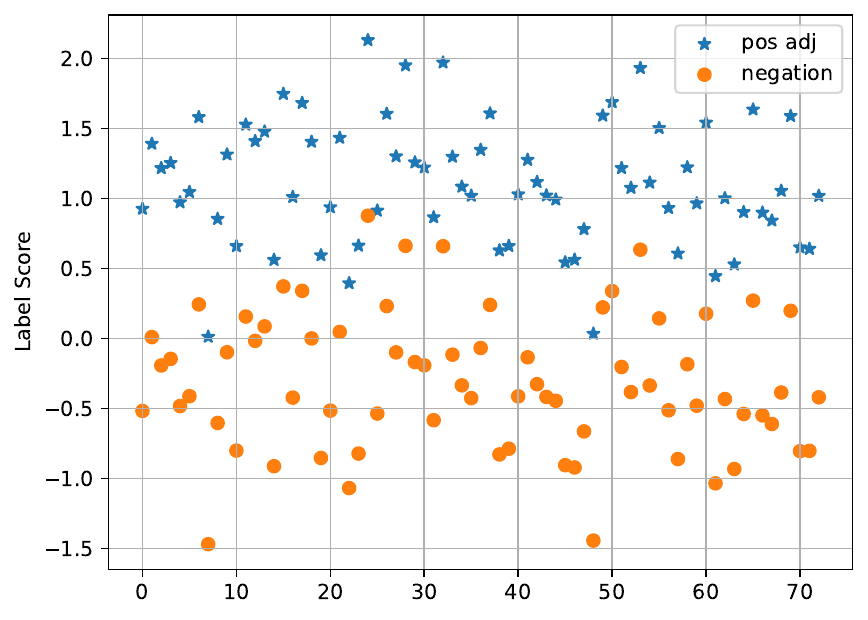}
        \vspace{-3mm}
        \caption{SA, pos adj}
        \label{fig:sa_pos_adjectives_negation}
     \end{subfigure}
    \begin{subfigure}{0.35\textwidth}
        \centering
        \captionsetup{width=.8\linewidth}
        \includegraphics[scale=0.32]{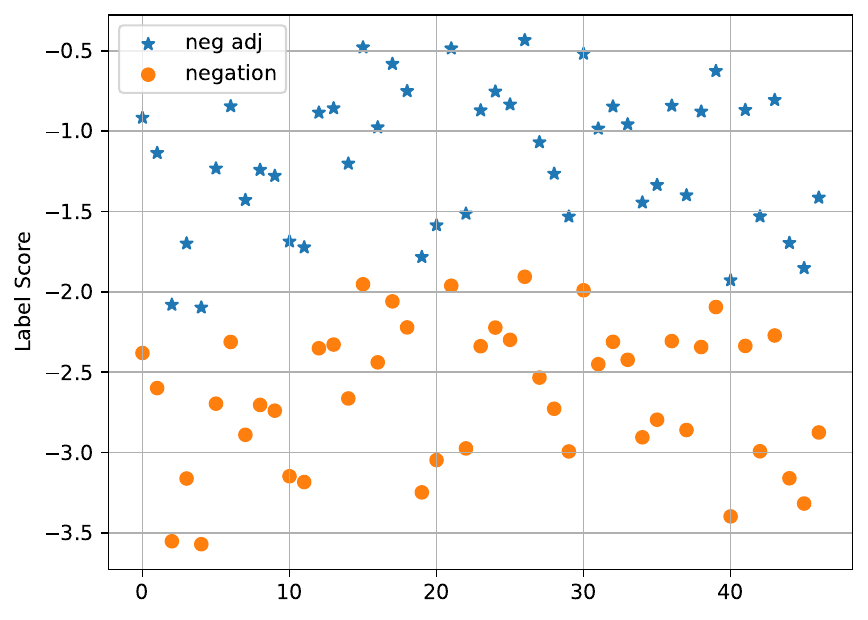}
        \vspace{-3mm}
        \caption{SA, neg adj}
        \label{fig:sa_neg_adjectives_negation}
     \end{subfigure}
    \begin{subfigure}{0.27\textwidth}
        \centering
        \captionsetup{width=.8\linewidth}
        \includegraphics[scale=0.35]{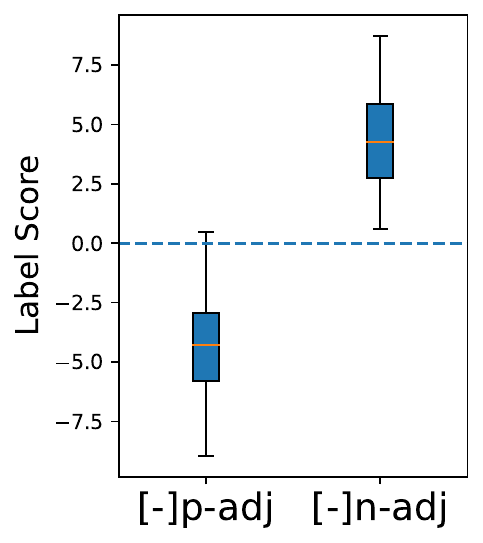}
        \vspace{-3mm}
        \caption{MV, negation}
        \label{fig:mv_adjectives_negation}
     \end{subfigure}

     \vspace{-2mm}
     \caption{Label scores for the extracted \textit{positive adjectives} (pos/p adj) and \textit{negative adjectives} (neg/n adj), as well as their negation expressions. SSTwsub. ``[-]'' refers to the negation operation.}
     \label{fig:induced_bias_small_variances}
     \vspace{-5mm}
\end{figure*}
 
 \paragraph{Models' Limitations}
We aim to examine whether the CNN and SA models have a limitation in encoding n-grams in situations beyond constituent tokens' semantic meanings.
We choose negation phenomena as our testbed, where a negation token can (partially) reverse the meanings of both positive and negative phrases, a task that is challenging to achieve by linear combination.
We run experiments on the SSTwsub dataset with labeled sub-phrases, which contains rich negation phenomena, i.e., phrases with their negation expressions achieved by prepending negation tokens such as {\em not} and {\em never}. We extract positive and negative adjectives and create their corresponding negation expressions by prepending the negation word {\em not}.
Figure \ref{fig:sa_pos_adjectives_negation} shows that the SA model can capture negation phenomena for positive adjectives but does not perform well for negative adjectives as shown in Figure \ref{fig:sa_neg_adjectives_negation}. Specifically, prepending a negation word to the negative adjectives does not alleviate their negativity as expected but leads to the contrary. Based on our analysis, the polarity of a negation expression relies largely on the linear combination of the tokens' polarity in the SA model. As both the negation word {\em not}\footnote{The negation word {\em not} appears more frequently in negative instances (2086 times compared to 813 in positive instances).} and negative adjectives are assigned negative scores, their linear combination will still be negative. This is not a desirable case and not surprising, as recent studies \citep{liu2021pay-gmlp, pmlr-v139-dong21a, orvieto2023resurrecting} have challenged the necessity of self-attention modules. 
Similar patterns can also be observed on extracted phrases with negation words, on the CNN model, and even the Transformer model in Appendix \ref{section:more_experimental_results}. 
Conversely, the MV model demonstrates the efficacy of capturing such negation for negative adjectives, as shown in Figure \ref{fig:mv_adjectives_negation}, demonstrating that the multiplication mechanism may play a more effective role in composing semantic meanings.

\subsection{Discussion}
Our experimental results verify our theoretical analysis of the feature extraction mechanisms employed by fundamental models during their training process.
These findings are consistent even with network widths as small as $d=64$, a scenario in which the infinite-width hypothesis is not fully realized. This observed pattern underscores the robustness and generalizability of our model, a conclusion that aligns with the insights presented by \citet{NEURIPS2019_exact_computation_ntk}. They suggest that as network width expands, the NTK closely approximates the computation under infinite width conditions while keeping the error within established bounds.
In our study, we noted that both CNN and Self-Attention models predominantly rely on the linear combination of token-label features. However, they exhibit limitations in effectively composing n-grams beyond tokens, a deficiency highlighted in negation cases. This observation points towards a potential necessity for alternative models that are adept at handling tasks involving complex n-gram features.
This observation aligns with studies by \citet{bhattamishra-etal-2020-ability,hahn-2020-theoretical,yao-etal-2021-self,chiang-cholak-2022-overcoming}, which underscore the constraints of self-attention modules, despite their practical successes.
Contrarily, the MV model, based on matrix-vector multiplication, can better capture such negation evident in both analytical and observational perspectives. This model emerges as a promising alternative for tasks that hinge on the interpretation of n-grams.
Regarding activation functions, our findings indicate that the utilization of \textit{ReLU} does not significantly impact the nature of features learned but can introduce feature bias. Consequently, we suggest the exploration of alternative activation functions to mitigate this bias, enhancing the model’s performance and reliability in diverse applications.

\section{Conclusions}
We propose a theoretical approach to delve into the feature extraction mechanisms behind neural models.
By focusing on the learning dynamics of neural models under extreme conditions, we can shed light on useful features acquired from training data.
We apply our approach to several fundamental models for text classification and explain how these models acquire features during gradient descent. 
Meanwhile, our approach allows us to reveal significant factors for feature extraction. For example, inappropriate choice of activation functions may induce a feature bias. Furthermore, we may also infer the limitations of a model based on the features it acquires, thereby aiding in the selection (or design)  of an appropriate model for specific downstream tasks.
Despite the infinite-width hypothesis, the patterns observed are remarkable with finite widths. 
Our future directions include analyzing more complex neural architectures.

 \section*{Limitations}
 Despite the findings on the aforementioned fundamental models, applying our approach to analyze complex models like Transformers, which incorporate numerous layers, non-linear activation functions, and normalizations, presents challenges due to the increased complexity. These factors contribute to more intricate learning dynamics, making it less straightforward to gain comprehensive insights into the model's behavior. We would like to investigate and formulate them in future directions.

\section*{Acknowledgements}

We would like to thank the anonymous reviewers, our meta-reviewer, and senior area chairs for their constructive comments and support on this work. This research/project is supported by Ministry of Education, Singapore, under its Tier 3 Programme (The Award No.: MOET320200004), and Ministry of Education, Singapore, under its Academic Research Fund (AcRF) Tier 2 Programme (MOE AcRF Tier 2 Award No.
: MOE-T2EP20122-0011). Any opinions, findings and conclusions or recommendations expressed in this material are those of the authors and do not reflect the views of the Ministry of Education, Singapore.

\bibliography{custom, anthology}
\bibliographystyle{acl_natbib}

\newpage

\appendix

\section{Learning Dynamics of Models with Infinite-width}
\label{section:lemma_theorem_proof}
\subsection{MLP model}
The representation for the instance $x$ is defined as
\begin{equation}
    \begin{aligned}
        \vh = \sum_{j=1}^{l^{(x)}} \frac{1}{\sqrt{d}} \boldsymbol{\phi}( \mW\frac{1}{\sqrt{d}} \mW^{e} \ve_j),
    \end{aligned}
    \label{eq:affine_hidden_state_appendix}
\end{equation}
and the label score of $x$ is computed as follows,
\begin{equation}
    \begin{aligned}
        s  = \vv^\top \vh= \sum_{j=1}^{l^{(x)}} \frac{\vv^\top}{\sqrt{d}} \boldsymbol{\phi}( \mW\frac{1}{\sqrt{d}} \mW^{e} \ve_j),
    \end{aligned}
    \label{eq:affine_polarity_score_appendix}
\end{equation}
where $l^{(x)}$ is the instance length, $\mW \in \sR^{d_{out} \times d_{in}}$ is the weight of the hidden layer, $\mW^{e} \in \mathbb{R}^{{d_{in}} \times |V|}$ is the weight of the embedding layer, and $\vv \in \sR^{d_{out}}$ is the final layer weight. 
For simplicity, we let $d_{out}=d_{in}=d$.
$\boldsymbol{\phi}$ is the element-wise \textit{ReLU} function. $\ve_j$ is the \textit{one-hot} vector for token $e_j$.


The gradients of the parameters can be computed as follows,
\begin{equation}
    \begin{aligned}
        \frac{\partial s}{\partial \vv}  &= \sum_{j=1}^{l^{(x)}}\frac{1}{\sqrt{d}} \boldsymbol{\phi}( \vc_j),\\
        \frac{\partial s}{\partial \mW}  &= \sum_{j=1}^{l^{(x)}}\frac{\mD_j\vv}{d}  ( \mW^{e} \ve_j)^\top,\\
        \frac{\partial s}{\partial \mW^e}  &= \sum_{j=1}^{l^{(x)}}\frac{\mW^\top\mD_j\vv}{d}   \ve_j^\top.
    \end{aligned}
    \label{eq:affine_gradient_appendix}
\end{equation}
where
\begin{equation}
    \begin{aligned}
        \vc_j &= \mW \frac{1}{\sqrt{d}}\mW^e \ve_j, \\
        \mD_j & = \frac{\partial \phi(\vc_j)}{\partial \vc_j}.
    \end{aligned}
\end{equation}
Note that $\mD_j$ is a diagonal matrix with elements being either 1s or 0s.

Given a test input $x'$, the learning dynamics of the label score $s'$ will be
\begin{equation}
    \begin{aligned}
      \dot{s}' = \frac{1}{m}\sum_{(x, y) \in \mathcal{D}}  g(-y s^{(x)} )   y \Theta(x', x).
    \end{aligned}
    \label{eq:mpl_dynamics_appendix}
\end{equation}

With the gradients, we can obtain the NTK $\Theta(x', x)$ for this MLP model as follows,
\begin{equation}
  \begin{aligned}
    \Theta(x', x)  \!&=\!  \frac{1}{d} \sum_{i=1}^{l^{(x')}}  \sum_{j=1}^{l^{(x)}}  \boldsymbol{\phi}^\top(\vc_i) \boldsymbol{\phi}(\vc_j)\\
    &\!+ \!\frac{1}{d^2} \sum_{i=1}^{l^{(x')}}  \sum_{j=1}^{l^{(x)}} \vv^\top \mD_i \mD_j  \vv \ve^\top_i (\mW^e)^\top \mW^e \ve_j\\
    &\!+\! \frac{1}{d^2}  \sum_{i=1}^{l^{(x')}}  \sum_{j=1}^{l^{(x)}} \vv^\top \mD_i  \mW(\mW)^\top \mD_j \vv \ve_i^\top \ve_j,
 \end{aligned}  
 \label{eq:mlp_ntk_appendix}
\end{equation}
where
\begin{equation}
    \begin{aligned}
        \vc_i &= \mW \frac{1}{\sqrt{d}}\mW^e \ve_i, \\
        \mD_i & = \frac{\partial \phi(\vc_i)}{\partial \vc_i}.
    \end{aligned}
\end{equation}

The label score of an instance can be viewed as the sum of the label scores of all the tokens and the NTK can be viewed as the sum of the interaction between each token pair from the test input $x'$ and the training instance $x$.




\subsubsection{NTKs under the Infinite-width}
It's delicate to analyze such an NTK directly in practice as the NTK will vary over time.
However, the previous work discussed in the literature has proved that the NTK will converge and stay constant during training under the infinite-width condition.
Now let us consider the infinite-width scenario \citep{lee2018deep} and obtain the NTK subsequently.
We first give the NTK between two instances and then give the NTK between an input token and an instance.
\paragraph{NTK between instances}
Assume we initialize parameters following Gaussian distributions, i.e., $\mW_{ij} \sim \mathcal{N}(0, \sigma^2_w)$,  $\mW^e_{ij} \sim \mathcal{N}(0, \sigma^2_e)$, and  $\vv_{j} \sim \mathcal{N}(0, \sigma^2_v)$. 
\begin{lemma}
\label{lemma:mlp_instance}
When the network width approaches infinity, $\Theta(x', x)$ converges to a deterministic NTK $\Theta_{\infty}(x', x)$ during training which obeys
\begin{equation*}
    \begin{aligned}
     & \Theta_{\infty}(x', x)  \\
     & = \sum_{i=1}^{l^{(x')}}  \sum_{j=1}^{l^{(x)}}\frac{\sigma^2_e\sigma^2_w}{2\pi} [\sin \alpha_{ij} \!+\! (\pi\!-\!\alpha_{ij}) \cos \alpha_{ij}]\\
      &+\sum_{i=1}^{l^{(x')}}  \sum_{j=1}^{l^{(x)}} (\sigma^2_e\sigma^2_v+\sigma^2_w \sigma^2_v ) \frac{\ve^\top_i\ve_j}{2} 
    \end{aligned}
\end{equation*}
where $\alpha_{ij}= \cos^{-1}  \ve^\top_i \ve_j $.
\end{lemma}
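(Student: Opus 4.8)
The plan is to compute the NTK in Equation~\ref{eq:mlp_ntk_appendix} term by term in the infinite-width limit, using the standard fact that under Gaussian initialization the pre-activations $\vc_i = \mW\frac{1}{\sqrt{d}}\mW^e\ve_i$ become jointly Gaussian as $d\to\infty$, and that the three summands in $\Theta(x',x)$ concentrate on their expectations. First I would identify the covariance structure of the Gaussian process governing $\{\vc_i\}$: each coordinate of $\vc_i$ is, conditionally on $\mW^e$, a mean-zero Gaussian with variance $\frac{\sigma_w^2}{d}\|\mW^e\ve_i\|^2$, and $\frac{1}{d}\|\mW^e\ve_i\|^2 \to \sigma_e^2$ by the law of large numbers, while the cross term $\frac{1}{d}(\mW^e\ve_i)^\top(\mW^e\ve_j) \to \sigma_e^2\,\ve_i^\top\ve_j$. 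So in the limit the coordinates of $(\vc_i,\vc_j)$ are i.i.d.\ bivariate Gaussians with covariance matrix $\sigma_e^2\sigma_w^2\begin{psmallmatrix}1 & \ve_i^\top\ve_j\\ \ve_i^\top\ve_j & 1\end{psmallmatrix}$; since the $\ve$'s are one-hot, $\ve_i^\top\ve_j \in\{0,1\}$, so the correlation is $\cos\alpha_{ij}$ with $\alpha_{ij}=\cos^{-1}\ve_i^\top\ve_j$.

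Next I would evaluate each of the three terms. The first term, $\frac1d\sum_{i,j}\boldsymbol{\phi}^\top(\vc_i)\boldsymbol{\phi}(\vc_j)$, converges to $\sum_{i,j}\E[\phi(c)\phi(c')]$ where $(c,c')$ is the bivariate Gaussian above; this is the classic arc-cosine kernel of order~1 (Cho \& Saul), giving $\frac{\sigma_e^2\sigma_w^2}{2\pi}[\sin\alpha_{ij} + (\pi-\alpha_{ij})\cos\alpha_{ij}]$, which is exactly the first line of the claimed formula. The third term, $\frac{1}{d^2}\sum_{i,j}\vv^\top\mD_i\mW\mW^\top\mD_j\vv\,\ve_i^\top\ve_j$, is nonzero only when $i,j$ index the same token; using independence of $\vv$ from $\mD,\mW$ and $\E[\vv\vv^\top]=\sigma_v^2 I$, together with $\frac1d\mathrm{tr}(\mD_i\mW\mW^\top\mD_j)\to \sigma_w^2\cdot\frac12$ (the $\frac12$ being the expected fraction of active ReLU units), yields $\sum_{i,j}\frac{\sigma_w^2\sigma_v^2}{2}\ve_i^\top\ve_j$. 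The second term is handled symmetrically: $\frac{1}{d^2}\sum_{i,j}\vv^\top\mD_i\mD_j\vv\,\ve_i^\top(\mW^e)^\top\mW^e\ve_j$ contributes only when the tokens coincide, $\mD_i=\mD_j$, $\frac1d\mathrm{tr}(\mD_i)\to\frac12$, and $\frac1d\ve_i^\top(\mW^e)^\top\mW^e\ve_j\to\sigma_e^2\ve_i^\top\ve_j$, giving $\sum_{i,j}\frac{\sigma_e^2\sigma_v^2}{2}\ve_i^\top\ve_j$. Adding the second and third contributions gives $\sum_{i,j}(\sigma_e^2\sigma_v^2+\sigma_w^2\sigma_v^2)\frac{\ve_i^\top\ve_j}{2}$, the second line of the formula.

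Finally I would invoke the convergence results cited in the paper (\citet{NEURIPS2018_ntk, NEURIPS2019_exact_computation_ntk, pmlr-v139-yang21f}) to justify that $\Theta(x',x)$ stays asymptotically constant at this limiting value throughout training, so that $\Theta_\infty$ is well-defined. The main obstacle I anticipate is making the layerwise limit rigorous when the two layers ($\mW^e$ and $\mW$) are taken wide simultaneously: one must be careful that the Gaussian-process approximation for $\vc_i$ and the concentration of the quadratic forms involving $\mD_i$, $\mW$, $\mW^e$, and $\vv$ hold jointly and that the dependence of $\mD_i$ on $\mW,\mW^e$ does not spoil the expectations — this is where one leans on the established NTK machinery rather than re-deriving it. The arc-cosine integral and the ReLU "$\frac12$ active" computations are routine once the Gaussian structure is pinned down. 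Specializing to the one-hot case (so $\alpha_{ij}\in\{0,\pi/2\}$, hence $\sin\alpha_{ij}+(\pi-\alpha_{ij})\cos\alpha_{ij}$ equals $\pi$ when $\ve_i=\ve_j$ and $1$ when $\ve_i\neq\ve_j$) then recovers Lemma~\ref{lemma:mlp} from Lemma~\ref{lemma:mlp_instance} by setting $x'$ to a single token.
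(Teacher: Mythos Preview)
Your proposal is correct and follows essentially the same route as the paper's own proof: both decompose $\Theta(x',x)$ into the three summands of Equation~\ref{eq:mlp_ntk_appendix}, evaluate the first via the Cho--Saul arc-cosine kernel for the jointly Gaussian pre-activations, and handle the second and third via law-of-large-numbers concentration together with the ``half of the ReLU units are active'' fact, then cite the NTK literature for constancy during training. Your write-up is in fact a bit more explicit than the paper's about the joint Gaussian covariance structure and about the potential subtlety of taking both layer widths to infinity simultaneously, but the underlying argument is the same.
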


We will give the proof along with the proof of Lemma \ref{lemma:mlp}, where the test input is simply a token $e$.
\begin{proof}
We only need to compute the NTK when the network width grows to infinity, as the NTK's convergence during training has been proved in the work of \citet{NEURIPS2018_ntk, pmlr-v139-yang21f}.

For the first part in Equation \ref{eq:mlp_ntk_appendix}, the dot-product between two activation outputs can be written as,
\begin{align}
    \boldsymbol{\phi}^\top(\vc_i) \boldsymbol{\phi}(\vc_j)=\sum_{r=1}^d \phi(\vc_{ir})\phi(\vc_{jr}),
\end{align}
where $r$ refers to the row index and
\begin{equation}
    \begin{aligned}
      \vc_{ir} = \mW_r\frac{1}{\sqrt{d}}\mW^e \ve_i\\
      \vc_{jr} = \mW_r\frac{1}{\sqrt{d}}\mW^e \ve_j.\\
    \end{aligned}
\end{equation}
As elements in $\mW$ and $\mW^e$ follow Gaussian distributions, when the network width $d \to \infty$, $\vc_{ir}$ and $\vc_{jr}$ will also be Gaussian distributed respectively and they follow a Gaussian process \citep{lee2018deep}.
Based on the work of \citet{NIPS2009_cho_saul}, the covariance $\mathcal{K}(\phi(\vc_{ir}), \phi(\vc_{jr}))$ (regardless of $r$) will be calculated as
\begin{equation}
    \begin{aligned}
      &\mathcal{K}(\phi(\vc_{ir}), \phi(\vc_{jr}))\\
      &=\frac{\sigma^2_e\sigma^2_w}{2\pi}[\sin \alpha_{ij} \!\!+\!\! (\pi-\alpha_{ij}) \cos \alpha_{ij}],
    \end{aligned}
\end{equation}
where $\alpha_{ij}= \cos^{-1}  \ve^\top_i \ve_j $. 

Then, we arrive at
\begin{equation}
    \begin{aligned}
      & \lim_{d \to \infty} \frac{1}{d}\boldsymbol{\phi}^\top(\vc_i) \boldsymbol{\phi}(\vc_j) = \mathbb{E}[ \phi(\vc_{ir})\phi(\vc_{jr}]\\
      &= \frac{ \sigma^2_e\sigma^2_w}{2\pi}[\sin \alpha_{ij} \!\!+\!\! (\pi-\alpha_{ij}) \cos \alpha_{ij}].
    \end{aligned}
\end{equation}

For the second part in Equation \ref{eq:mlp_ntk_appendix}, let us look at $\ve^\top_i (\mW^e)^\top \mW^e \ve_j$ first.
Let $\mM^e = (\mW^e)^\top \mW^e$ and the elements of $\mM^e$ can be computed as
\begin{equation}
    \begin{aligned}
      \mM^e_{ij} = \sum_{r=1}^d \mW^e_{ri}  \mW^e_{rj},
    \end{aligned}
\end{equation}
where $\mW^e_{ri}, \mW^e_{rj} \sim \mathcal{N}(0, \sigma^2_e)$.
When $d \to \infty$, 
\begin{equation}
    \begin{aligned}
         \lim_{d \to \infty} \frac{1}{d}\mM^e_{ij} &=  \lim_{d \to \infty} \frac{1}{d}\sum_{r=1}^d \mW^e_{ri}  \mW^e_{rj}= \mathbb{E}[ \mW^e_{ri}  \mW^e_{rj}]\\
         &=\begin{cases}
    \sigma^2_e,& \text{if } i \equiv j\\
    0,              & \text{otherwise}
\end{cases}
    \end{aligned}
    \end{equation}

We can thereby arrive at 
\begin{align}
    \lim_{d \to \infty}  \frac{1}{d}\ve^\top_i (\mW^e)^\top \mW^e \ve_j = \sigma^2_e \ve^\top_i \ve_j.
\end{align}

Let us look at $\vv^\top \mD_i \mD_j  \vv$.
If $\ve^\top_i \ve_j=0$, which means $\mD_i \neq \mD_j$, we will arrive at
\begin{equation}
    \begin{aligned}
      &\lim_{d \to \infty} \frac{1}{d^2}\vv^\top \mD_i \mD_j^\top \vv \frac{1}{d}\ve^\top_i (\mW^e)^\top \mW^e \ve_j \\
      &= \lim_{d \to \infty} \frac{1}{d} \sum_{r=1}^d \mD_{irr} \mD_{jrr}\vv^2_r  \sigma^2_e \ve^\top_i  \ve_j\\
      &=0,
    \end{aligned}
\end{equation}
where $ \mD_{irr}$ and $ \mD_{jrr}$ are diagonal elements in $\mD_i$ and $\mD_j$ respectively. $\vv^2_r$ is the $r$-th element in $\vv$.

If $\ve^\top_i \ve_j \neq 0$, which means $\mD_i = \mD_j$, we will have 
\begin{equation}
    \begin{aligned}
      &\lim_{d \to \infty} \frac{1}{d}\vv^\top \mD_i \mD_j  \vv \frac{1}{d}\ve^\top_i (\mW^e)^\top \mW^e \ve_j \\
      &=\frac{1}{2}\sigma^2_v \sigma^2_e \ve^\top_i \ve_j.
    \end{aligned}
\end{equation}

Similarly, we can get the third part in Equation \ref{eq:mlp_ntk_appendix},
\begin{equation}
    \begin{aligned}
      &\lim_{d \to \infty}  \frac{1}{d^2}  \sum_{i=1}^{l^{(x')}}  \sum_{j=1}^{l^{(x)}} \vv^\top \mD_i  \mW(\mW)^\top \mD_j \vv \ve_i^\top \ve_j\\
      &=\frac{1}{2}\sigma^2_v \sigma^2_w \ve_i^\top \ve_j.
    \end{aligned}
\end{equation}

Plugging the above equations in Equation \ref{eq:mlp_ntk_appendix}, we will arrive at Lemma \ref{lemma:mlp_instance}.
\end{proof}

\paragraph{NTK between a token and an instance}
Next, we give the proof of Lemma \ref{lemma:mlp} based on Lemma \ref{lemma:mlp_instance}.
\begin{proof}
  Note that as $\ve_i$ and $\ve_j$ are one-hot vectors, their dot-product $\ve^\top_i \ve_j$ satisfies that, $\ve^\top_i \ve_j = 1$ when $\ve_i \equiv \ve_j$ or 0 otherwise.
Therefore, $\alpha_{ij}= \cos^{-1}  \ve^\top_i \ve_j $,  $\alpha_{ij}$ can be $\frac{\pi}{2}$ or 0.
And the kernel can be further written as
\begin{equation}
    \begin{aligned}
      \Theta_{\infty}(x', x)  &= \sum_{i=1}^{l^{(x')}}  \sum_{j=1}^{l^{(x)}} \rho \ve_i^\top \ve_j +\sum_{i=1}^{l^{(x')}}  \sum_{j=1}^{l^{(x)}}\mu
    \end{aligned}
    \label{eq:mlp_converged_ntk_instance_appendix}
\end{equation}
where $\rho = \dfrac{(\pi-1)\sigma^2_e\sigma^2_w}{2\pi}+ \dfrac{\sigma^2_e\sigma^2_v+\sigma^2_w\sigma^2_v}{2}$ and $\mu=\dfrac{\sigma^2_e\sigma^2_w}{2\pi}$.
This means the converged kernel $\Theta_{\infty}(x', x)$ will keep being \textit{non-negative} during training, and the direction of the dynamics will depend on the label $y$ in Equation \ref{eq:mpl_dynamics_appendix}. 

Let us look at the \textit{token-label features} learned in the dynamics. As previously mentioned, the instance label score can be viewed as the sum of token label scores.
Consider the scenario where the test input $x'$ is simply a token $e$, the NTK $\Theta_{\infty}(e, x)$ obeys
\begin{equation}
    \begin{aligned}
      \Theta_{\infty}(e, x)  &= \sum_{j=1}^{l^{(x)}} \rho \ve^\top \ve_j +\sum_{j=1}^{l^{(x)}}\mu,
    \end{aligned}
\end{equation}
where the dot-product  $\sum_{j=1}^{l^{(x')}} \ve^\top\ve_j$ will be interpreted as the frequency of 
$e$ appearing in instance $x$. We can thereby arrive at Lemma \ref{lemma:mlp}.
\end{proof}





\subsubsection{Features Encoded in Gradient Descent}
With Lemma \ref{lemma:mlp} and Equation \ref{eq:polarity_score_der2}, we can get Theorem \ref{theorem:mlp_ld_equation}. Under the infinite-width, the dynamics of token $e$'s label score obey
\begin{equation}
    \begin{aligned}
         \dot{s}^e  &= \underbrace{\frac{1}{m}\sum_{(x, y) \in \mathcal{D}} g(- ys^{(x)}) y \rho \omega(e, x)}_A \\
         &+ \underbrace{\frac{1}{m}\sum_{(x, y) \in \mathcal{D}} g(- ys^{(x)}) y\mu l^{(x)} }_B,
    \end{aligned}
    \label{eq:token_polarity_score_dynamics1_appendix}
\end{equation}
where $\omega(e, x)$ is the frequency of token $e$ in instance $x$. $\omega(e, x)$ depends on the training data and will not change over time.
We cannot give a \textit{closed-form} solution for this ODE due to the \textit{non-linearity} of the sigmoid function $g(- ys)$. However, as $g(- ys^{(x)})$ is \textit{non-negative}, there can be certain interesting trends for the label scores.

Note that the first term $A$ in Equation \ref{eq:token_polarity_score_dynamics1_appendix} will depend on this token's \textit{term-frequencies} $\omega(e, x)$ in each training instance. The second term $B$ depends on the entire training set and is shared by all the tokens $e$.
For example, if $e$ does not appear in an instance $x$, $\omega(e, x)$ will be 0.

\subsubsection{Bias Induced in Gradient Descent}
Let us look at the term $B$ in Equation \ref{eq:token_polarity_score_dynamics1_appendix}, which can be viewed as an induced \textit{feature bias} shared by all tokens. It is affected by the variances and the instance lengths. Suppose the term $B$ is sufficiently large; in this case, the positive tokens and the negative tokens will be affected by this bias in different directions during training. For example, if term $B$ is positively large, it will positively contribute to the learning dynamics of positive tokens and make their label scores much larger than 0 after sufficient updates. However, the negative tokens will have weakened learning dynamics and end up with label scores close to 0.  


In Equation \ref{eq:token_polarity_score_dynamics1_appendix}, both $\omega(e, x)$ and $l^{(x)}$ are determined by the training instances. Therefore, the factor $\rho$ and $\mu$ defined in Equation \ref{eq:mlp_converged_ntk_instance_appendix} can affect the influence of this induced bias. 
It can be inferred that, a significantly large variance $\sigma_v$ can make $\rho$ much larger than $\mu$, thus reducing the influence of the bias.

\subsection{CNN Model}
\label{subsection:cnn_learning_dynamics}
We consider the 1-dimension CNN, which is commonly used in NLP tasks. The kernel size, stride size, and padding size will be set as $K$, 1, and $K-1$, respectively.

For each sliding window $c_j$ that consists of $K$ consecutive tokens, the corresponding feature $\vc_j \in \sR^{d_{out}}$ can be represented as 
\begin{equation}
    \begin{aligned}
      \vc_j  = \sum_{k=1}^{K} \mW^c_k \frac{1}{\sqrt{d}} \mW^e \ve_{j+k-1},
    \end{aligned}
\end{equation}
where $\mW^c_k \in \sR^{d_{out} \times  d_{in}}$ is the kernel weight corresponding to the $k$-th token in the sliding window, $\mW^e \in \sR^{d_{in} \times {V}}$ is the embedding matrix, and $ \ve $ is the one-hot vector for token $e$. We also let $d_{in}= d_{out}=d$.

Given an input $x$, the label score will be calculated as
\begin{equation}
    \begin{aligned}
      s  = \frac{\vv^\top}{\sqrt{d}}\sum_{j=-(K-1)}^{l^{(x)}} \boldsymbol{\phi} (\vc_j),
    \end{aligned}
    \label{eq:cnn_hidden_state_appendix}
\end{equation}
where $-(K-1)$ means the position for the left-most padding token.
The first $K-1$ and last $K-1$ tokens in an instance are padding ones represented by zero vectors. $\boldsymbol{\phi} $ is the element-wise \textit{ReLU} function.

For brevity, we will denote $\sum_{j=-(K-1)}^{l^{(x)}}$ by $\sum_{j}$.
The gradients will be computed as
\begin{equation}
    \begin{aligned}
        \frac{\partial s}{\partial \vv}  &= \sum_{j}\frac{1}{\sqrt{d}}\boldsymbol{\phi} \left( \sum_{k=1}^K \mW^c_k \frac{1}{\sqrt{d}} \mW^{e}   \ve_{j+k-1}\right),\\
        \frac{\partial s}{\partial \mW^c_k}  &= \sum_{j}\frac{\mD_j\vv}{d}  ( \mW^{e} \ve_{j+k-1})^\top,\\
        \frac{\partial s}{\partial \mW^e}  &= \sum_{j}\sum_{k=1}^K \frac{(\mW^c_k)^\top \mD_j\vv}{d}   \ve_{j+k-1}^\top,\\
    \end{aligned}
    \label{eq:cnn_gradient_appendix}
\end{equation}
where
\begin{align}
    \vc_j &= \sum_{k=1}^K \mW^c_k \frac{1}{\sqrt{d}} \mW^{e}   \ve_{j+k-1},\\
    \mD_j & = \frac{\partial \phi(\vc_j)}{\partial \vc_j}.
\end{align}

For a test input $x'$, the dynamics of its label score will obey
\begin{align}
    \dot{s}'_t =   \frac{1}{m}\sum_{(x, y) \in \mathcal{D}} yg(-y s_t )   \Theta(x', x),
    \label{eq:cnn_polarity_score_dynamics_appendix}
\end{align}
where 
\begin{equation}
    \begin{aligned}
      &\Theta(\vx', \vx) =\sum_i\sum_j \underbrace{\frac{1}{d}\boldsymbol{\phi}^\top ( \vc_i) \boldsymbol{\phi}  (\vc_j)}_A +
      \\
        &\sum_{i, j, k} \underbrace{\frac{\vv^\top}{d^2}  \mD_i \mD_j \vv  \ve^\top_{i+k-1}  (\mW^e)^\top\mW^e \ve_{j+k-1} }_B+\\
      & \sum_{i, j, k, k'} \underbrace{\frac{\vv^\top}{d^2}  \mD_i\mW^c_{k'} (\mW^c_k)^\top \mD_j \vv \ve^\top_{i+k-1}  \ve_{j+k-1}}_C,
    \end{aligned}
    \label{eq:cnn_ntk_rep_appendix}
\end{equation}
where $\sum_{i, j, k} = \sum_i\!\sum_j\! \sum_{k=1}^K\! $ and $ \sum_{i, j, k, k'}=\sum_i\!\sum_j\! \sum_{k'=1}^K\! \sum_{k=1}^K\!$


\subsubsection{NTK under the Infinite-width}
It should be highlighted that, even when the network width approaches infinity, it may not be easy to describe the converged NTK with an explicit closed-form expression due to the integrals used in obtaining expectations.
However, we will show that the converged NTK can be written as functions of parameter variances and similarity between sliding windows, and the functions are affected by the similarity between sliding windows.


\begin{lemma}
\label{lemma:cnn_ntk_infinity}
Assume we initialize parameters following Gaussian distributions, i.e., $\mW^c_{ij} \sim \mathcal{N}(0, \sigma^2_w)$,  $\mW^e_{ij} \sim \mathcal{N}(0, \sigma^2_e)$, and  $\vv_{j} \sim \mathcal{N}(0, \sigma^2_v)$ ($\mW^c_{ij}$ refers to an element in $\mW^c_k$). 
When the network width approaches infinity, given two instances $x'$ and $x$, the NTK $\Theta(x', x)$ for the CNN model converges to
\begin{equation*}
    \begin{aligned}
      &\Theta_{\infty}(x', x) =\sum_i \sum_j F(\omega_c(c_i, c_j))
      \\
        &+\sum_i \sum_j \sigma^2_v \sigma^2_e  H(\omega_c(c_i, c_j)) \omega_2(c_i, c_j) \\
      & +\sum_i \sum_j   \sigma^2_v\sigma^2_w  H(\omega_c(c_i, c_j) ) \omega_2(c_i, c_j),
    \end{aligned}
\end{equation*} 
where $c_i$ and $c_j$ are sliding windows starting from the $i$-th token and the $j$-th token in instances $x'$ and $x$, respectively.
Functions $\omega_c$ and $\omega_2$ are defined as
\begin{equation*}
    \begin{aligned}
      \omega_c(c_i, c_j) &= \sum_{k'=1}^K \ve^\top_{i+k'-1} \sum_{k=1}^K\ve_{j+k-1}\\
      \omega_2(c_i, c_j) &= \sum_{k=1}^K \ve^\top_{i+k-1}  \ve_{j+k-1}.
    \end{aligned}
\end{equation*} 
Functions $F$ and $H$ are defined as
\begin{equation*}
    \begin{aligned}
      &F(n) \!=\! \\
      &\mathbb{E}[\phi(\sum_{i=1}^{n} w_i \!\!+\!\! \sum_{i=n+1}^K \!w_i)\phi(\sum_{i=1}^{n} \!w_i \!\!+\!\! \sum_{i=n+1}^K\! w'_i)] ,\\
      &H(n) \!=\! \\
      &\mathbb{E}[D(\sum_{i=1}^{n}\! w_i \!\!+\! \!\sum_{i=n+1}^K w_i)D(\sum_{i=1}^{n}\!w_i \!\!+\!\! \sum_{i=n+1}^K \!w'_i)],\\
    \end{aligned}
\end{equation*} 
where $n$ ($0 \leq n \leq K$) is the number of tokens shared by two sliding windows and $w_i, w'_i \sim \mathcal{N}(0, \sigma^2_e\sigma^2_w)$. $\phi$ and $D$ are the \textit{ReLU} function and \textit{step} function\footnote{$D(x)=1$ if $x>0$, 0 otherwise.} respectively.
\end{lemma}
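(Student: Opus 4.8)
\textbf{Proof proposal for Lemma~\ref{lemma:cnn_ntk_infinity}.}
The plan is to take the exact NTK expression in Equation~\ref{eq:cnn_ntk_rep_appendix}, which decomposes into three groups of terms $A$ (the feature-map inner product), $B$ (the embedding-weight contribution), and $C$ (the kernel-weight contribution), and compute the $d\to\infty$ limit of each group term-by-term, exactly as was done for the MLP in the proof of Lemma~\ref{lemma:mlp_instance}. As before, I would not re-prove that the NTK converges and stays constant during training; that is inherited from \citet{NEURIPS2018_ntk, pmlr-v139-yang21f}. So the whole task reduces to evaluating three limits of averages over Gaussian-initialized weights.

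First I would handle term $A$. For a fixed coordinate $r$, the pre-activation $\vc_{ir}=\sum_{k=1}^K (\mW^c_k)_r \frac{1}{\sqrt d}\mW^e\ve_{i+k-1}$ is, in the $d\to\infty$ limit, a centered Gaussian; the pair $(\vc_{ir},\vc_{jr})$ is jointly Gaussian by the Gaussian-process argument of \citet{lee2018deep}. The key observation is that the joint covariance of this pair depends only on $\omega_c(c_i,c_j)$, the number of tokens shared between the two windows (regardless of position): each shared token contributes a common Gaussian summand $w_i\sim\mathcal{N}(0,\sigma_e^2\sigma_w^2)$ to both pre-activations, while each unshared token contributes an independent summand ($w_i$ vs. $w_i'$). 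Hence $\lim_{d\to\infty}\frac1d\boldsymbol{\phi}^\top(\vc_i)\boldsymbol{\phi}(\vc_j)=\mathbb{E}[\phi(\vc_{ir})\phi(\vc_{jr})]=F(\omega_c(c_i,c_j))$ by the very definition of $F$. This is the step where the slightly unusual definitions of $F$ and $H$ (as expectations over a sum of $K$ i.i.d. Gaussians split into a ``shared'' block of size $n$ and a ``private'' block) get pinned down — essentially I am choosing the representation so that this identity is a tautology.

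Next, terms $B$ and $C$. Both have the shape $\frac1{d^2}\vv^\top \mD_i (\text{something})\mD_j \vv \cdot (\text{one-hot inner product})$. The one-hot factor $\ve^\top_{i+k-1}\ve_{j+k-1}$ is $1$ exactly when the $k$-th slot of window $c_i$ holds the same token as the $k$-th slot of $c_j$, so summing over $k$ gives $\omega_2(c_i,c_j)$ — the positionally-aligned overlap. For the weight factor: in term $B$, $\frac1d\ve^\top_{i+k-1}(\mW^e)^\top\mW^e\ve_{j+k-1}\to\sigma_e^2$ when the two one-hots coincide (same computation as the $\mM^e$ step in the MLP proof), and $\frac1d\vv^\top\mD_i\mD_j\vv\to\frac12\sigma_v^2 H(\omega_c(c_i,c_j))$, because $\mD_i,\mD_j$ are diagonal step-function masks whose joint distribution again depends only on $\omega_c(c_i,c_j)$ via the shared/private decomposition, and the factor $\frac12$ is absorbed into $H$'s normalization (or carried explicitly; I'd need to reconcile this with the stated form, which has no visible $\frac12$ — see below). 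Term $C$ is analogous with $\mW^c_{k'}(\mW^c_k)^\top$ in place of the embedding Gram, contributing $\sigma_w^2$ instead of $\sigma_e^2$. Collecting the three limits and relabeling gives exactly the claimed three-line formula for $\Theta_\infty(x',x)$.

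The main obstacle is the joint treatment of the two \emph{correlated} step-function masks $\mD_i$ and $\mD_j$ inside $\vv^\top\mD_i(\cdots)\mD_j\vv$, together with the embedding/kernel Gram matrix in the middle — this is genuinely more delicate than the MLP case because in the MLP the activation patterns were either identical ($\mD_i=\mD_j$) or the one-hot factor killed the term, whereas here a window can share some but not all of its tokens, so $\mD_i$ and $\mD_j$ are partially correlated and neither equal nor independent. Getting the constant right (the apparent discrepancy between the $\frac12$ that naturally appears — as in the MLP proof — and its absence in the lemma statement) will require carefully tracking whether the $\frac12$ is folded into the definition of $H$ or into $\omega_2$; I would resolve this by writing $H(n)=\mathbb{E}[D(\cdot)D(\cdot)]$ exactly as stated and checking that the $r$-sum $\frac1{d}\sum_r \mD_{irr}\mD_{jrr}\vv_r^2 \to \sigma_v^2\,\mathbb{E}[D(\vc_{ir})D(\vc_{jr})]$ with no extra factor, since $\mathbb{E}[\vv_r^2]=\sigma_v^2$ and the mask expectation is already $H(n)$ — so the $\frac12$ in the MLP proof was an artifact of the $\alpha_{ij}\in\{0,\pi/2\}$ special case and does not recur here. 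A secondary, purely bookkeeping obstacle is keeping the two different overlap counts straight: $\omega_c$ (position-agnostic, governs $F$ and $H$) versus $\omega_2$ (position-aligned, multiplies the Gram contributions), since conflating them would give the wrong kernel.
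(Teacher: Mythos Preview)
Your proposal is correct and follows essentially the same route as the paper: decompose the finite-width NTK into the three groups $A$, $B$, $C$ of Equation~\ref{eq:cnn_ntk_rep_appendix}, invoke \citet{NEURIPS2018_ntk, pmlr-v139-yang21f} for constancy, and compute each limit via the law of large numbers / Gaussian-process argument, identifying the shared-token count $\omega_c$ as the sole parameter governing the joint law of $(\vc_{ir},\vc_{jr})$ and hence $F$ and $H$, while the position-aligned count $\omega_2$ arises from the one-hot factors in $B$ and $C$. Your eventual resolution of the constant is also the paper's: $\frac{1}{d}\sum_r \vv_r^2\,\mD_{irr}\mD_{jrr}\to \sigma_v^2\,\mathbb{E}[\mD_{irr}\mD_{jrr}]=\sigma_v^2 H(\omega_c)$ with no stray $\tfrac12$; the $\tfrac12$ in the MLP proof came from the special case $\mD_i=\mD_j$ with ReLU giving $\mathbb{E}[\mD_{irr}^2]=\tfrac12$, which here is already the value $H(K)$, so there is nothing to reconcile.
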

\begin{remark}
It can be seen that the similarity between sliding windows influences the converged NTK. As the variances are constants, we can focus on $\omega_c$ and $\omega_2$, which can be viewed as similarity metrics for sliding windows. The former does not take positional information into consideration, while the latter does. 
Particularly, we can have that $\omega_c(c_i, c_j) \geq \omega_2(c_i, c_j)$. When the two sliding windows share tokens in the right order, $\omega_2(c_i, c_j)$ becomes large.
\end{remark}

\paragraph{Proof of Lemma \ref{lemma:cnn_ntk_infinity}}
We give the proofs for each part in the NTK shown in Equation \ref{eq:cnn_ntk_rep_appendix}.
Let us prove that the \textit{ReLU} output multiplication (term $A$ in Equation \ref{eq:cnn_ntk_rep_appendix}) can be written as a function of $F(n)$ under the infinite-width condition.
\begin{proof}
The \textit{ReLU} output multiplication can be written as
\begin{equation}
    \begin{aligned}
        \frac{1}{d} \boldsymbol{\phi}^\top ( \vc_i) \boldsymbol{\phi}  (\vc_j)
        = \frac{1}{d} \sum_{r=1}^d{\phi}( \vc_{ir}) {\phi}  (\vc_{jr}),
    \end{aligned}
\end{equation} 
where $r$ refers to the $r$-th element and 
\begin{equation}
    \begin{aligned}
         \vc_{ir} =  \sum_{k=1}^{K} \mW^c_{kr} \frac{1}{\sqrt{d}} \mW^e \ve_{i+k-1},\\
         \vc_{jr} =  \sum_{k=1}^{K} \mW^c_{kr} \frac{1}{\sqrt{d}} \mW^e \ve_{j+k-1}.
    \end{aligned}
\end{equation} 
$\mW^c_{kr}$ is the $r$-th row of matrix $\mW^c_{k}$. 
Elements in $\mW^c_k$ and $\mW^e$ follow Gaussian distributions and are I.I.D (Independent and identically distributed) random variables.
With the infinite network width, given a token $e$,  elements in $\mW^c_{k} \frac{1}{\sqrt{d}}\mW^e \ve $ can also be viewed as I.I.D and follow a Gaussian process \citep{lee2018deep}.
Let $w = \mW^c_{kr} \frac{1}{\sqrt{d}}\mW^e \ve$. We can get that $w \sim \mathcal{N}(0, \sigma^2_e\sigma^2_w)$.
Similarly, we can obtain that $\vc_{ir}$ and $\vc_{jr}$ follow Gaussian distributions.

When the network width approaches infinity, the multiplication will be viewed as the expectation as follows,
\begin{equation}
    \begin{aligned}
      \lim_{d \to \infty} \frac{1}{d} \boldsymbol{\phi}^\top ( \vc_i) \boldsymbol{\phi}  (\vc_j)
        =  \mathbb{E}[\phi( \vc_{ir}) {\phi}  (\vc_{jr})].
    \end{aligned}
\end{equation} 
Then, we can arrive at 
\begin{equation}
    \begin{aligned}
      &\lim_{d \to \infty} \frac{1}{d} \boldsymbol{\phi}^\top ( \vc_i) \boldsymbol{\phi}  (\vc_j)
        = \\
        &\mathbb{E}[\phi(\sum_{i=1}^{n} w_i \!\!+\!\! \sum_{i=n+1}^K\! \!w_i)\phi(\sum_{i=1}^{n} \!w_i \!\!+\!\! \sum_{i=n+1}^K\! w'_i)] =F(n),
    \end{aligned}
\end{equation} 
where $n$ is the number of shared tokens between sliding windows $c_i$ and $c_j$.

\end{proof}


We prove term $B$ in Equation \ref{eq:cnn_ntk_rep_appendix} can be written as a function of $H(n)$ under the infinite-width condition.
\begin{proof}
With the infinite network width, we can have
\begin{equation}
    \begin{aligned}
      &\lim_{d \to \infty} \frac{1}{d} \vv^\top \mD_i \mD^\top_j  \vv= \lim_{d \to \infty} \frac{1}{d} \sum_{r=1}^d\vv^2_r \mD_{irr} \mD_{jrr}  \\
      &= \sigma^2_v \mathbb{E}[\mD_{irr} \mD_{jrr}]\\
      &\lim_{d \to \infty} \frac{1}{d}  \ve_{i+k-1}^\top  (\mW^e)^\top\mW^e \ve_{j+k-1} \\
      &= \sigma^2_e \ve_{i+k-1}^\top \ve_{j+k-1},
    \end{aligned}
\end{equation} 
where $r$ is the row number. $\mD_{irr}$ and $\mD_{jrr}$ refer to the $r$-th elements in the diagonal positions of $\mD_i$ and $\mD_j$, respectively.

Term $B$ will obey
\begin{equation}
    \begin{aligned}
        &\lim_{d \to \infty} \sum_{k=1}^K \frac{\vv^\top}{d^2}  \mD_i \mD^\top_j \vv  \ve_{i+k-1} ^\top  (\mW^e)^\top\mW^e \ve_{j+k-1}\\
        &=\sigma^2_v \sigma^2_e \mathbb{E}[\mD_{irr}\mD_{jrr}]  \ve_{i+k-1} ^\top \ve_{j+k-1}.
    \end{aligned}
\end{equation} 

We can compute the expectation $\mathbb{E}[\mD_{irr}\mD_{jrr}]$ (outputs of the step function) similarly to that of \textit{ReLU} output multiplication.

Let $w_i \sim \mathcal{N}(0, \sigma^2_e\sigma^2_w)$. $\mD_{irr}$ ($\mD_{jrr}$) equals 1 when the corresponding $\vc_{ir}>0$ ($\vc_{jr}>0$), 0 otherwise.
Then, we can obtain 
\begin{equation}
    \begin{aligned}
      &\mathbb{E}[\mD_{irr}\mD_{jrr}] \\
      &=\mathbb{E}[D(\sum_{i=1}^{n}\! w_i \!\!+\! \!\!\sum_{i=n+1}^K\! w_i)D(\sum_{i=1}^{n}\!w_i \!\!+\!\! \!\sum_{i=n+1}^K \!w'_i)]\\
      &=H(n),
    \end{aligned}
\end{equation} 
where $n$ is the number of shared tokens between sliding windows $c_i$ and $c_j$.

For the third part (term $C$ in Equation \ref{eq:cnn_ntk_rep_appendix}), when the network width approaches infinity, the expectation will be
\begin{equation}
    \begin{aligned}
        &\lim_{d \to \infty} \frac{\vv^\top}{d^2}  \mD_i\mW^c_{k'} (\mW^c_k)^\top \mD^\top_j \vv \ve^\top_i  \ve_j \\
        &= \sigma^2_v \sigma^2_w \mathbb{E}[\mD_{irr}\mD_{jrr}] \ve^\top_i  \ve_j. 
    \end{aligned}
\end{equation}
It is obvious that this term will be positive if the two windows $c_i$ and $c_j$ do not share any tokens, 0 otherwise.
\end{proof}

Now, let us look at the $F$ and $H$ functions, which have interesting properties regarding the similarities between sliding windows.

\begin{proposition}
\label{prop:cnn_window_ntk_function_infinity}
Both $F$ and $H$ functions in Lemma \ref{lemma:cnn_ntk_infinity} are monotonically increasing as the on-negative integer $n$ increases. Given two non-negative integers $n$ and $n'$, when $n'>n$, the two functions obey
\begin{equation*}
    \begin{aligned}
      F(n') \geq F(n), \\
      H(n') \geq H(n).
    \end{aligned}
\end{equation*} 
\end{proposition}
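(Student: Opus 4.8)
The plan is to express $F(n)$ and $H(n)$ as expectations over a joint Gaussian pair and exploit the fact that increasing $n$ increases the correlation between the two arguments. Write $U_n = \sum_{i=1}^n w_i + \sum_{i=n+1}^K w_i$ and $V_n = \sum_{i=1}^n w_i + \sum_{i=n+1}^K w'_i$, where all $w_i, w'_i$ are i.i.d.\ $\mathcal{N}(0,\sigma^2_e\sigma^2_w)$. First I would observe that $(U_n, V_n)$ is a centered bivariate Gaussian whose marginal variances are both $K\sigma^2_e\sigma^2_w$ (independent of $n$) and whose covariance is $n\,\sigma^2_e\sigma^2_w$; hence the correlation coefficient is $\tau_n = n/K$, which is nondecreasing in $n$. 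So the statement reduces to: the maps $\tau \mapsto \mathbb{E}[\phi(U)\phi(V)]$ and $\tau \mapsto \mathbb{E}[D(U)D(V)]$ are nondecreasing on $[0,1]$, where $(U,V)$ is a standardized-variance bivariate Gaussian with correlation $\tau$.

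For the second of these, I would use the classical closed form for the orthant probability of a bivariate normal: $\mathbb{E}[D(U)D(V)] = \Pr(U>0, V>0) = \tfrac14 + \tfrac{1}{2\pi}\arcsin \tau$, which is manifestly increasing in $\tau$. This immediately gives $H(n') \ge H(n)$ for $n' > n$, and in fact identifies $H$ explicitly as $H(n) = \tfrac14 + \tfrac1{2\pi}\arcsin(n/K)$ (consistent with the step-function computations already carried out in the appendix). For the first, I would appeal to Price's theorem (or equivalently a Gaussian integration-by-parts / Mehler-kernel argument): for the rescaled pair with fixed unit-order marginal variance $v := K\sigma^2_e\sigma^2_w$ and covariance $c := \tau v$, one has $\partial_c\, \mathbb{E}[\phi(U)\phi(V)] = \mathbb{E}[\phi'(U)\phi'(V)] = \mathbb{E}[D(U)D(V)] \ge 0$. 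Since $c$ is increasing in $n$, $F$ is nondecreasing in $n$ as well. Alternatively one can cite the known arc-cosine kernel formula $F(n) = \tfrac{v}{2\pi}\big(\sin\alpha_n + (\pi - \alpha_n)\cos\alpha_n\big)$ with $\alpha_n = \arccos(n/K)$ — this is exactly the $\mathcal{K}$ computed in the proof of Lemma~\ref{lemma:mlp_instance} — and check directly that its derivative in $\alpha_n$ is $-\tfrac{v}{2\pi}(\pi - \alpha_n)\sin\alpha_n \le 0$, so $F$ increases as $\alpha_n$ decreases, i.e.\ as $n$ increases.

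A subtlety to handle carefully: the sums defining $U_n,V_n$ reuse the same $w_1,\dots,w_n$ in both arguments, so I must verify that the shared-versus-independent split genuinely yields correlation exactly $n\sigma^2_e\sigma^2_w$; this is routine since $\operatorname{Cov}(U_n,V_n) = \sum_{i=1}^n \operatorname{Var}(w_i)$ and the remaining terms are independent across the two arguments. I expect the main obstacle to be purely expository — making the reduction to "correlation is monotone in $n$, and both expectations are monotone in correlation" clean, and deciding whether to invoke Price's theorem for $F$ or to differentiate the explicit arc-cosine formula; the latter is more self-contained given that the relevant kernel was already derived earlier in the appendix, so I would lead with that and mention Price's theorem as the conceptual reason. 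Monotonicity of $H$ via the orthant-probability formula is immediate and needs no extra machinery.
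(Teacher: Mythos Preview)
Your proposal is correct but takes a genuinely different route from the paper. You reduce the problem to monotonicity in the correlation coefficient of the bivariate Gaussian $(U_n,V_n)$ and then either invoke closed forms (the orthant probability $\tfrac14+\tfrac{1}{2\pi}\arcsin\tau$ for $H$, the arc-cosine kernel for $F$) or Price's theorem. The paper instead uses a conditioning-plus-Jensen argument: writing $F(n)=\mathbb{E}_{w_{1:n}}\big[\psi(w_{1:n})^2\big]$ with $\psi(w_{1:n})=\mathbb{E}_{w_{n+1:K}}[\phi(z_{1:K})]$, one gets $F(n+1)=\mathbb{E}_{w_{1:n}}\big[\mathbb{E}_{w_{n+1}}[\xi^2]\big]$ and $F(n)=\mathbb{E}_{w_{1:n}}\big[(\mathbb{E}_{w_{n+1}}[\xi])^2\big]$, and the inequality $\mathbb{E}[X^2]\ge(\mathbb{E}[X])^2$ finishes it; the same template handles $H$ by replacing $\phi$ with $D$.

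What each buys: your approach is more explicit---you actually identify $H(n)=\tfrac14+\tfrac{1}{2\pi}\arcsin(n/K)$ and link $F$ to the kernel already computed for Lemma~\ref{lemma:mlp_instance}, which is tidy and informative. The paper's argument is more elementary (no Price's theorem, no special-function identities) and strictly more general: it never uses Gaussianity or the specific form of $\phi$, so it proves monotonicity for \emph{any} i.i.d.\ summands $w_i,w'_i$ and \emph{any} measurable $\phi$, which is why the paper can simply say ``replace $\phi$ with $D$'' for $H$. Either argument suffices here; if you want to stay close in spirit to the surrounding appendix, the Jensen/conditioning route avoids importing external results, while your closed-form route makes the dependence on $n/K$ transparent.
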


\begin{remark}
This indicates the more similar the two sliding windows are (i.e., the more tokens they share), the larger $F$ and $H$ will be. 
\end{remark}


The core idea leveraged in the proofs is based on the inequality, $Var(x) = \mathbb{E}(x^2) - \mathbb{E}(x)^2 \geq 0$, where $x$ is a random variable yielding to a Gaussian distribution.

\paragraph{Monotonicity of $F$ function}

\begin{proof}
First, let us consider the scenario $n>0$, which means the two sliding windows share tokens.
The expectation can be computed as
\begin{equation}
    \begin{aligned}
        &F(n)=\\
        &\mathbb{E}[\phi(\sum_{i=1}^n w_i + \sum_{i=n+1}^K w_i)\phi(\sum_{i=1}^n w_i + \sum_{i=n+1}^K w'_i)]\\
        &= \int_{-\infty}^{\infty} \phi(z_{1:n}+z_{n+1:K})\phi(z_{1:n}+z'_{n+1:K}) \\
        &p(w_1)  \dots p(w_K)  p(w'_{n+1})  \dots p(w'_K) \\
        &dw_1 \dots dw_K dw'_{n+1} \dots dw'_K \\
        &= \mathbb{E}  [\psi(w_1 \dots w_n)^2],
    \end{aligned}
\end{equation} 
where 
\begin{equation}
    \begin{aligned}
 &\psi(w_1 \dots w_n)=\int_{-\infty}^{\infty} \phi(z_{1:n}+z_{n+1:K})p(w_{n+1}) \\
 &\dots p(w_K) dw_{n+1}\ \dots  dw_K\\
 &z_{1:n} = \sum_{i=1}^n w_i\\
 &z_{n+1:K}= \sum_{i=n+1}^K w_i\\
 &z'_{n+1:K}= \sum_{i=n+1}^K w'_i,
    \end{aligned}
\end{equation} 
 and $\int_{-\infty}^{\infty}$ refers to the integrals for all the variables involved.

The expectation for the multiplication of two sliding windows without sharing tokens can be written as
\begin{equation}
    \begin{aligned}
        &F(0)=\\
        &\mathbb{E}[\phi(\sum_{i=1}^n w_i + \sum_{i=n+1}^K w_i)\phi(\sum_{i=1}^n w'_i + \sum_{i=n+1}^K w'_i)]\\
        &= \mathbb{E}  [\psi(w_1 \dots w_n)]^2,
    \end{aligned}
\end{equation} 
which can be described as
\begin{equation}
    \begin{aligned}
        &F(n)=\\
        &\mathbb{E}[\phi(\sum_{i=1}^n w_i + \sum_{i=n+1}^K w_i)\phi(\sum_{i=1}^n w_i + \sum_{i=n+1}^K w'_i)] \\
        &\geq F(0)=\\
        &\mathbb{E}[\phi(\sum_{i=1}^n w_i + \sum_{i=n+1}^K w_i)\phi(\sum_{i=1}^n w'_i + \sum_{i=n+1}^K w'_i)],
    \end{aligned}
\end{equation} 
where $w_i$ and $w'_i$ are I.I.D random variables and $n \geq 1$.
This indicates that when two sliding windows share tokens, the expectation will be larger than that in the case where two sliding windows do not share any tokens.

We can prove that one more shared tokens between two sliding windows can result in an increase in expectation.
Let us increase the number of shared tokens by 1 between the two sliding windows, the expectation can be written as
\begin{equation}
    \begin{aligned}
        &F(n+1)=\\
        &\mathbb{E}[\phi(\sum_{i=1}^{n+1} w_i + \sum_{i=n+2}^K w_i)\phi(\sum_{i=1}^{n+1} w_i + \sum_{i=n+2}^K w'_i)]\\
        &= \int_{-\infty}^{\infty} \phi(z_{1:n+1}+z_{n+2:K})\phi(z_{1:n+1}+z'_{n+2:K})\\
        &p(w_1)  \dots p(w_K) p(w'_{n+2})  \dots p(w'_K) \\
        & dw_1 \dots dw_K dw'_{n+2} \dots dw'_K \\
        &=\\
        &\int_{-\infty}^{\infty}  \left[\int_{-\infty}^{\infty}\xi(w_1,\dots, w_{n+1})^2 p(w_{n+1}) dw_{n+1} \right]\\
        &p(w_1)\dots p(w_n)  dw_1 \dots dw_n,
    \end{aligned}
\end{equation} 
where $\xi(w_1,\dots, w_{n+1})=\int_{-\infty}^{\infty} \phi(z_{1:n+1}+z_{n+2:K})p(w_{n+2})\dots p(w_K) dw_{n+2}\ \dots  dw_K$.

Note that the expectation $\mathbb{E}[\phi(\sum_{i=1}^n w_i + \sum_{i=n+1}^K w_i)\phi(\sum_{i=1}^n w_i + \sum_{i=n+1}^K w'_i)]$ can be written as
\begin{equation}
    \begin{aligned}
        &F(n)=\\
        &\mathbb{E}[\phi(\sum_{i=1}^n w_i + \sum_{i=n+1}^K w_i)\phi(\sum_{i=1}^n w_i + \sum_{i=n+1}^K w'_i)]\\
        &=\int_{-\infty}^{\infty} \left[\int_{-\infty}^{\infty}\xi(w_1,\dots, w_{n+1}) p(w_{n+1}) dw_{n+1}\right]^2 \\
        &p(w_1)\dots p(w_n) dw_1 \dots dw_n.
    \end{aligned}
\end{equation} 
As we have
\begin{equation}
    \begin{aligned}
          &\int_{-\infty}^{\infty}\xi(w_1,\dots, w_{n+1})^2 p(w_{n+1}) dw_{n+1} \geq \\
          &\left[\int_{-\infty}^{\infty}\xi(w_1,\dots, w_{n+1}) p(w_{n+1}) dw_{n+1}\right]^2,
    \end{aligned}
\end{equation} 
 we can arrive at 
\begin{equation}
    \begin{aligned}
        &F(n+1)=\\
        &\mathbb{E}[\phi(\sum_{i=1}^{n+1} w_i + \sum_{i=n+2}^K w_i)\phi(\sum_{i=1}^{n+1} w_i + \sum_{i=n+2}^K w'_i)] \\
        &\geq F(n)=\\
        &\mathbb{E}[\phi(\sum_{i=1}^n w_i + \sum_{i=n+1}^K w_i)\phi(\sum_{i=1}^n w_i + \sum_{i=n+1}^K w'_i)].
    \end{aligned}
\end{equation} 
We can prove recursively for the case $F(n+l) \geq F(n)$ where $l>1$.
Therefore, the expectation will be monotonically increasing with $n$. 
\end{proof}

\paragraph{Monotonicity of $H$ function}
We can obtain the expectation of the other terms in the NTK similarly as the activation function $\phi$ can be replaced with different activation functions with non-negative outputs.

\paragraph{Proofs of Lemma \ref{lemma:cnn} and Theorem \ref{theorem:cnn}}
With Lemma \ref{lemma:cnn_ntk_infinity} and Proposition \ref{prop:cnn_window_ntk_function_infinity}, we can prove Lemma \ref{lemma:cnn} by replacing the test input $x'$ with sliding window $c$. Similarly, we can prove Theorem \ref{theorem:cnn} with Lemma \ref{lemma:cnn}.




Let us focus on a single sliding window $c$.
The converged NTK between a sliding window $c$ (consisting of tokens $e_1, e_{2}, \dots, e_{K}$) and instance $x$ obeys
\begin{equation}
    \begin{aligned}
        &\Theta_{\infty}(c, x) = 
         \sum_{j} F(\omega_c(c, c_j))
      \\
        &+\sum_{j} \sigma^2_v \sigma^2_e  H(\omega_c(c, c_j)) \omega_2(c, c_j) \\
      &+ \sum_{j}   \sigma^2_v\sigma^2_w  H(\omega_c(c, c_j) ) \omega_2(c, c_j).
    \end{aligned}
\end{equation}
If $c$ does not share tokens with any of the sliding windows in $x$, the NTK $\Theta_{\infty}(c, x)$ will reach its minimum, namely, $\Theta_{\infty}(c, x)=\sum_{j} F(\omega_c(0))$.
Otherwise, $\Theta_{\infty}(c, x)$ will be significantly large if $c$ bears similarity to the sliding windows of $x$. 
Then the dynamics of the label score of $c$ obey
\begin{equation*}
    \begin{aligned}
         \dot{s}^c  &= \frac{\rho}{m}\sum_{(x, y) \in \mathcal{D}} yg(- ys^{(x)}) \Theta_{\infty}(c, x).
    \end{aligned}
\end{equation*}

The sliding windows can be interpreted as n-grams.
Suppose an n-gram represented by a sliding window $c$ bears similarity only to the sliding windows in positive (negative) instances. In that case, it will receive positively (negatively) large gains in one direction during training and will likely end up being significant.


Let us examine what features will be learned for a single token $e$. We define a positional-relevance label score as follows,
\begin{align}
    s^e_{[k]} =  \frac{1}{\sqrt{d}} \vv^\top \boldsymbol{\phi} (\frac{1}{\sqrt{d}} \mW^c_k \mW^e \ve),
\end{align}
which reflects the label score of token $e$ in the $k$-th position of a sliding window.

It should be noted that based on our analysis, the kernel size $K$ in Equation \ref{eq:cnn_hidden_state} does not affect the monotonicity of the $F$ and $H$ functions. Suppose the sliding window $c$ shares tokens with instance $x$; the NTK will learn the features regardless of the kernel size $K$.

\subsection{SA Model}
Let us define an intermediate score $s_j$, corresponding to the label score in the $j$-th output, as follows,
\begin{equation}
    \begin{aligned}
     s_j = \vv^\top  \sum_{j=1}^{l^{(x)}}  \frac{\alpha_{ij}}{\sqrt{d}}  \vv^\top \mW^e \ve_j.
    \end{aligned}
\vspace{-2mm}
\end{equation}

The gradients can be computed as
\begin{equation}
    \begin{aligned}
     \frac{\partial s_i}{\partial \vv} &= \sum_{j=1}^{l^{(x)}}  \alpha_{ij}  \frac{1}{\sqrt{d}} \mW^e \ve_j,\\
      \frac{\partial s_i}{\partial \mW^e} &= \sum_{j=1}^{l^{(x)}} \sum_{k=1}^{l^{(x)}} \alpha_{ij} (\delta_{jk}-\alpha_{ik})  
      \frac{\vv^\top  \mW^e \ve_j}{d\sqrt{d}}\\
      &[\mW^e \ve_k\ve^\top_i+\mW^e \ve_i \ve^\top_k+P_i \ve^\top_k + P_k \ve^\top_i]+\\
      &\sum_{j=1}^{l^{(x)}}  \frac{1}{\sqrt{d}}\alpha_{ij}  \vv  \ve^\top_j,
    \end{aligned}
\end{equation}
where ${l^{(x)}} $ is the instance length and $\delta_{jk}=1$ if $j \equiv k$, 0 otherwise. $P_k$ is the positional embedding at position $k$.

We assume that the parameters are initialized as $\mW^e_{ij} \sim \mathcal{N}(0, \sigma^2_e)$, and  $\vv_{j} \sim \mathcal{N}(0, \sigma^2_v)$, and the distribution of the attention weights are independent of the parameters $\mW^e$ and $\vv$. 
Let us consider the case where the test input is simply a token $e$.
If the network width approaches infinity, $\frac{\vv^\top  \mW^e \ve_j}{d} \to 0$ and
NTK between the token $e$ and the instance $x$  will converge to $\Theta_{\infty}(e, x)$, which obeys
\begin{equation}
    \begin{aligned}
     \Theta_{\infty}(e, x) &\approx \sum_{i=1}^{l^{(x)}}  \sum_{j=1}^{l^{(x)}} \mathbb{E} (\alpha_{ij})  (\sigma^2_{e}+\sigma^2_{v}) \ve^\top \ve_j,
    \end{aligned}
\end{equation}
where $\mathbb{E} (\alpha_{ij})$ is the expectation of $\alpha_{ij}$ when elements of $\mW^e$ obeys $\mW^e_{ij} \sim \mathcal{N}(0, \sigma^2_e)$.


\subsection{MV Model}
The label score of an instance is defined as
\vspace{-2mm}
\begin{equation}
    \begin{aligned}
      s = \vv^\top \sum_j \frac{1}{d\sqrt{d}}\mM(\ve_j) \mW^e \ve_{j+1},
    \end{aligned}
    \vspace{-2mm}
\end{equation}
where $\mM(\ve_j)=\text{diag}(\mW \mW^e\ve_j)$ ($\text{diag}$ converts a vector into a diagonal matrix.) and $j=1, 2, \dots, l^{(x)}-1$.

The proof sketch is given as follows:
The gradients can be computed as 
\begin{equation}
    \begin{aligned}
     &\frac{\partial f}{\partial \vv} =  \frac{1}{d\sqrt{d}}\sum_j(\mW \mW^e \ve_j) \odot  (\mW^e \ve_{j+1}),\\
     &\frac{\partial f}{\partial \mW} =  \frac{1}{d\sqrt{d}}\sum_j [(\mW^e \ve_{j+1})\odot \vv]   (\mW^e  \ve_j)^\top,\\
     &\frac{\partial f}{\partial \mW^e} =  \frac{1}{d\sqrt{d}}\sum_j \mW^\top  [(\mW^e \ve_{j+1})\odot \vv]  \ve_j^\top, \\
     &+ \frac{1}{d\sqrt{d}}\sum_j [\vv\odot (\mW \mW^e \ve_j )] \ve_{j+1}^\top,
    \end{aligned}
    \vspace{-1mm}
\end{equation}
where $\odot $ refers to \textit{element-wise} multiplication.


Given a bigram $e_a e_b$, the NTK will be computed as
\begin{equation}
    \begin{aligned}
     &\theta(e_a e_b, x) = \sum_j
     \\
     &  \frac{\ve_j^\top}{d^3}  (\mW\mW^e)^\top \text{diag}(\mW^e \ve_{j+1} \odot \mW^e \ve_{b}) \mW\mW^e \ve_a \\
     &+ \!\frac{\ve_j^\top}{d^3} (\mW^e)^\top  \mW^e \ve_a  \ve_{j+1}^\top (\mW^e)^\top \text{diag}(\vv)^2 \mW^e \ve_b\\
     &+ \!\frac{\ve_j^\top}{d^3}\ve_a \ve_{j+1}^\top (\mW^e)^\top \text{diag}(\vv) \mW \mW^\top  \text{diag}(\vv)\mW^e\ve_b \\
     &+ \!\frac{\ve_{j+1}^\top}{d^3}\ve_b \ve_j^\top (\mW\mW^e)^\top \text{diag}(\vv)^2 \mW\mW^e\ve_a\\
     &+\!\frac{e_{j+1}^\top}{d^3}  \ve_a X(e_b)^\top Y(e_j) \\
     &+\! \frac{\ve_j^\top}{d^3}  \ve_b X(e_{j+1})^\top Y(e_a),
    \end{aligned}
    \vspace{-1mm}
\end{equation}
where 
\begin{equation}
    \begin{aligned}
       X(e_b) &=  \mW^\top  [(\mW^e \ve_b)\odot \vv],\\
       Y(e_j) &= [\vv\odot (\mW \mW^e \ve_j )] .
    \end{aligned}
    \vspace{-1mm}
\end{equation}
When the network width approaches infinity, the NTK will converge to
\begin{equation}
    \begin{aligned}
     &\Theta_{\infty}(e_a e_b, x) =\sum_j
     \\
     &  \sigma^4_e \sigma^2_w \ve_j^\top  \ve_a  \ve_{j+1}^\top   \ve_{b} \\
     &+ \sigma^2_v \sigma^2_e \sigma^2_w  \ve_j^\top   \ve_a  \ve_{j+1}^\top \ve_b\\
     &+  \sigma^2_v \sigma^2_e \sigma^2_w \ve_j^\top\ve_a \ve_{j+1}^\top  \ve_b \\
     &+ \sigma^2_v \sigma^2_e \sigma^2_w \ve_j^\top \ve_a \ve_{j+1}^\top\ve_b\\
     &=\sum_j(\sigma^4_e \sigma^2_w + 3 \sigma^2_v \sigma^2_e \sigma^2_w  )\ve_j^\top   \ve_a  \ve_{j+1}^\top \ve_b ,
    \end{aligned}
    \vspace{-1mm}
\end{equation}
which can capture the co-occurrence between bigrams.


\subsection{L-RNN Model}
We follow the work of \citet{pmlr-v139-emami21b} and \citet{gu2021combining} and focus on a linear RNN, whose hidden state is defined as follows,
\begin{equation}
    \begin{aligned}
        \vh_t = \frac{1}{\sqrt{d}}\mW^h \vh_{t-1} + \frac{1}{d}\mW \mW^e \ve_t,
    \end{aligned}
\end{equation}
where $\mW^h \in \sR^{d \times d}$ and $ \mW \in \sR^{d \times d}$ and the initial hidden state is a zero vector. We can expand the hidden states across time steps and obtain
\begin{equation}
    \begin{aligned}
        \vh_t = \frac{1}{d}\sum_{j=1}^t (\frac{\mW^h}{\sqrt{d}})^{t-j} \mW \mW^e \ve_j,
    \end{aligned}
\end{equation}
where $(\frac{\mW^h}{\sqrt{d}})^0=\mI$.

The label score of an instance is computed based on the final hidden state as
\begin{equation}
    \begin{aligned}
        s&=     \vv^\top \vh_T \\
        &=\frac{1}{d}\sum_{j=1}^T \vv^\top(\frac{\mW^h}{\sqrt{d}})^{T-j} \mW \mW^e \ve_j\\
        &=\sum_{j=1}^T s_j,
    \end{aligned}
    \label{eq:rnn_polarity_score_def_appendix}
\end{equation}
where $s_j = \frac{1}{d}\vv^\top(\frac{\mW^h}{\sqrt{d}})^{T-j} \mW \mW^e \ve_j$ and $T$ is the final time step. Note that $T-j$ means the distance between the current token and the last token in an instance and $s_j$ can be viewed as the label score for the token with a distance of $T-j$ from the last token.
The gradients will be calculated as
\begin{equation}
    \begin{aligned}
        &\frac{\partial s}{ \partial \vv}=\frac{1}{d} \sum_{j=1}^T (\frac{\mW^h}{\sqrt{d}})^{T-j} \mW \mW^e \ve_j,\\
        &\frac{\partial s}{ \partial \mW^e}= \frac{1}{d}\sum_{j=1}^T [\vv^\top (\frac{\mW^h}{\sqrt{d}})^{T-j} \mW]^\top \ve^\top_j,\\
        &\frac{\partial s}{ \partial \mW}= \frac{1}{d}\sum_{j=1}^T [\vv^\top (\frac{\mW^h}{\sqrt{d}})^{T-j} ]^\top [\mW^e\ve_j]^\top,\\
        &\frac{\partial s}{ \partial \mW^h}=\\
        &\frac{1}{d} \sum_{j=1}^T \!\! \sum_{k=0}^{T-j-1}  [\frac{\vv^\top}{\sqrt{d}} (\frac{\mW^h}{\sqrt{d}})^{T-j-k-1}]^\top \\
        &[(\frac{\mW^h}{\sqrt{d}})^k \mW \mW^e \ve_j]^\top,
    \end{aligned}
\end{equation}
where $T>1$. When $T=1$, $\frac{\partial s}{ \partial \mW^h}$ does not exist.

\begin{lemma}
\label{lemma:rnn_ntk_infinity}
Assume we initialize parameters following Gaussian distributions, i.e., $\mW_{ij} \sim \mathcal{N}(0, \sigma^2_w)$, $\mW^h_{ij} \sim \mathcal{N}(0, \sigma^2_h)$,  $\mW^e_{ij} \sim \mathcal{N}(0, \sigma^2_e)$, and  $\vv_{j} \sim \mathcal{N}(0, \sigma^2_v)$.
When the network width approaches infinity, the NTK $\Theta(x', x)$ converges to a deterministic one $\Theta_{\infty}(x', x)$, which obeys


\begin{equation*}
    \begin{aligned}
      &\Theta_{\infty}(x', x) = 
      \sum_{k=0}^{\min(T',T)-1}   \sigma^{2k}_h \sigma^2_e \sigma^2_w \ve^\top_{T'-k}  \ve_{T-k}\\
      &+ \sum_{k=0}^{\min(T',T)-1}   \sigma^{2k}_h \sigma^2_v \sigma^2_w \ve^\top_{T'-k}  \ve_{T-k}\\
      &+ \sum_{k=0}^{\min(T',T)-1}  \sigma^{2k}_h  \sigma^2_v \sigma^2_e \ve^\top_{T'-k}  \ve_{T-k}\\
      &+  \sum_{k=1}^{\min(T',T)-1}  k\sigma^{2k-2}_h \sigma^2_v   \sigma^2_w  \sigma^2_e \ve^\top_{T'-k}  \ve_{T-k}.
    \end{aligned}
\end{equation*} 
where $k=0,1,\dots,\min(T',T)-1$, which indicates the distance from the last tokens.
\end{lemma}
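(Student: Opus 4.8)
The plan is to compute the NTK block by block over the four parameter groups $\vv,\mW^e,\mW,\mW^h$, using the gradient formulas recorded just above the statement, and then to pass to the $d\to\infty$ limit in each block separately. Write $B=\mW^h/\sqrt d$, so that $s=\vv^\top\vh_T$ with $\vh_T=\tfrac1d\sum_{j=1}^T B^{\,T-j}\mW\mW^e\ve_j$, and
\begin{equation*}
\Theta(x',x)=\Big\langle\tfrac{\partial s'}{\partial\vv},\tfrac{\partial s}{\partial\vv}\Big\rangle+\Big\langle\tfrac{\partial s'}{\partial\mW^e},\tfrac{\partial s}{\partial\mW^e}\Big\rangle+\Big\langle\tfrac{\partial s'}{\partial\mW},\tfrac{\partial s}{\partial\mW}\Big\rangle+\Big\langle\tfrac{\partial s'}{\partial\mW^h},\tfrac{\partial s}{\partial\mW^h}\Big\rangle .
\end{equation*}
After substituting the gradients, each block becomes a double sum over the time indices $j',j$ (with two extra unrolling indices in the $\mW^h$ block) of a trace of products of the random matrices $\mW^h,\mW,\mW^e$ contracted against the fixed one-hot vectors $\ve_{j'},\ve_j$.

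The single analytic fact that drives everything is that, in the infinite-width regime---invoking the Gaussian-process / tensor-program machinery already used for RNN-NTKs \citep{pmlr-v139-emami21b, alemohammad2021_rnn_ntk, pmlr-v139-yang21f} and the convergence-and-constancy result cited in the paper---repeated application of $B$ behaves like a chain of \emph{independent} Gaussian maps. Concretely, for vectors $\vu,\vu'$ independent of $\mW^h$, $\tfrac1d\langle B^{a}\vu,B^{b}\vu'\rangle$ converges to $\sigma^2_h{}^{a}\cdot\tfrac1d\langle\vu,\vu'\rangle$ when $a=b$ and to $0$ when $a\ne b$; similarly $\tfrac1d\mW\mW^\top\to\sigma^2_w\mI$, $\tfrac1d(\mW^e)^\top\mW^e$ acts on one-hots as multiplication by $\sigma^2_e$, and the mixed forward--backward correlation $\tfrac1d\vv^\top\mW^e\ve_j\to0$ (the same vanishing used in the SA analysis). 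The upshot is a \emph{depth-matching} phenomenon: a trace survives the limit only when the number of $B$-factors coming from $x'$ equals the number coming from $x$, which forces $T'-j'=T-j=:k$ and collapses every surviving one-hot contraction to $\ve^\top_{T'-k}\ve_{T-k}$, with $k\in\{0,\dots,\min(T',T)-1\}$ since $1\le j'\le T'$ and $1\le j\le T$.

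I would then assemble the four blocks. The $\vv$-block is $\tfrac1{d^2}\sum_{j',j}\langle B^{T'-j'}\mW\mW^e\ve_{j'},B^{T-j}\mW\mW^e\ve_j\rangle$; depth matching together with $\mW\mW^\top\to\sigma^2_w d\,\mI$ and the action of $(\mW^e)^\top\mW^e$ gives $\sum_k\sigma^2_h{}^{k}\sigma^2_e\sigma^2_w\,\ve^\top_{T'-k}\ve_{T-k}$. The $\mW^e$-block peels the trailing $\mW^e$ pair into a one-hot product while keeping a $\vv^\top(\text{powers of }B)\mW$ factor on each side, producing $\sum_k\sigma^2_h{}^{k}\sigma^2_v\sigma^2_w\,\ve^\top_{T'-k}\ve_{T-k}$; the $\mW$-block is symmetric and gives $\sum_k\sigma^2_h{}^{k}\sigma^2_v\sigma^2_e\,\ve^\top_{T'-k}\ve_{T-k}$. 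The $\mW^h$-block requires more care: its gradient carries the unrolling sum $\tfrac1d\sum_{j}\sum_{q=0}^{T-j-1}\big[\tfrac{\vv^\top}{\sqrt d}B^{\,T-j-q-1}\big]^\top\big[B^{\,q}\mW\mW^e\ve_j\big]^\top$, so its self-inner-product has \emph{two} depth constraints---one on the $\vv$-side exponents, one on the $\mW\mW^e$-side exponents. For fixed total depth $k=T'-j'=T-j$ the unrolling index $q$ ranges over $0,\dots,k-1$, and each value contributes the identical amount $\sigma^2_h{}^{\,k-q-1}\sigma^2_v\cdot\sigma^2_h{}^{\,q}\sigma^2_w\sigma^2_e\,\ve^\top_{T'-k}\ve_{T-k}=\sigma^2_h{}^{\,k-1}\sigma^2_v\sigma^2_w\sigma^2_e\,\ve^\top_{T'-k}\ve_{T-k}$; summing over $q$ yields the multiplicity factor $k$ and the term $\sum_{k\ge1}k\,\sigma^2_h{}^{\,k-1}\sigma^2_v\sigma^2_w\sigma^2_e\,\ve^\top_{T'-k}\ve_{T-k}$ (the $k=0$ term vanishes). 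Adding the four sums reproduces the stated $\Theta_\infty(x',x)$; constancy of the NTK along training follows from the cited results, and specializing $x'$ to a single token at a prescribed distance recovers the $\Theta_\infty(e,k,x)=\rho(k)\,\ve^\top\ve_{l^{(x)}-k}$ row of Table~\ref{tab:knowledge_cons_model}.

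The main obstacle is making the ``successive independent Gaussian / depth-matching'' step fully rigorous: proving that cross-depth traces of products of $\mW^h$ (and of $\mW^h$ with $\mW,\mW^e,\vv$) vanish while same-depth traces concentrate on the advertised products of variances, together with the accompanying power counting that keeps every block $O(1)$ after the $1/d$ (and, in the $\mW^h$ block, $1/\sqrt d$) normalizations. This is precisely the content one borrows from the tensor-program master theorem and the RNN-NTK analyses of \citet{pmlr-v139-emami21b} and \citet{alemohammad2021_rnn_ntk}; once that machinery is granted, the remainder---in particular the combinatorial bookkeeping that turns the $\mW^h$-block double sum into the factor $k$---is routine.
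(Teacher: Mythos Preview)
Your proposal is correct and follows essentially the same route as the paper: decompose the NTK into the four parameter blocks, use the depth-matching fact that $\tfrac1d\langle B^{a}\vu,B^{b}\vu'\rangle$ survives only when $a=b$, and read off the four sums. The only minor difference is that the paper establishes the key asymptotics ($\tfrac1d(\mW^h)^\top\mW^h\to\sigma_h^2\mI$ and the vanishing of cross-depth traces) by direct elementwise computation rather than by invoking the tensor-program master theorem, and it then says ``similarly'' for the $\mW^e,\mW,\mW^h$ blocks whereas you spell out the $\mW^h$ combinatorics that produce the factor $k$.
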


\begin{proof}
The multiplication between $\mW^h$ and its transpose can be computed as
\begin{equation}
    \begin{aligned}
        (\frac{\mW^h}{\sqrt{d}})^\top \frac{\mW^h}{\sqrt{d}}=\frac{1}{d}(\mW^h)^\top \mW^h,
    \end{aligned}
\end{equation}
where each element $w_{ij}$ in $\mW^h$ follows $w_{ij} \sim \mathcal{N}(0, \sigma^2_h)$.
Each element $w'$ in the output of the multiplication will be
\begin{equation}
    \begin{aligned}
         \lim_{d \to \infty} w'_{ij} &=  \lim_{d \to \infty} \frac{1}{d}\sum_{r=1}^d w_{ri} w_{rj}= \mathbb{E}[w_{ri} w_{rj}]\\
         &=\begin{cases}
    \sigma^2_h,& \text{if } i \equiv j\\
    0,              & \text{otherwise}
\end{cases}
    \end{aligned}
\end{equation}
Hence,
\begin{equation}
    \begin{aligned}
      \lim_{d \to \infty}  (\frac{\mW^h}{\sqrt{d}})^\top \frac{\mW^h}{\sqrt{d}}=\sigma^2_h \mI,
    \end{aligned}
\end{equation}
where $\mI \in \sR^{d \times d}$ is an identity matrix.

We can also obtain
\begin{equation}
    \begin{aligned}
        \lim_{d \to \infty}  \frac{1}{d}\vv^\top_{\alpha}  (\frac{\mW^h}{\sqrt{d}})^k \vv_{\beta} = 0,
    \end{aligned}
\end{equation}
where $k>0$ is an integer and the elements in vectors $\vv_{\alpha}$ and $\vv_{\beta} \in \sR^d$ are Gaussian distributed with zero means. Based on this, we can arrive at
\begin{equation}
    \begin{aligned}
        &\lim_{d \to \infty}  \frac{1}{d}\vv^\top_{\alpha}  \left[(\frac{\mW^h}{\sqrt{d}})^{k'}\right]^\top (\frac{\mW^h}{\sqrt{d}})^{k} \vv_{\beta} =\\
        &\sigma^{2k}_h   \lim_{d \to \infty}  \frac{1}{d} \vv^\top_{\alpha} (\frac{\mW^h}{\sqrt{d}})^{k'-k} \vv_{\beta}=0.
    \end{aligned}
\end{equation}
where $k'>k$ (both are integers).
A similar conclusion can be obtained for the case $k'<k$.


Given instances $x'$ and $x$, whose label scores are $s'$ and $s$ respectively, we can have
\begin{equation}
    \begin{aligned}
        &\lim_{d \to \infty}   <\frac{\partial s'}{\partial \vv}, \frac{\partial s}{\partial \vv}> \\
        &=\lim_{d \to \infty} \frac{1}{d^2} \sum_{i=1}^{T'} \sum_{j=1}^T \ve^\top_i (\mW^e)^\top \mW^\top \left[(\frac{\mW^h}{\sqrt{d}})^{T'-i}\right]^\top \\
        &(\frac{\mW^h}{\sqrt{d}})^{T-j} \mW \mW^e \ve_j
    \end{aligned}
\end{equation}
Like what we have done previously, we can obtain that the elements in $ \ve^\top_i (\mW^e)^\top \mW^\top$ and $  \mW \mW^e \ve_j$ are Gaussian distributed. Let $\vv^\top_{\alpha} =\frac{1}{\sqrt{d}} \ve^\top_i (\mW^e)^\top \mW^\top$ and $\vv_{\beta}= \frac{1}{\sqrt{d}} \mW \mW^e \ve_j$, we will get
\begin{equation}
    \begin{aligned}
        &\lim_{d \to \infty}   <\frac{\partial s'}{\partial \vv}, \frac{\partial s}{\partial \vv}> \\
        &= \sum_{(i,j), T'-i \equiv T-j} \sigma^{2k}_h \sigma^2_e \sigma^2_w\ve^\top_i \ve_j\\
        &=\sum_{k=0}^{\min(T',T)-1}   \sigma^{2k}_h \sigma^2_e \sigma^2_w \ve^\top_{T'-k}  \ve_{T-k}
    \end{aligned}
\end{equation}
where $T'-i=T'-j=k$.

Similarly, we can obtain terms $<\frac{\partial s}{ \partial \mW^e}, \frac{\partial s}{ \partial \mW^e}>$, $<\frac{\partial s'}{ \partial \mW}, \frac{\partial s}{ \partial \mW}>$ and $<\frac{\partial s'}{ \partial \mW^h}, \frac{\partial s}{ \partial \mW^h}>$.

\end{proof}
Let $k=T-j$, we can re-write $s_j$ in Equation \ref{eq:rnn_polarity_score_def_appendix} as 
\begin{equation}
    \begin{aligned}
        s^e_{[k]} = \frac{1}{d}\vv^\top(\frac{\mW^h}{\sqrt{d}})^{k} \mW \mW^e \ve,
    \end{aligned}
\end{equation}
which means the label score for token $e$ at position $k$ from the last token. We thereby define an NTK $\Theta(e, k, x)$ to represent the interaction between token $e$ at position $k$ and instance $x$.

When the network width approaches infinity, the NTK $\Theta(e,k,x)$ converges to a deterministic one $\Theta_{\infty}(e,k,x)$, which obeys


\begin{equation}
    \begin{aligned}
      &\Theta_{\infty}(e,k,x) = \rho(k) \ve^\top \ve_{T-k},
    \end{aligned}
\end{equation} 

where $\rho(k)=\sigma^{2k}_h \sigma^2_e \sigma^2_w+\sigma^{2k}_h \sigma^2_v \sigma^2_w+\sigma^{2k}_h  \sigma^2_v \sigma^2_e+ k\sigma^{2k-2}_h \sigma^2_v  \sigma^2_w  \sigma^2_e$ 
and $k$ is a non-negative integer. To make it consistent, we can also replace the instance length with $l^{(x)}$.


\subsection{Multi-class Classification}
Compared to the binary architecture in the main paper, the last linear layer will be modified to project the hidden states into a $L$-dimension vector ($L$ is the label space), and the sigmoid layer will be replaced by a \textit{softmax} layer.
 
The label score of an instance will be described as:
\begin{align}
	\vs(t) = \vf_t(x;\theta_t), 
\end{align}
where $ \vs(t) $ is a vector with a dimension of $L$. For brevity, we omit the denotation $t$.

 We can get the probability distribution for all the labels:
 \begin{align}
	\vp=\text{softmax}(\vs) ,
 \end{align}
 where $\boldsymbol{p} \in R^L$.
 
 The cross-entropy loss will be used, and the loss  can be computed as:
 \begin{align}
	\mathcal{L}=-\frac{1}{m} \sum_{(x, y) \in \mathcal{D}} \log (\vp^{(x)})^\top  \vy^{(x)},
 \end{align}
 where $\vp^{(x)}$ refers to the \textit{softmax} output for instance $x$ and $\vy^{(x)}$ is the \textit{one-hot} label vector for instance $x$.
 The derivative of $\mathcal{L}$ with respect to $vs$ is computed as follows,
 \begin{equation}
    \begin{aligned}
      \frac{\partial \mathcal{L}}{\partial \vs^{(x)}} = \frac{1}{m}  (\vy^{(x)} - \vp^{(x)}).
    \end{aligned}
    \label{eq:multi_class_deriv}
\end{equation}

Given a test input $x'$, the learning dynamics of its output from the model with the infinite-width network can be described as
\begin{equation}
    \begin{aligned}
      \dot{\vs}' & = \frac{1}{m}\!\!\sum_{( x, y) \in \mathcal{D}}\!\!\!\!  (\vy^{(x)} - \vp^{(x)}) \boldsymbol{\Theta}_{\infty}(x', x),
    \end{aligned}
    \label{eq:polarity_score_der_multiclass}
\end{equation}
where $\boldsymbol{\Theta}_{\infty}(x', x) \in \sR^{L \times L}$ refers to the converged NTK determined at initialization. 

We can have a similar analysis on this dynamics as $(\vy^{(x)} - \vp^{(x)})$ will only be positive in the dimension where $\vy^{(x)}_z = 1$ ($z$ is the dimension index in $\vy$). Suppose $\boldsymbol{\Theta}_{\infty}(x', x) \in \sR^{L \times L}$ works in a way that increases the influence of the corresponding dimension $z$ in $\dot{\vs}' $ when $x'$ is associated with label $z$, i.e., the label score corresponding to such a label will receive a positive gain and grow to be large.

\section{Influence of Activation Functions}
\label{section:analysis_activation}
As we previously mentioned, the MLP model can have an induced bias for the features learned between positive tokens and negative tokens, while the self-attention model does not have this problem. Actually, this problem is caused by the activation function \textit{ReLU} used in the MLP model.
Apart from the \textit{ReLU} function, we considered the \textit{identity map} ($\boldsymbol{\phi}=\mI$), and \textit{tanh} activation functions.

Let us consider token $e$ from the vocabulary and its NTK with the training instance $x$.

\paragraph{$\boldsymbol{\phi}=\mI$}
In this case, the model is linear.
 When the network width approaches infinity, $\Theta(x', x)$ converges to a deterministic NTK $\Theta_{\infty}(x', x)$ during training which obeys
\begin{equation}
    \begin{aligned}
      \Theta_{\infty}(x', x) &\!=\!  \sum_{i=1}^{l^{(x')}}  \sum_{j=1}^{l^{(x)}}\sigma^2_e\sigma^2_w \ve^\top_i\ve_j  \\
      &+\sum_{i=1}^{l^{(x')}}  \sum_{j=1}^{l^{(x)}}  \sigma^2_e\sigma^2_v\ve^\top_i\ve_j \\
      &+\sum_{i=1}^{l^{(x')}}  \sum_{j=1}^{l^{(x)}}\sigma^2_w \sigma^2_v  \ve^\top_i\ve_j \\
      &\!=\!  \sum_{i=1}^{l^{(x')}}  \sum_{j=1}^{l^{(x)}} (\sigma^2_e\sigma^2_w \!+\! \sigma^2_e\sigma^2_v\!+\!\sigma^2_w \sigma^2_v) \ve^\top_i\ve_j.
    \end{aligned}
\end{equation}

Given token $e$, the NTK $\Theta_{\infty}(e, x)$ obeys
\begin{equation}
    \begin{aligned}
      \Theta_{\infty}(e, x)  
      \!=\!    \sum_{j=1}^{l^{(x)}} (\sigma^2_e\sigma^2_w+\sigma^2_e\sigma^2_v+\sigma^2_w \sigma^2_v) \ve^\top\ve_j,
    \end{aligned}
\end{equation}
which means the NTK is affected by the frequency that $e$ is seen in instance $x$. This is similar to the MLP model with the \textit{ReLU} activation function but without the induced bias.

\paragraph{$\boldsymbol{\phi}=\tanh$}

Suppose $\boldsymbol{\phi}=\tanh$, then we have that 
\begin{equation}
    \begin{aligned}
        &\lim_{d \to \infty} \frac{1}{d}  \boldsymbol{\phi}^\top(\vc_e) \boldsymbol{\phi}(\vc_j) \\
        &=\lim_{d \to \infty} \frac{1}{d}  \tanh^\top(\vc_e) \tanh \phi(\vc_j) \\
        &= \mathbb{E}[\tanh(\vc_{er})\tanh(\vc_{jr}) ],
    \end{aligned}
\end{equation}
where $r$ is the row index and $\mathbb{E}[\tanh(\vc_{er})\tanh(\vc_{jr})]$ is a constant regardless of $r$.

It can be inferred that $\vc_{er}$ yields to a Gaussian distribution with a zero mean. As $\tanh$ is an odd function, we will arrive at $\mathbb{E}[\tanh(\vc_{er})]=0$.

Suppose $\ve^\top \ve_j=0$, namely, $e \neq e_j$, $\tanh(\vc_{er})$ and $\tanh(\vc_{jr})$ are two independent random variables and $ \mathbb{E}[\tanh(\vc_{er})\tanh(\vc_{jr}) ]=0$.
Similarly, when $e \neq e_j$, we can obtain
\begin{equation}
    \begin{aligned}
        &\lim_{d \to \infty}  \frac{1}{d^2}  \vv^\top \mD  \mD_j  \vv \ve^\top  (\mW^e)^\top \mW^e \ve_j = 0\\
        &\lim_{d \to \infty}  \frac{1}{d^2}  \vv^\top \mD   \mW(\mW)^\top \mD_j \vv \ve^\top \ve_j =0.
    \end{aligned}
\end{equation}

Suppose $\ve^\top \ve_j=1$, we can obtain
\begin{equation}
    \begin{aligned}
         &\mathbb{E}[\tanh(\vc_{er})\tanh(\vc_{jr}) ] =\mathbb{E}[ \tanh^2(\vc_{er})]\\
        &\lim_{d \to \infty}  \frac{1}{d^2}  \vv^\top \mD  \mD_j  \vv \ve^\top  (\mW^e)^\top \mW^e \ve_j =\\
        &\sigma^2_e \sigma^2_v \mathbb{E}[\mD^2_{jrr}] \ve^\top \ve_j \\
        &\lim_{d \to \infty}  \frac{1}{d^2}  \vv^\top \mD   \mW(\mW)^\top \mD_j \vv \ve^\top \ve_j =\\
        &\sigma^2_w \sigma^2_v  \mathbb{E}[\mD^2_{jrr}] \ve^\top \ve_j ,
    \end{aligned}
\end{equation}
where $\mD_{jrr}$ is the $r$-th diagonal element in $\mD_j$.
Given token $e$, the converged NTK is computed as
\begin{equation}
    \begin{aligned}
     & \Theta_{\infty}(e, x)   
      =  \sum_{j=1}^{l^{(x)}} \mathbb{E}[ \tanh^2(\vc_{er})]\ve^\top\ve_j + \\
    & \sum_{j=1}^{l^{(x)}} (\sigma^2_e\sigma^2_v\mathbb{E}[\mD^2_{jrr}] +\sigma^2_w \sigma^2_v\mathbb{E}[\mD^2_{jrr}] ) \ve^\top\ve_j.
    \end{aligned}
\end{equation}
This indicates the MLP model with the $\tanh$ activation does not have the induced bias.

\section{More Experimental Results}
\label{section:more_experimental_results}
The statistics of language modeling datasets are listed in Table \ref{tab:language_modeling_datasets}.
The performances are listed in Table \ref{tab:ntk_performance_model_data}\footnote{The MV and L-RNN models are not stable during training on IMDB and Agnews.}.
 
 

\begin{table}[t!]
\centering
\scalebox{0.8}{
\begin{tabular}{cccll}
\toprule
\multicolumn{2}{c}{\textbf{Dataset}}  & \textbf{Train} & \textbf{Dev} & \textbf{Test} \\
\midrule
\multirow{2}{*}{PTB}     & Token Num  & 887,521        & 70,390       & 78,669        \\
                         & Vocab Size & \multicolumn{3}{c}{10,000}                    \\
                         \midrule
\multirow{2}{*}{Wiki2}   & Token Num  & 2,088,628      & 217,646      & 245,569       \\
                         & Vocab Size & \multicolumn{3}{c}{33,278}                    \\
                         \midrule
\multirow{2}{*}{Shakespeare} & Token Num  & 1,003,854    & 111,540      & -       \\
                         & Vocab Size & \multicolumn{3}{c}{65}  
                         \\
                         \bottomrule
\end{tabular}
}
\caption{Language modeling datasets statistics.}
\label{tab:language_modeling_datasets}
\end{table}

 
 

\begin{table}[]
\centering
\scalebox{0.8}
{
\begin{tabular}{ccccccc}
\toprule
\multicolumn{2}{c}{\textbf{Dataset}} & \textbf{MLP} & \textbf{CNN} & \textbf{SA} &\textbf{MV} &\textbf{L-RNN} \\
\midrule
\multirow{2}{*}{SST}        & valid  & \textcolor{black}{79.4}         & \textcolor{black}{79.7}         & \textcolor{black}{80.4}                    &  77.3&79.4               
\\
                            & test   & \textcolor{black}{81.5}         & \textcolor{black}{81.0}         & \textcolor{black}{80.7}                    & 77.8    &80.7             
                            \\
\multirow{2}{*}{SSTwsub}        & valid  & \textcolor{black}{79.6}         & \textcolor{black}{78.7}         & \textcolor{black}{79.9}                    & 80.5    &76.9            
\\
                            & test   & \textcolor{black}{79.0}         & \textcolor{black}{79.2}         & \textcolor{black}{80.3}                    & 81.3    &77.5             
                            \\
\multirow{2}{*}{IMDB}       & valid  & \textcolor{black}{91.5}             & \textcolor{black}{91.8}             & \textcolor{black}{91.7}                        & - & -    
\\
                            & test   & \textcolor{black}{91.8}             & \textcolor{black}{91.4}             & \textcolor{black}{91.6}  &- &-                              
                            \\
\multirow{2}{*}{Agnews}   & valid  & \textcolor{black}{91.5}             & \textcolor{black}{91.4}             & \textcolor{black}{91.7}                       & - & -
\\
                            & test   & \textcolor{black}{91.3}             & \textcolor{black}{91.1}             & \textcolor{black}{91.5}                        & - & -         
                            \\
                            \bottomrule
\end{tabular}
}
\caption{Accuracy (\%) on the datasets. Adagrad optimizers. ``-'' refers to that the training is unstable.}
\label{tab:ntk_performance_model_data}
\end{table}

\paragraph{Extracted Tokens\&Bigrams}
We automatically extracted tokens associated with specific labels as shown in Table \ref{tab:extracted_token_num}. 

\begin{table}[]
\scalebox{0.8}{
\begin{tabular}{cc}
\toprule
\textbf{Dataset} & \textbf{Extracted Token (bigram) Num}        \\
\midrule
SST              & 412 (+)/ 265 (-)                    \\
SSTwsub              & 73 (+)/ 47 (-)                    \\
SSTwsub (bigram)             & 779 (+)/ 542 (-)                    \\
IMDB             & 414 (+)/363 (-)                     \\
Agnews           & 441 (I)/540 (II)/297 (III)/334 (IV) \\
\bottomrule
\end{tabular}
}
\caption{Numbers of extracted tokens. ``(+)'' and ``(-)'' refer to tokens associated with the \textit{positive} and \textit{negative} labels, respectively. ``(I)'', ``(II)'', ``(III)'', and ``(IV)'' refer to the Class 1-4 of Agnews.}
\label{tab:extracted_token_num}
\end{table}

Adjectives with polarity were extracted automatically from SSTwsub. We first extracted tokens that were seen more frequently in either positive or negative instances than in the other instances, i.e., the frequency ratio either larger than 3 or less than 1/3. Then we used the \textit{textblob} package \footnote{\url{https://textblob.readthedocs.io/en/dev/} } to find out adjectives from those extracted tokens.
Examples are shown in Table \ref{tab:selected_adjectives_polarity}.
\begin{table}[h]
\centering
\scalebox{0.75}{
\begin{tabular}{cl}
\toprule
\textbf{} & \multicolumn{1}{c}{\textbf{Tokens}}                                                           \\
\midrule
+     & \begin{tabular}[c]{@{}l@{}}inventive, nice, authentic, sympathetic, lovable,\\   grand, happy, enthusiastic, noble,\\ detailed, exotic, remarkable, charismatic, ...\end{tabular}                                \\
\hline
-      & \begin{tabular}[c]{@{}l@{}}inexplicable, feeble, sloppy, disastrous, stupid,\\ terrible, unhappy, horrible, atrocious, idiotic,\\ angry, uninspired, vicious, unfocused, artificial, ...\end{tabular}                                                                           
\\
\bottomrule
\end{tabular}}
\caption{Examples of the extracted positive adjectives (``+'') and negative adjectives (``-'') from the SSTwsub dataset. }
\label{tab:selected_adjectives_polarity}
\vspace{-4mm}
\end{table}

\subsection{Language Modeling}
\label{subsection:lm_results}
Language modeling can also be viewed as a classification task, where the label space is the vocabulary size and each label is a token in the vocabulary.
The model output before the \textit{softmax} layer is a vector with a dimension of the vocabulary size. Each dimension corresponds to a label, i.e., a token in the vocabulary.
We extracted 20 most frequent words from the PTB \footnote{\url{https://catalog.ldc.upenn.edu/docs/LDC95T7/cl93.html}} and Wiki2 datasets \citep{DBLP:journals/corr/MerityXBS16}, respectively.
For each extracted token, we searched for the top 30 token-label pairs. In addition, we searched for the bottom 30 token-label pairs for comparison.
We trained a two-layer Transformer language model\footnote{\href{Pytorch Transformer Tutorial}{https://pytorch.org/tutorials/beginner/transformer\_tutorial.html}} on PTB and Wiki2, with the embeddings size and hidden size of 200.
Figure \ref{fig:lm_polarity_score_appendix} shows that the co-occurrence can be generally captured by the Transformer model. Specifically, given an extracted token, its output (with a dimension of the vocabulary size) will likely have relatively larger values in the positions corresponding to the majorly co-occurring tokens than in the positions corresponding to the rarely co-occurring tokens. 
\begin{figure}[t]
\centering
    \begin{subfigure}{0.45\textwidth}
        \centering
        \captionsetup{width=.8\linewidth}
        \includegraphics[scale=0.35]{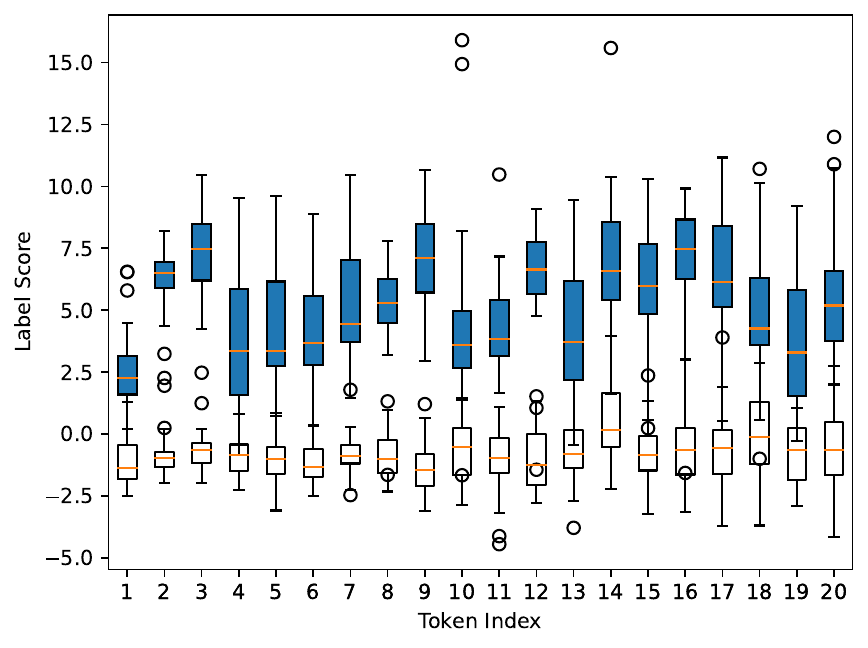}
        \caption{PTB}
        \label{fig:ptb_lm_polarity_score}
     \end{subfigure}
     \begin{subfigure}{0.45\textwidth}
        \centering
        \captionsetup{width=.8\linewidth}
        \includegraphics[scale=0.35]{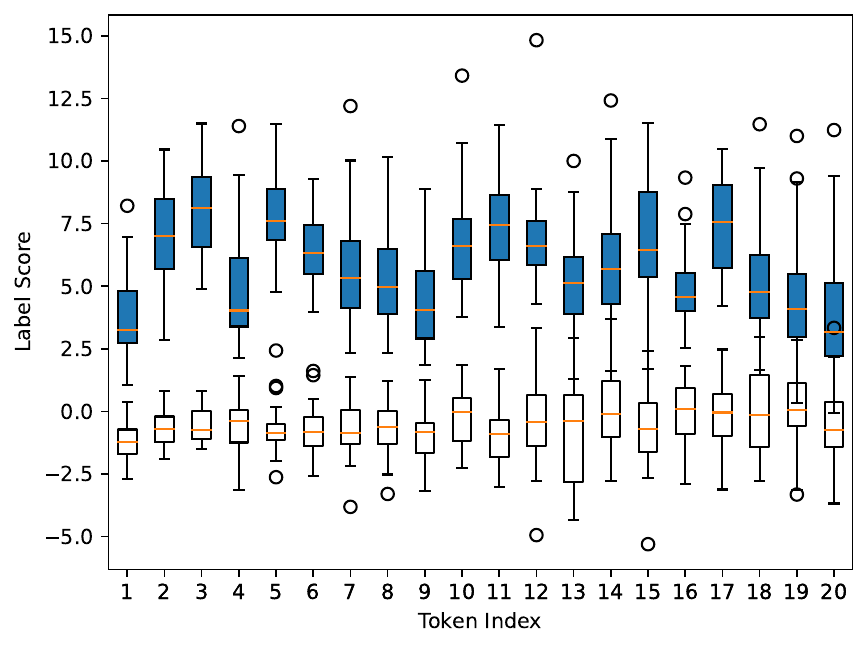}
        \caption{Wiki2}
        \label{fig:wiki2_lm_polarity_score}
     \end{subfigure}
     \vspace{-3mm}
     \caption{Distributions of the label scores for majorly co-occurring token pairs (in blue) and rarely co-occurring token pairs.}
     \label{fig:lm_polarity_score_appendix}
\end{figure}

For the experiment on nanoGPT, we followed the settings in the quick start part and trained the model on the char-level Shakespeare dataset. The max iteration number is set as 2000 instead of the default 5000.
We extracted the most 20 frequent chars and calculated their co-occurrences with each char in the vocabulary (65 chars in total). For each extracted char, we extracted the top 5 and bottom 5 co-occurring chars, respectively from the vocabulary.
Feeding each extracted char into the model, we could get the output vector, as well as the label scores for majorly co-occurring chars and rarely co-occurring chars.

\subsection{Negation}
Aside from the results on the self-attention model, we also conducted experiments to verify the ability to capture negation phenomena on the CNN and the L-RNN models. 
\begin{figure*}[]
\centering
     \begin{subfigure}{0.45\textwidth}
        \centering
        \includegraphics[scale=0.45]{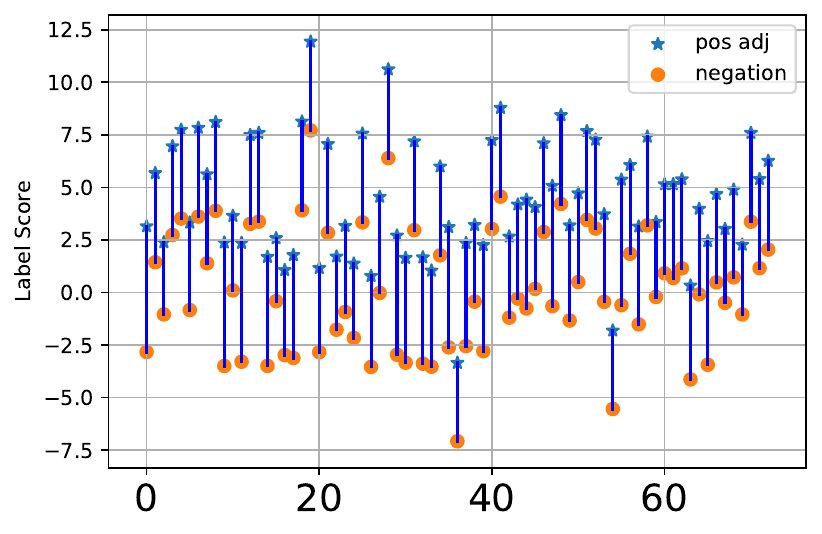}
        \vspace{-3mm}
        \caption{CNN, pos adjectives}
        \label{fig:cnn_pos_adj_label_score}
     \end{subfigure}
     \begin{subfigure}{0.45\textwidth}
        \centering
        \includegraphics[scale=0.45]{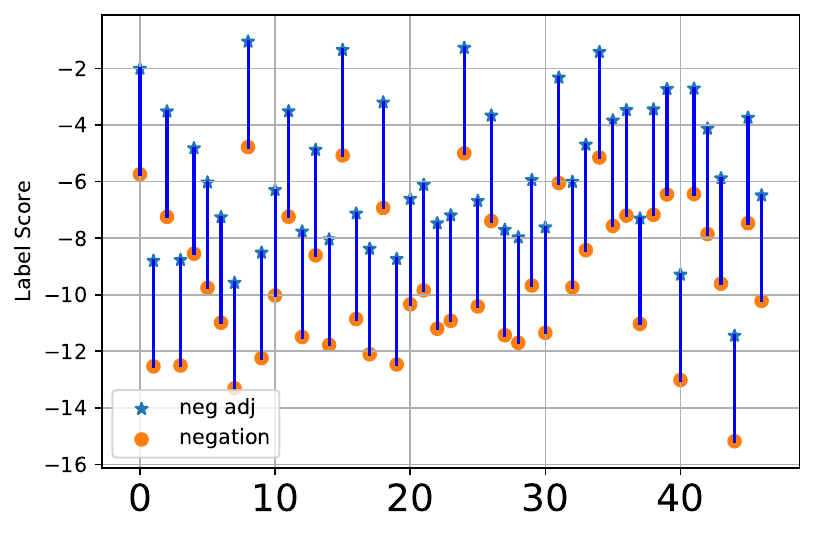}
        \vspace{-3mm}
        \caption{CNN, neg adjectives}
        \label{fig:cnn_neg_adj_label_score}
     \end{subfigure}

    \begin{subfigure}{0.45\textwidth}
        \centering
        \includegraphics[scale=0.45]{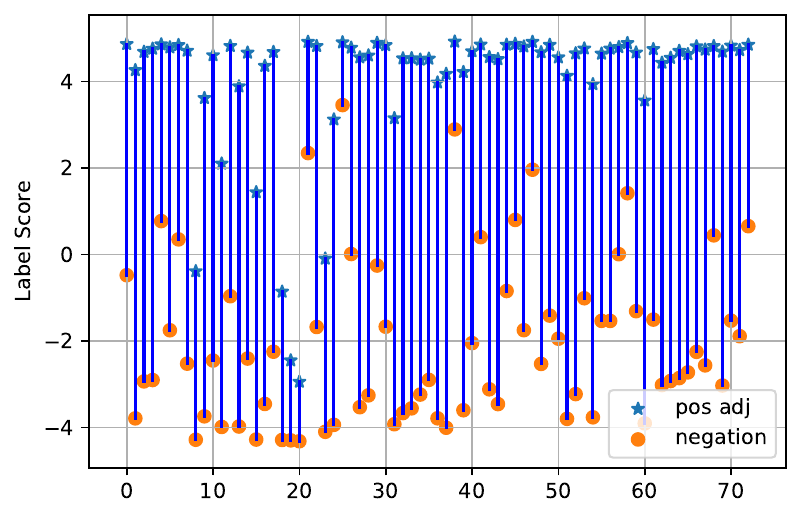}
        \vspace{-3mm}
        \caption{TR, pos adjectives}
        \label{fig:tr_pos_adj_label_score}
     \end{subfigure}
     \begin{subfigure}{0.45\textwidth}
        \centering
        \includegraphics[scale=0.45]{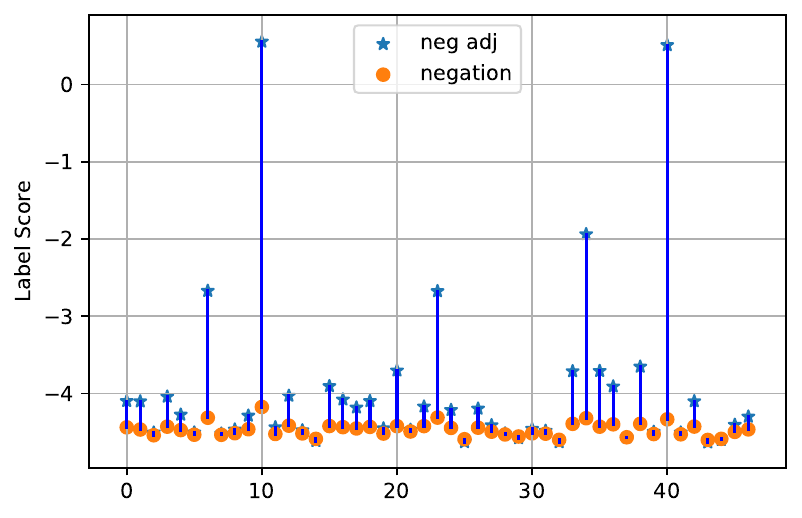}
        \vspace{-3mm}
        \caption{TR, neg adjectives}
        \label{fig:tr_neg_adj_label_score}
     \end{subfigure}
     \vspace{-3mm}
     \caption{Label scores for the \textit{positive adjectives} (pos adjectives) and \textit{negative adjectives} (neg adjectives) as well as their negation expressions. ``TR'' refers to the Transformer model (one head, one layer)}.
     \label{fig:model_pos_neg_adjectives_polarity_score}
\end{figure*}
It can be seen from Figure \ref{fig:model_pos_neg_adjectives_polarity_score} that both the CNN and Transformer models can capture the negation for the positive adjectives. But it seems they do not capture such phenomena on the negative adjectives.
The CNN model handles the negation bigrams in a linear combination way, and the polarity of a negation bigram is the combination of the polarity of the two tokens involved.
As the token {\em not} appears more in negative instances, it is assigned negative label scores. Therefore, adding the token {\em not} to a positive adjective can weaken its positive polarity but for a negative adjective, adding {\em not} can strengthen its negative polarity.
This also implies the limitation of such models: they rely largely on token-label features.

We also extracted 65 phrases (less than 11 words, 17 positive and 48 negative) starting with negation words {\em not}, {\em never}, and {\em hardly} from SSTwsub.
We computed their label scores and the label scores of their sub-phrase constructed by removing the negation words. Figure \ref{fig:sa_phrase_diff_label_score_sstswub} shows the differences between the label scores from the subphrases and the phrases are all positive, indicating the negation words play negative roles in a linear combination and do not reverse the polarity of negative subphrases.

\begin{figure}
    \centering
\includegraphics[scale=0.38]{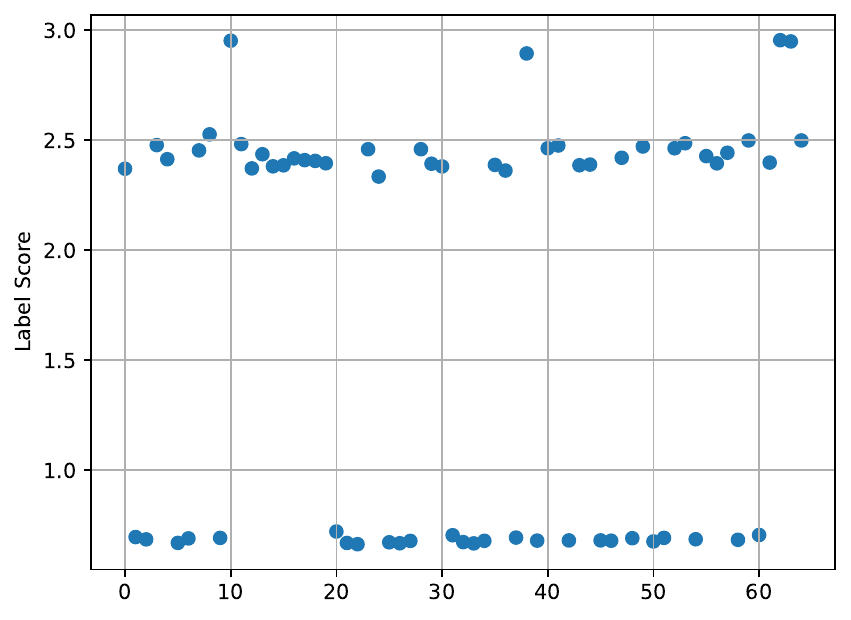}
    \caption{The label score differences between the subphrases and the phrases. SSTwsub. SA model.}
    \label{fig:sa_phrase_diff_label_score_sstswub}
\end{figure}


\subsection{Feature Extraction}
\label{subsection:feature_extraction_results}

 \paragraph{SiLU}
Figure \ref{fig:mlp_silu_label_score} shows \textit{SiLU} can also prevent an induced bias in the features captured.
\begin{figure}[t]
    \centering
    \includegraphics[scale=0.38]{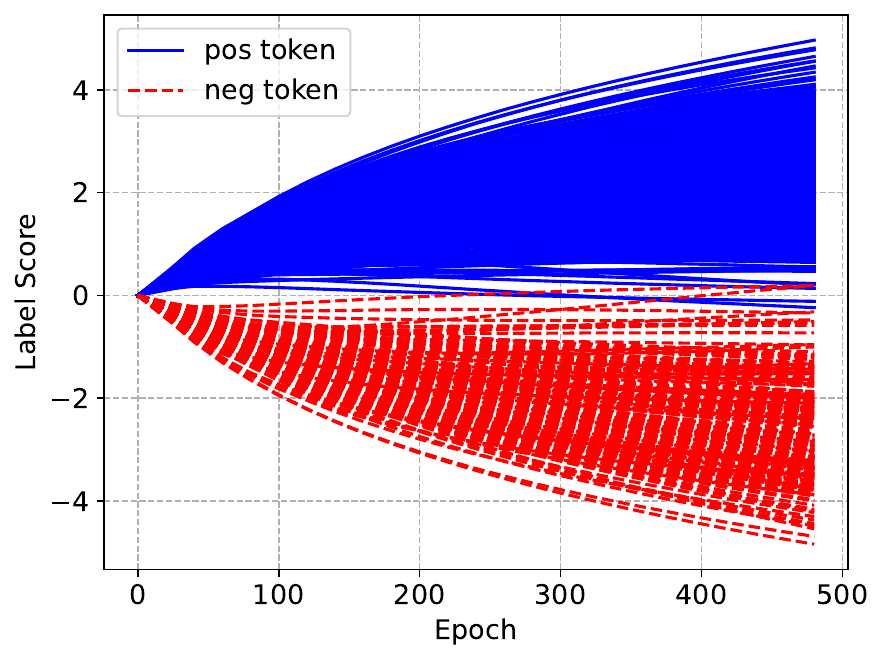}
    \caption{Label scores for extracted positive tokens and negative tokens from SST. MLP with \textit{SiLU}.}
    \label{fig:mlp_silu_label_score}
\end{figure}

\paragraph{Adam Optimizer}
We conduct experiments on the Adam optimizer and observed patterns (shown in Figure \ref{fig:adam_polarity_score}) similar to those from the Adagrad optimizer in the main paper.
\begin{figure}[t]
\centering
     \begin{subfigure}{0.45\textwidth}
        \centering
        \includegraphics[scale=0.38]{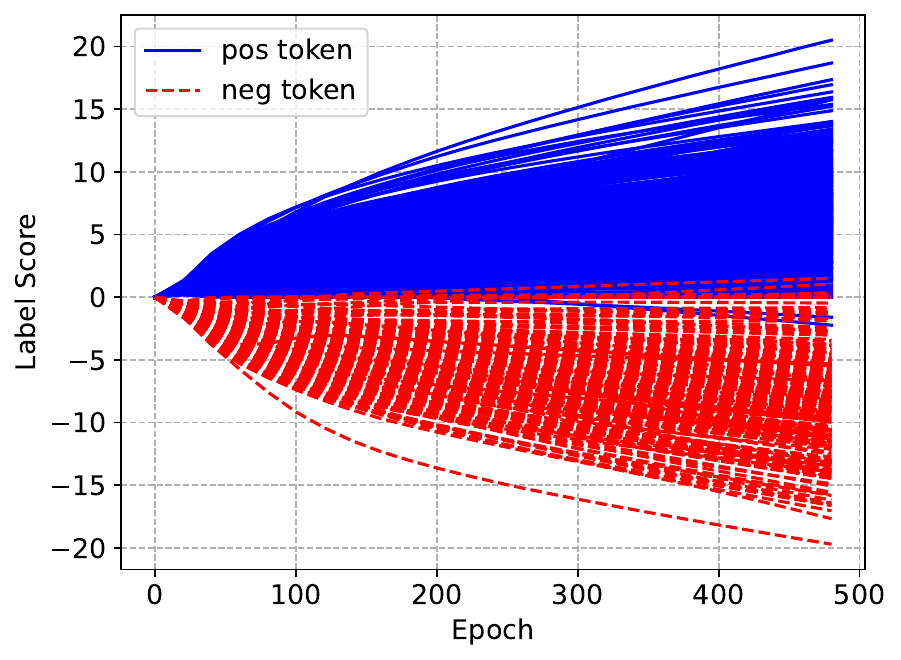}
        \vspace{-3mm}
        \caption{$\sigma_v=0.1$}
        \label{fig:adam_pos_adj_label_score1}
     \end{subfigure}
     \begin{subfigure}{0.45\textwidth}
        \centering
        \includegraphics[scale=0.38]{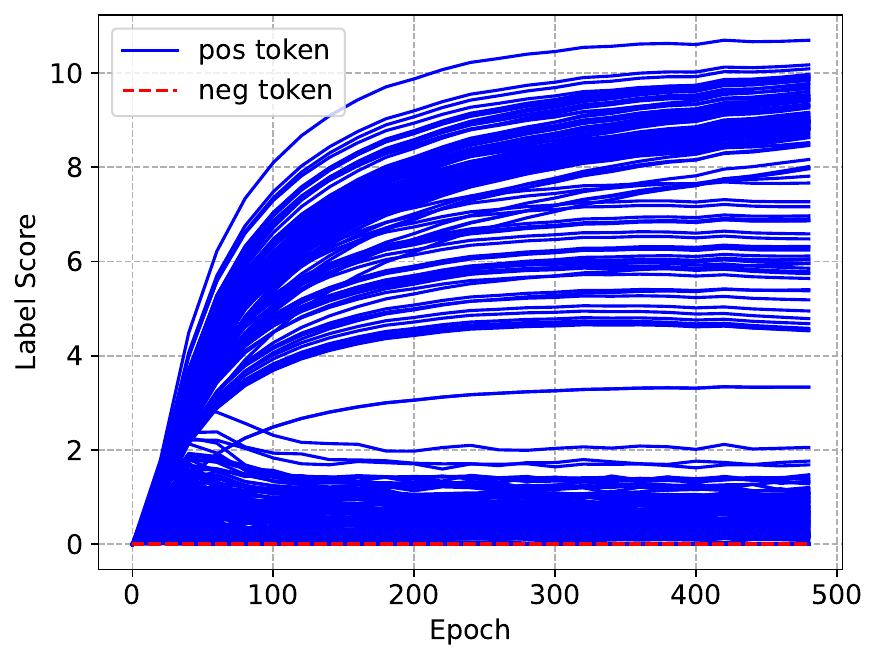}
        \vspace{-3mm}
        \caption{$\sigma_v=0.001$}
        \label{fig:adam_pos_adj_label_score2}
     \end{subfigure}
          \vspace{-3mm}
     \caption{Label scores for the extracted tokens from SST. Adam optimizer.}
     \label{fig:adam_polarity_score}
\end{figure}

\paragraph{L-RNN}
Extracting sufficient tokens that appear in a specific position and a specific category of instances in real-world datasets like SST is not easy. Instead, we created a synthetic dataset (1,000 positive instances and 1,000 negative instances) based on three types of tokens. One type of token is seen in positive instances with a fixed distance from the last tokens; another type of token is seen in negative instances with a fixed distance from the last tokens. The other tokens are seen randomly in both positive and negative instances with random positions. In this experiment, we set the fixed distance from the last ones as 2. Adagrad optimizers were used. 
\begin{figure*}[t]
\centering
     \begin{subfigure}{0.32\textwidth}
        \centering
        \includegraphics[scale=0.35]{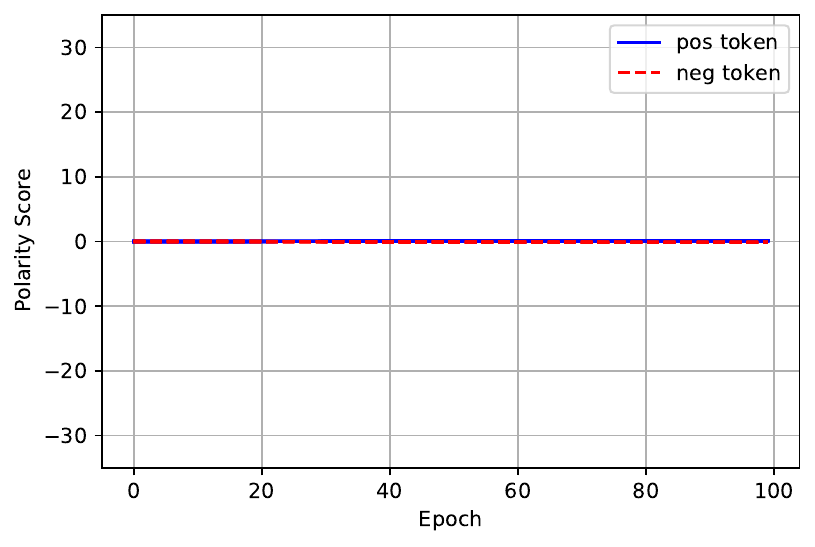}
        \caption{k=0}
        \label{fig:rnn_pos_neg_polarity_score_hist}
     \end{subfigure}
     \begin{subfigure}{0.32\textwidth}
        \centering
        \includegraphics[scale=0.35]{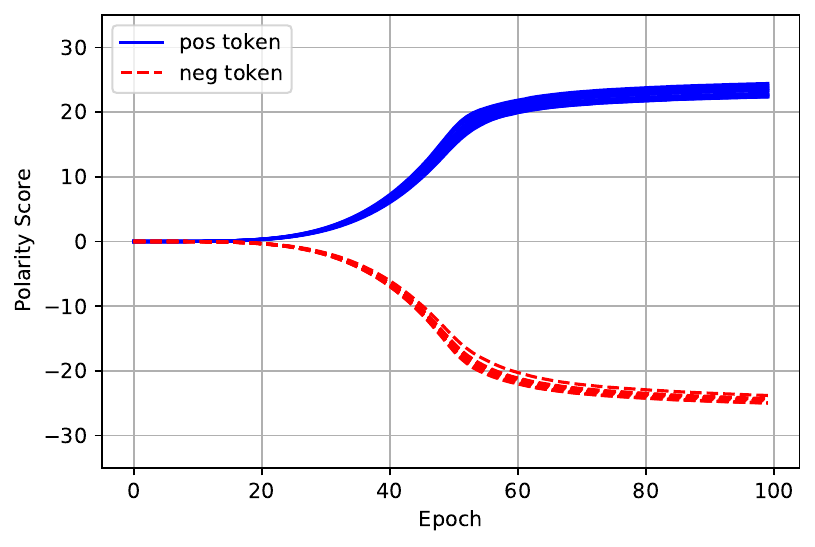}
        \caption{k=2}
        \label{fig:rnn_pos_neg_polarity_score_hist_h1}
     \end{subfigure}
    \begin{subfigure}{0.32\textwidth}
        \centering
        \includegraphics[scale=0.35]{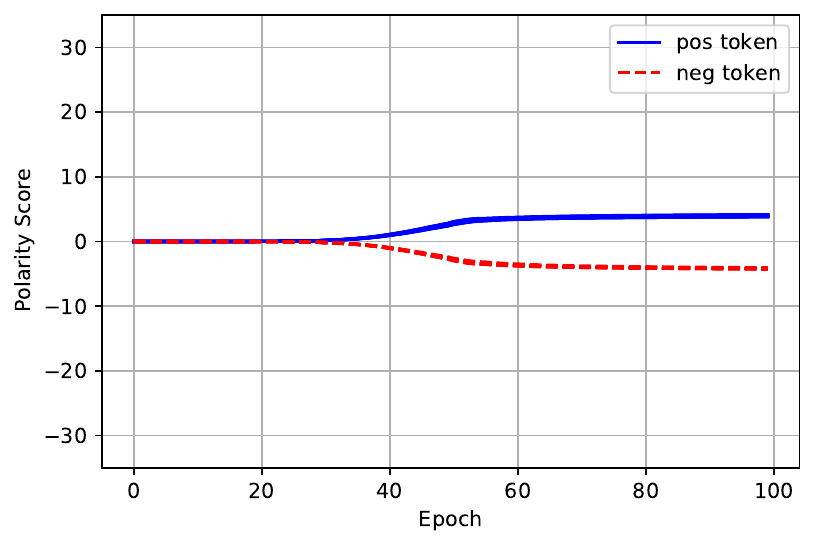}
        \caption{k=4}
        \label{fig:rnn_pos_neg_polarity_score_hist_h2}
     \end{subfigure}
     \vspace{-2mm}
     \caption{Label scores for the \textit{positive tokens} (pos token) and \textit{negative tokens} (neg token) at different positions for the L-RNN model. $k$ refers to the distance from the last tokens. Synthetic dataset.}
     \label{fig:rnn_pos_neg_polarity_score}
\end{figure*}
It can be seen from Figure \ref{fig:rnn_pos_neg_polarity_score} that when $k=2$, the positive and negative tokens are assigned significant label scores, while when $k=0$ and $k=4$, the label scores are less significant, supporting our aforementioned analysis on the L-RNN model.

\paragraph{IMDB}
IMDB is a dataset with relatively long instances.
Our findings can also be observed on the IMDB dataset in Figure \ref{fig:imdb_pos_neg_polarity_score}. 
Particularly, we could also observe a feature bias on the IMDB dataset (as shown in Figure \ref{fig:imdb_knowledge_bias_mlp}) when we used the MLP model with \textit{ReLU}, supporting our analysis in the main paper again that \textit{ReLu} may cause a feature bias. However, we did not observe an obvious performance decline on the IMDB dataset.
\begin{figure*}[t]
\centering
    \begin{subfigure}{0.32\textwidth}
        \centering
        \captionsetup{width=.8\linewidth}
        \includegraphics[scale=0.35]{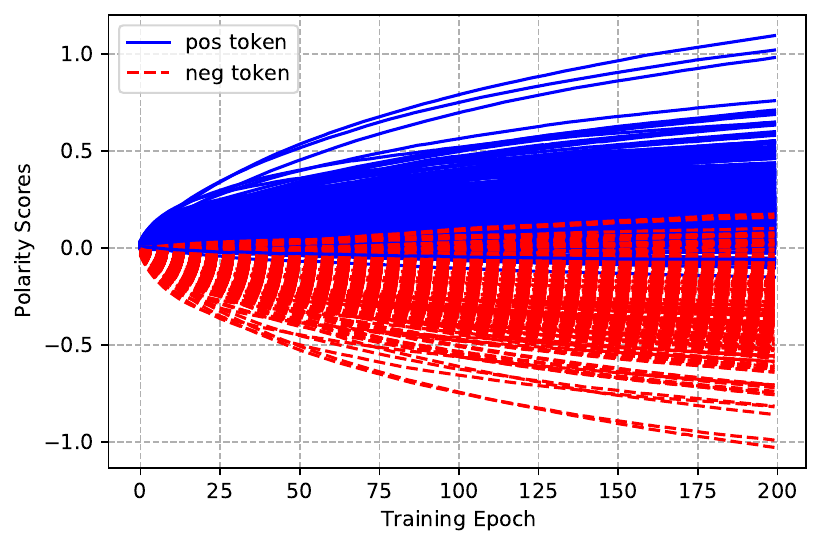}
        \caption{MLP, $\sigma_v=0.1$}
        \label{fig:mlp_pos_neg_polarity_score_imdb}
     \end{subfigure}
     \begin{subfigure}{0.32\textwidth}
        \centering
        \captionsetup{width=.8\linewidth}
        \includegraphics[scale=0.35]{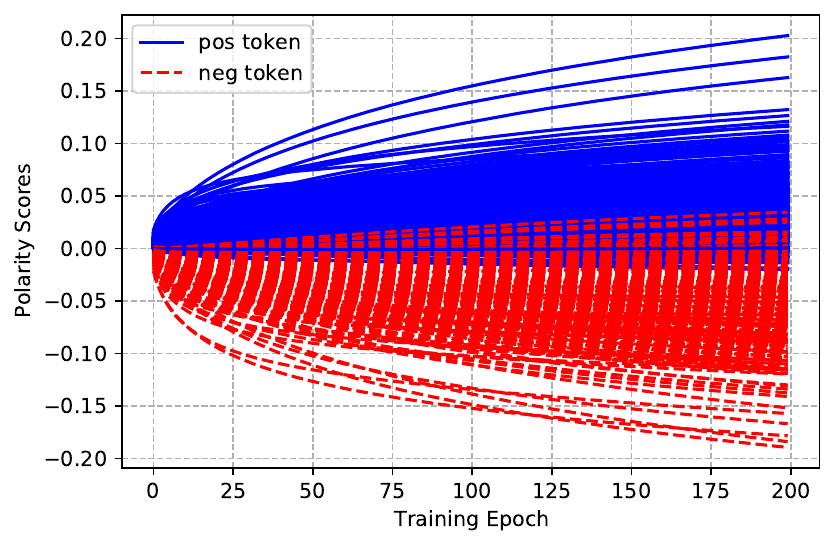}
        \caption{Self-attention, , $\sigma_v=0.1$}
        \label{fig:sa_pos_neg_polarity_score_imdb}
     \end{subfigure}
    \begin{subfigure}{0.32\textwidth}
        \centering
        \captionsetup{width=.8\linewidth}
        \includegraphics[scale=0.35]{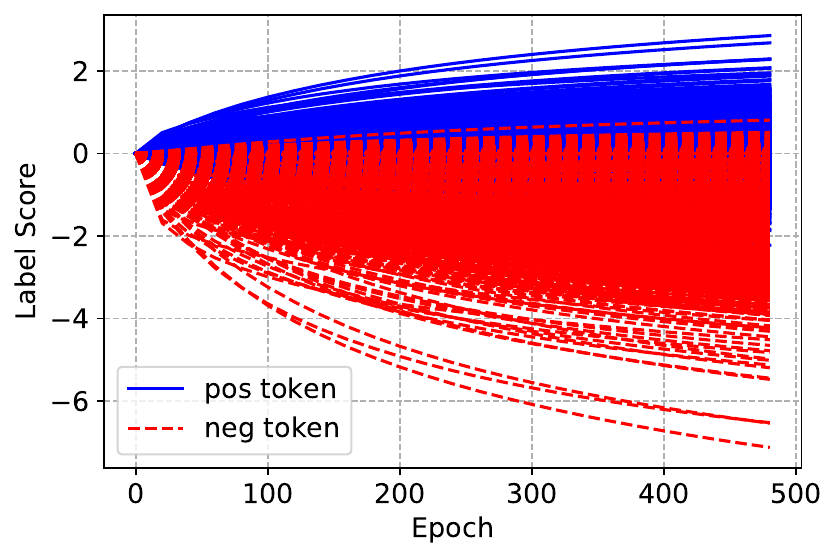}
        \caption{MLP, , $\sigma_v=0.001$}
        \label{fig:imdb_knowledge_bias_mlp}
     \end{subfigure}
     \vspace{-3mm}
     \caption{Label scores for extracted tokens from IMDB. $d=64$.}
     \label{fig:imdb_pos_neg_polarity_score}
\end{figure*}

\end{document}